\newcommand{\alglinelabel}{%
  \addtocounter{ALC@line}{-1}%
  \refstepcounter{ALC@line}%
  \label%
}
\definecolor{light-gray}{gray}{0.85}
\newcommand{\kl}{{\rm KL}}
\newcommand{\cO}{\mc{O}}
\newcommand{\tO}{\wt{\mc{O}}}
\newcommand{\ith}{$i^{\text{th}}$ }
\renewcommand{\epsilon}{\eps}
\newcommand{\efce}{{\sf EFCE}}
\newcommand{\trig}{{\sf Tr}}
\newcommand{\bal}{{\sf bal}}
\newcommand{\efceregret}{{\rm Reg}^{\trig}}
\newcommand{\phiregret}{{\rm Reg}^\Phi}
\renewcommand{\bar}{\overline}
\newcommand{\conv}{{\rm conv}}
\newcommand{\ext}{{\sf ext}}
\def\cS{{\mathcal S}}
\def\cX{{\mathcal X}}
\def\cY{{\mathcal Y}}
\def\cA{{\mathcal A}}
\def\cB{{\mathcal B}}
\def\cF{{\mathcal F}}
\def\cC{{\mathcal C}}
\def\cB{{\mathcal B}}
\mathchardef\mhyphen="2D
\newenvironment{talign*}
 {\csname align*\endcsname}
 {\endalign}
\colorlet{linkequation}{blue}
\def\shownotes{0}  %
\newcommand{\authnote}[2]{{\scriptsize $\ll$\textsf{#1 notes: #2}$\gg$}}
\newcommand{\authnote}[2]{}
\title{Efficient Phi-Regret Minimization in Extensive-Form Games via Online Mirror Descent}
\author{
  Yu Bai\thanks{Salesforce Research. Email: \texttt{yu.bai@salesforce.com}}
  \and
  Chi Jin\thanks{Princeton University. Email: \texttt{chij@princeton.edu}}
  \and 
  Song Mei\thanks{UC Berkeley. Email: \texttt{songmei@berkeley.edu}}
  \and
  Ziang Song\thanks{Peking University. Email: \texttt{songziang@pku.edu.cn}}
  \and
  Tiancheng Yu\thanks{MIT. Email: \texttt{yutc@mit.edu}}
}
\begin{document}

\maketitle

\begin{abstract}
A conceptually appealing approach for learning Extensive-Form Games (EFGs) is to convert them to Normal-Form Games (NFGs). This approach enables us to directly translate state-of-the-art techniques and analyses in NFGs to learning EFGs, but typically suffers from computational intractability due to the exponential blow-up of the game size introduced by the conversion. In this paper, we address this problem in natural and important setups for the \emph{$\Phi$-Hedge} algorithm---A generic algorithm capable of learning a large class of equilibria for NFGs. We show that $\Phi$-Hedge can be directly used to learn Nash Equilibria (zero-sum settings), Normal-Form Coarse Correlated Equilibria (NFCCE), and Extensive-Form Correlated Equilibria (EFCE) in EFGs. We prove that, in those settings, the \emph{$\Phi$-Hedge} algorithms are equivalent to standard Online Mirror Descent (OMD) algorithms for EFGs with suitable dilated regularizers, and run in polynomial time. This new connection further allows us to design and analyze a new class of OMD algorithms based on modifying its log-partition function. In particular, we design an improved algorithm with balancing techniques that achieves a sharp $\widetilde{\mathcal{O}}(\sqrt{XAT})$ EFCE-regret under bandit-feedback in an EFG with $X$ information sets, $A$ actions, and $T$ episodes. To our best knowledge, this is the first such rate and matches the information-theoretic lower bound.

\end{abstract}

\section{Introduction}

Extensive form games (EFGs) is a natural formulation for multi-player games with imperfect information and sequential play, which models real-world games such as Poker~\citep{brown2018superhuman,brown2019superhuman}, Bridge~\citep{tian2020joint}, Scotland Yard~\citep{schmid2021player}, Diplomacy~\citep{bakhtin2021no} and has many other important applications such as cybersecurity~\citep{kakkad2019comparative}, auction~\citep{myerson1981optimal}, marketing~\citep{herbig1991game}. %
In multi-player general-sum EFGs, computing an approximate Nash equilibrium (NE)~\citep{nash1950equilibrium} is PPAD-hard~\citep{daskalakis2009complexity} and thus likely intractable. A reasonable and computationally tractable solution concept in general-sum EFGs is the \emph{extensive-form correlated equilibria} (EFCE) \citep{von2008extensive,gordon2008no,celli2020no,farina2021simple}. It is known that, as long as each player runs an uncoupled dynamics minimizing a suitable EFCE-regret, their average joint policy will converge to an EFCE~\citep{greenwald2003general}.   %

Existing algorithms of minimizing the EFCE-regret are mostly built upon the \emph{regret decomposition} techniques~\citep{zinkevich2007regret}, which utilize the structure of the game and the set of policy modifications~\citep{celli2020no,morrill2021efficient,farina2021simple,song2022sample}. For example,~\citet{morrill2021efficient}~decomposes the EFCE-regret to local regrets at each information set (infoset) with each of them handled by a local regret minimizer;~\citet{farina2021simple} utilizes the trigger structure of the policy modification set to decompose the regret to external-like regrets.

There are at least two alternative approaches to designing regret minimization algorithms for EFGs. The first is to convert a EFG to a normal-form game (NFG) and use NFG-based algorithms such as $\Phi$-Hedge~\citep{greenwald2003general}. This approach typically admits simple algorithm designs and sharp regret bounds by directly translating existing results in NFGs~\citep{stoltz2007learning}. However, the conversion introduces an exponential blow-up in the game size, and makes such algorithms computationally intractable in general. The computational efficiency of these NFG-based algorithms is recently investigated by~\citet{farina2022kernelized} in the external regret minimization problem, who provided an efficient implementation of an NFG-based algorithm using ``kernel tricks''. The second is to use Online Mirror Descent (OMD) algorithms via suitably designed regularizers over the parameter space. This approach has been successfully implemented in minimizing the external regret~\citep{kroer2015faster} but not yet generalized to the EFCE-regret, as it remains unclear how to design suitable regularizers for the policy modification space.

In this paper, we develop the first line of EFCE-regret minimization algorithms along both lines of approaches above, and identify an equivalence between them. We consider EFCE-regret minimization in EFGs with $X$ infosets, $A$ actions, and maximum $L_1$-norm of sequence-form policies bounded by $\lone{\Pi}$ (cf. Section~\ref{sec:EFG-prelim} for the formal definition). Our contributions can be summarized as follows:
\begin{itemize}[leftmargin=1.5pc]
\item We present an efficient implementation of the $\Phi$-Hedge algorithm for minimizing the extensive-form trigger regret, by recursively evaluating the gradient of a log-partition function (Section~\ref{sec:phi-hedge-implementation}). The implementation further reveals that this algorithm (via reparametrization) is equivalent to an OMD algorithm with dilated regularizers, which we term as EFCE-OMD (Section~\ref{sec:efce-omd}).

\item We show that EFCE-OMD achieves trigger regret bound $\tO(\sqrt{\lone{\Pi}T})$ under full feedback and $\tO(\sqrt{X\lone{\Pi}AT})$ under bandit feedback (Section~\ref{sec:phi-hedge-regret}). Notably, the proofs are done using the corresponding NFG analysis straightforwardly, and is independent of the actual implementation.

\item We design an improved algorithm Balanced EFCE-OMD, and show that it achieves a sharp $\tO(\sqrt{XAT})$ trigger regret under bandit feedback (Section~\ref{sec:sharp-rate}). This improves over EFCE-OMD by a factor of $\lone{\Pi}$ and is the first to match the information-theoretic lower bound. The algorithm works by modifying the above log-partition function using a variety of \emph{balancing} techniques, and is equivalent to another OMD algorithm (but no longer an NFG algorithm).

\item As another example of our framework, we show that the $\Phi$-Hedge algorithm for vanilla (external) regret minimization in EFGs, along with its efficient implementation via ``kernelization'' developed recently in~\citep{farina2022kernelized}, is actually equivalent to standard OMD with dilated entropy (Section~\ref{sec:equivalence}).

\end{itemize}

\subsection{Related work}

\paragraph{$\Phi$-regret minimization and correlated equilibrium}

The $\Phi$-regret minimization framework was introduced in \citet{greenwald2003general} and \citet{stoltz2007learning}. In particular, \citet{greenwald2003general} showed that uncoupled no $\Phi$-regret dynamics leads to $\Phi$-correlated equilibria, a generalized notion of correlated equilibria introduced by~\citet{aumann1974subjectivity}. \citet{stoltz2007learning} then developed a family of $\Phi$-regret minimization algorithms using the fixed-point method (including the $\Phi$-Hedge algorithm considered in this paper), and derived explicit regret bounds. Two important special cases of $\Phi$-regret are the internal regret and swap regret in normal-form games \cite{stoltz2005internal, blum2007external}. A recent line of work developed algorithms with $\cO({\rm polylog} T)$ swap regret bound in normal-form games \cite{anagnostides2021near, anagnostides2022uncoupled}.

\paragraph{Regret minimization in EFG from full feedback}

A line of work considers external regret minimization in EFGs from full feedback \citep{zinkevich2007regret,celli2019learning,burch2019revisiting,farina2020faster,zhou2020lazy}. In particular, \citet{zhou2020lazy} achieves $\tO(\sqrt{ X T})$ external regret. The recent work of \citet{farina2022kernelized}~develops the first algorithm to achieve $\tO(\lone{\Pi} {\rm polylog} T)$ external regret in EFGs by converting it to an NFG and invoking the fast rate of Optimistic Hedge~\citep{daskalakis2021near}, along with an efficient implementation via the ``kernel trick''. Our $\Phi$-regret framework covers their algorithm as a special case, and we further show that their algorithm (along with its efficient implementation) is equivalent to the standard OMD with dilated entropy.

The notion of Extensive-Form Correlated Equilibria (EFCE) in EFGs was introduced in \citet{von2008extensive}. Optimization-based algorithms for computing computing EFCEs in multi-player EFGs from full feedback have been proposed in \citet{huang2008computing, farina2019correlation}.

\citet{gordon2008no}~first proposed to use uncoupled EFCE-regret minimization dynamics to compute EFCE; however, they do not explain how to efficiently implement each iteration of the dynamics. Recent works \cite{celli2020no, farina2021simple,morrill2021efficient, song2022sample} developed uncoupled EFCE regret minimization learning dynamics with efficient implementation; All of these algorithms are based on counterfactual regret decomposition~\citep{zinkevich2007regret} and minimizing each trigger regret (first considered by~\citet{dudik2012sampling,gordon2008no}) using a different regret minimizer. \citet{celli2020no} decomposed the regret to each laminar subtree, but they did not give an explicit regret bound. \citet{farina2021simple} decomposed the regret to each trigger sequence and used CFR type algorithm to minimize the regret on each trigger sequence and achieved an $\tilde O(\sqrt{X^2 T})$ EFCE-regret bound. \citet{morrill2021efficient, song2022sample} decomposed the regret to each information set and use regret minimization algorithms with time-selection functions~\cite{blum2007external, khot2008minimizing} to minimize the regret on each information set, giving $\tO(\sqrt{X^2 T})$ and $\tO(\sqrt{X T})$ regret bounds respectively. In this paper, we show that the simple $\Phi$-Hedge algorithm, which has an efficient implementation and an intuitive interpretation, can also achieve the state-of-art $\tO(\sqrt{X T})$ regret bound in the full feedback setting.

\paragraph{Regret minimization in EFG from bandit feedback}

Minimizing the external regret in EFGs from bandit feedback is considered in a more recent line of work~\citep{lanctot2009monte,farina2020stochastic, farina2021model, farina2021bandit,zhou2019posterior,zhang2021finding,kozuno2021model,bai2022near}. \citet{dudik2009sampling} consider sample-based learning of EFCE in succinct extensive-form games; however, their algorithm relies on an approximate Markov-Chain Monte-Carlo sampling subroutine that does not lead to a sample complexity guarantee. 

A concurrent work by~\citet{song2022sample} also achieves $\tO(X / \eps^2)$ sample complexity for learning EFCE under bandit feedback (when only highlighting $X$) using the Balanced $K$-EFR algorithm. Our work achieves the same linear in $X$ sample complexity, but using a very different algorithm (Balanced EFCE-OMD). We also remark that the algorithm of~\citep{song2022sample} cannot minimize the EFCE-regret against adversarial opponents from bandit feedback like our algorithm, as their algorithm requires playing multiple episodes against a fixed opponent, which is infeasible when the opponent is adversarial.

\section{Preliminaries}

\subsection{$\Phi$-regret minimization and $\Phi$-Hedge algorithm}
\label{sec:phi-reg-min}

Consider a generic linear regret minimization problem on a \emph{policy set} $\Pi\subset \R^d_{\ge 0}$ with respect to a \emph{policy modification set} $\Phi\subset \R^{d\times d}$. Here $\Pi$ is a convex compact subset of $\R^d$, and $\Phi$ is a convex compact subset of $\R^{d\times d}$, where each $\phi\in\Phi$ is a \emph{policy modification function} which is a linear transformation from $\R^d$ to $\R^d$ that maps $\Pi$ to itself ($\phi(\Pi)\subseteq\Pi$). 
For any algorithm that plays policies $\set{\mu^t}_{t=1}^T$ within $T$ rounds and receives loss functions $\set{\ell^t}_{t=1}^T\subset \R_{\ge 0}^d$, the $\Phi$-regret is defined as
\begin{align}
\label{equation:phi-regret}
 \phiregret(T) \defeq \sup_{\phi\in\Phi} \sum_{t=1}^T \<\mu^t - \phi\mu^t, \ell^t\>.
\end{align}
The $\Phi$-regret subsumes the vanilla regret (i.e. external regret) as a special case by taking $\Phi$ to be the set of all constant modifications $\Phi^{\ext}\defeq \{\phi_{\mu_\star}:\mu_\star\in\Pi\}$ where $\phi_{\mu^\star}\mu = \mu^\star$ for all $\mu\in\Pi$. Another widely studied example is the \emph{swap regret}~\citep{blum2007external} (and the closely related \emph{internal regret}~\citep{foster1998asymptotic}) for normal-form games, where $\Pi=\Delta_{d}$ is the probability simplex over $d$ actions, and $\Phi$ is the set of all stochastic matrices (i.e. those mapping $\Delta_{d}$ to itself). A primary motivation for minimizing the $\Phi$-regret is for computing various types of \emph{Correlated Equilibria} (CEs) in multi-player games using the online-to-batch conversion (see e.g.~\citep{cesa2006prediction}), which has been established in many games and has been a cornerstone in the online learning and games literature.

\paragraph{$\Phi$-Hedge algorithm}
A widely used strategy for minimizing the $\Phi$-regret is to use any (black-box) linear regret minimization algorithm on the $\Phi$ set to produce a sequence of $\{\phi^t\}_{t=1}^T\subset \Phi$, combined with the \emph{fixed point technique} (e.g.~\citep{stoltz2005internal})---Output policy $\mu^t$ that satisfies the fixed-point equation $\phi^t\mu^t=\mu^t$ in each round $t$. In the common scenario where $\Phi$ is the convex hull of a finite number of \emph{vertices}, i.e. $\Phi=\conv(\Phi_0)$ where $\Phi_0$ is a finite subset of $\Phi$, a standard regret minimization algorithm over $\Phi$ is Hedge (a.k.a. Exponential Weights)~\citep{arora2012multiplicative}, leading to the \emph{$\Phi$-Hedge} algorithm (Algorithm~\ref{algorithm:phi-regret-minimization}).

\begin{algorithm}[h]
   \caption{$\Phi$-Hedge}
   \label{algorithm:phi-regret-minimization}
   \begin{algorithmic}[1]
     \REQUIRE Finite vertex set $\Phi_0\subset \R^{d\times d}$ such that $\conv(\Phi_0)=\Phi$; Learning rate $\eta$.
     \STATE Initialize $p^1\in\Delta_{\Phi_0}$ with $p^1_\phi = 1 / |\Phi_0|$ for $\phi\in\Phi_0$. 
     \FOR{iteration $t=1,\dots,T$}
     \STATE Compute $\phi^{t} = \sum_{\phi\in\Phi_0} p_\phi^{t} \phi$. \alglinelabel{line:phi-hedge-phi-update}
     \STATE Set policy $\mu^t$ to be the fixed point of equation $\mu^t = \phi^t \mu^t$. 
     \alglinelabel{line:fixed-point-phi-hedge}
     \STATE Receive loss function $\ell^t\in\R^d_{\ge 0}$, suffer loss $\<\mu^t, \ell^t\>$.
     \STATE Update $p_\phi^{t+1} \propto_\phi p_\phi^{t} \cdot \exp\{ - \eta \< \phi \mu^t, \ell^t \> \}$. \alglinelabel{line:phi-hedge-p-update}
     \ENDFOR
 \end{algorithmic}
\end{algorithm}

It is a standard result (\citep{stoltz2007learning}, see also Lemma~\ref{lemma:regret-bound-phi-hedge}) that Algorithm~\ref{algorithm:phi-regret-minimization} achieves $\Phi$-regret bound
\begin{align}\label{eqn:phiregret_bound_simple}
    \phiregret(T) \le \frac{\log |\Phi_0|}{\eta} +  \frac{\eta}{2} \sum_{t=1}^T \sum_{\phi\in\Phi_0} p_\phi^{t} \paren{\langle \phi \mu^t, \ell^t \rangle}^2.
\end{align}
By choosing $\eta>0$, this result implies a quite desirable bound $\phiregret(T)\le L\sqrt{2\log|\Phi_0| \cdot T}$ in the full-feedback setting (assuming bounded loss $\<\phi\mu^t, \ell^t\>\le L$), and can also be used to prove regret bounds in the bandit-feedback setting.

\subsection{Extensive-form games (EFGs) and extensive-form trigger regret}\label{sec:EFG-prelim}

\def\TFSDP{{\rm TFSDP}}

In this paper, we consider $m$-player imperfect-information extensive-form games (EFGs) with perfect-recall (see Appendix~\ref{appendix:efg-def} for detailed definitions). For the purpose of this work, we consider an alternative formulation of EFGs---Tree-Form Adversarial Markov Decision Processes (TFAMDP). This model is equivalent to studying EFGs from the perspective of a single player, while treating all other players as adversaries who can change both transitions and rewards in each round.

\paragraph{Tree-form adversarial MDP} We consider an episodic, tabular TFAMDP which consists of the followings $(H, \{ \cX_h\}_{h \in [H]}, \cA, \mathcal{T}, \{ p_{h}^t\}_{h \in \{ 0 \} \cup [H], t \ge 1}, \{ R_h^t \}_{h \in [H], t \ge 1} )$. Here $H \in \mathbb{N}_+$ is the horizon length; $\cX_h$ is the space of information sets (henceforth \emph{infosets}) at step $h$ with size $\vert \cX_h \vert = X_h$ and $\sum_{h = 1}^H X_h = X$; $\cA$ is the action space with size $\abs{\cA}=A$. Next, $\mathcal{T} = \{\cC(x, a)\}_{(x, a) \in \cX \times \cA}$ defines the tree structure over the infosets and actions, where $\cC(x_h, a_h) \subset \cX_{h+1}$ denotes the set of immediate children of $(x_h, a_h)$. Furthermore, $\{\cC(x_h, a_h)\}_{(x_h, a_h) \in \cX_h \times \cA}$ forms a partition of $\cX_{h+1}$. It directly follows from the tree structure of TFAMDP that the player has \emph{perfect recall}, i.e., for any infoset $x_h \in \cX_h$, there is a unique history $(x_1, a_1, \ldots ,x_{h-1}, a_{h-1})$ that leads to $x_h$. Furthermore, $p_0^t(\cdot) \in \Delta_{\cX_1}$ is the initial distribution over $\cX_1$ at episode $t$; $p_h^t( \cdot | x_h, a_h)$ is the transition probability from $(x_h, a_h)$ to its immediate children $\cC(x_h, a_h)$ at episode $t$; $R^t_h( \cdot |x_h, a_h)$ is the distribution of the stochastic reward $r \in [0, 1]$ received at $(x_h, a_h)$ at episode $t$, with expectation $\bar{R}^t_h(x_h, a_h)$.

At the beginning of episode $t$, an adversary will first choose the initial distribution $p_0^t$, transition $\{ p_{h}^t\}_{h \in [H]}$, and reward distribution $\{ R_h^t \}_{h \in [H]}$. Then in the \emph{bandit feedback} setting, at each step $h$, the player observes the current infoset $x_h$, takes an action $a_h$, receives a bandit feedback of the reward $r^t_h\sim R^t_h(\cdot|x_h, a_h)$, and the environment transitions to the next state $x_{h+1}\sim p_h^t(\cdot|x_h, a_h)$.

\paragraph{Policies} 
We use $\mu = \set{\mu_{h}(\cdot|x_{h})}_{h \in [H], x_{h}\in\cX_{h}}$ to denote a policy, where each $\mu_{h}(\cdot|x_{h})\in\Delta_\cA$ is the action distribution at infoset $x_{h}$. We say $\mu$ is a \emph{deterministic} policy if $\mu_{h}(\cdot|x_{h})$ takes some single action with probability $1$ for any $(h,x_{h})$. Let $\Pi$ denote the set of all possible policies. We denote the \emph{sequence form} representation of policy $\mu \in \Pi$ by
\begin{align}
\textstyle \mu_{1:h}(x_h, a_{h}) \defeq \prod_{h' = 1}^h \mu_{h'}(a_{h'} \vert x_{h'}),  \label{eqn:sequence-form-policy}
\end{align}
where $(x_1, a_1, \ldots, x_{h-1}, a_{h-1})$ is the unique history of $x_h$. 
We also identify $\mu$ as a vector in $\R^{XA}_{\ge 0}$, whose $(x_h, a_h)$-th entry is equal to its sequence form $\mu_{1:h}(x_h, a_{h})$. Let $\lone{\Pi}\defeq \max_{\mu\in\Pi} \lone{\mu}$, which admits bound $\lone{\Pi}\le X$ but can in addition be smaller (cf. Appendix~\ref{appendix:efg-properties}).

\paragraph{Expected loss function}  Given any policy $\mu^t$ at round $t$, the total expected loss received at round $t$ (which equals to $H$ minus the total rewards within round $t$) is given by %
\[
\textstyle  \< \mu^t, \ell^t\> \defeq \sum_{h, x_h, a_h} \mu_{1:h}^t(x_h, a_h) \ell_h^t(x_h, a_h),
\]
where the loss function for the $t$-th round is given by $\ell^t = \{\ell_{h}^t(x_{h}, a_{h})\}_{h,x_{h},a_{h}}\in\R^{XA}_{\ge 0}$: 
\begin{align}
\textstyle  \ell_{h}^t(x_{h}, a_{h}) \defeq p_0^t(x_1) \prod_{h' = 1}^{h-1} p_{h'}^t(x_{h'+1}| x_{h'}, a_{h'}) [1 - \bar{R}_{h}^t(x_h, a_h)], \label{equation:l-definition} 
\end{align}
where $(x_1, a_1, \ldots, x_{h-1}, a_{h-1})$ is the unique history that leads to $x_h$. In the \emph{full feedback} setting, the learner is further capable of observing the full loss vector $\ell^t\in\R^{XA}_{\ge 0}$ at the end of each round $t$.

\paragraph{Subtree and subtree policies} 
For any $g\le h$, $x_{g} \in \cX_{g}, x_{h} \in \cX_{h}$, and any action $a_g, a_h \in \cA$, we say $x_h$ or $(x_h, a_h)$ is in the subtree rooted at $x_g$, written as $x_{h} \succeq x_{g}$ or $(x_{h}, a_h) \succeq x_{g}$, if $x_{g}$ is either equal to $x_h$ or is a part of the unique preceding history $(x_1, a_1, \ldots ,x_{h-1}, a_{h-1})$ which leads to $x_{h}$. Similarly, we say $x_h$ or $(x_{h}, a_h)$ is in the subtree of $(x_{g}, a_{g})$, written as $x_{h} \succ (x_{g}, a_{g})$ or $(x_{h}, a_h) \succeq (x_{g}, a_{g})$, if $(x_{g}, a_{g})$ is either equal to $(x_h, a_h)$ (only in the latter case), or is a part of the unique preceding history $(x_1, a_1, \ldots ,x_{h-1}, a_{h-1})$ which leads to $x_{h}$.

For any $g \in [H]$, and any infoset $x_g \in \cX_{g}$, we use $\mu^{x_g} = \{ \mu^{x_g}_{h}(\cdot | x_h) \in \Delta_{\cA}: x_h \succeq x_g \}$ to denote a subtree policy rooted at $x_g$. 
We use $\Pi^{x_g}$ and $\mc{V}^{x_g}$ to denote the set of all subtree policies and the set of all \emph{deterministic} subtree policies rooted at $x_g$. We denote the sequence form representation of $\mu^{x_g} \in \Pi^{x_g}$ by:
\begin{equation*}
\textstyle \mu^{x_g}_{g:h}(x_h,a_h) = 
\begin{cases}
\prod_{h' = g}^h \mu^{x_g}_{h'}(a_{h'}|x_{h'}) ~~~&\text{if}~ (x_h, a_h) \succeq x_g, \\
0 ~~~&\text{otherwise}.
\end{cases}
\end{equation*}
Similarly, we can also identify any subtree policy $\mu^{x_g} \in \Pi^{x_g}$ as a vector in $\R^{XA}_{\ge 0}$, whose $(x_h, a_h)$-th entry is equal to its sequence form $\mu^{x_g}_{g:h}(x_h,a_h)$ (which is non-zero only on the subtree rooted at $x_g$).

\paragraph{Extensive-form trigger regret} %
The notion of trigger regret is introduced in \citep{gordon2008no,celli2020no,farina2021simple}. An \emph{(extensive-form) trigger modification} $\phi_{x_ga_g\to m^{x_g}}$ is a policy modification that modifies any policy $\mu \in \Pi$ as follows: When $x_g$ is visited and $a_g$ is about to be taken (by $\mu$), we say $x_ga_g$ is \emph{triggered}\footnote{For notational convenience, our definition here does not include triggering at the root of the tree (i.e. $x_ga_g=\emptyset$), where the subtreee policy $m^{\emptyset}\in\Pi$ is just a policy for the entire game tree. However, all our results can be directly extended to this case without changing the rates.}, in which case the subtree policy rooted at $x_g$ is then replaced by $m^{x_g} \in \Pi^{x_g}$. One can verify that the trigger modification $\phi_{x_ga_g\to m^{x_g}}$ can be written as a linear transformation that maps from $\Pi$ to $\Pi$:
\begin{align*}
    \phi_{x_ga_g\to m^{x_g}} \defeq (I - E_{\succeq x_ga_g}) + m^{x_g} e_{x_ga_g}^\top \in \R^{XA\times XA}.
\end{align*}
Here, $E_{\succeq x_ga_g}$ is a diagonal matrix with diagonal entry $1$ at all $(x_h, a_h)$ satisfying $(x_h, a_h) \succeq (x_g, a_g)$, and zero otherwise, and
$e_{x_ga_g}\in \R^{XA}$ is an indicator vector whose only non-zero entry is $1$ at $(x_g,a_g)$. We say $\phi_{x_g a_g \to v^{x_g}}$ is a deterministic trigger modification if $v^{x_g} \in \mc{V}^{x_g}$ is a deterministic subtree policy. We denote the set of all deterministic trigger modifications and its convex hull as $\Phi^{\trig}_{0}$ and $\Phi^{\trig}$ respectively, where %
\begin{equation}\label{eqn:Phi-trig}
    \Phi^{\trig}_{0} \defeq \bigcup_{g,x_g,a_g} \,\,\bigcup_{v^{x_g}\in\mc{V}^{x_g}} \,\, \set{\phi_{x_ga_g\to v^{x_g}}}, \qquad \Phi^{\trig} = {\rm conv}\set{ \Phi^{\trig}_{0} }.
\end{equation}
The \emph{(extensive-form) trigger regret} is then defined as the difference in the total loss when comparing against the best extensive-form trigger modification in hindsight. We note that the trigger regret is a special case of $\Phi$-regret~\eqref{equation:phi-regret} with $\Phi = \Phi^{\trig}$.
\begin{definition}[Extensive-Form Trigger Regret]
For any algorithm that plays policies $\mu^t\in\Pi$ at round $t\in[T]$, the extensive-form trigger regret (also the EFCE-regret) is defined as
\begin{align}\label{eqn:efceregret}
 \efceregret(T) \defeq \max_{\phi\in\Phi^{\trig}} \sum_{t=1}^T \<\mu^t - \phi\mu^t, \ell^t\>. 
\end{align}
\end{definition}

\paragraph{From trigger regret to Extensive-Form Correlated Equilibrium (EFCE)} The importance of extensive-form trigger regret is in its connection to computing EFCE: 
By standard online-to-batch conversion \citep{celli2020no, farina2021simple}, if all players have low trigger regret (with $\efceregret_i(T)$ for the \ith player), then the average joint policy $\wb{\pi}$ is an $\epsilon$-EFCE, where $\epsilon = \max_{i\in[m]} \efceregret_i(T)/T$ (cf. Appendix~\ref{appendix:online-to-batch}). We remark in passing by taking $\Phi=\Phi^{\ext}$, low $\Phi$-regret implies learning (Normal-Form) Coarse Correlated Equilibria in EFGs, as well as Nash Equilibria in the two-player zero-sum setting~\citep{bai2022near}.

\section{Efficient $\Phi$-Hedge for Trigger Regret Minimization}
\label{sec:phi-hedge-efce}

In this section, we study the $\Phi$-Hedge algorithm (Algorithm~\ref{algorithm:phi-regret-minimization}) for minimizing the trigger regret. Naively, Algorithm~\ref{algorithm:phi-regret-minimization} requires maintaining and updating $p^t\in\Delta_{\Phi_0}$ (cf. Line~\ref{line:phi-hedge-p-update}), whose computational cost is linear in $|\Phi_0^\trig|$ which can be exponential in $X$ in the worst case\footnote{$|\Phi^{\trig}_0|$ is at least the number of deterministic policies of the game, which could be $A^{O(X)}$ in the worst case.}. We begin by deriving an efficient implementation of the iterate $\phi^t\in\Phi$ (of Line~\ref{line:phi-hedge-phi-update}) directly by exploiting the structure of $\Phi^{\trig}_{0}$.

\subsection{Efficient implementation of $\Phi^\trig$-Hedge algorithm}\label{sec:phi-hedge-implementation}

We first use a standard trick to convert the computation of $\phi^t$ (Line~\ref{line:phi-hedge-phi-update} \&~\ref{line:phi-hedge-p-update}, Algorithm~\ref{algorithm:phi-regret-minimization}) in $\Phi$-Hedge to evaluating the gradient of a suitable log-partition function. This is stated in the lemma below (for any generic $\Phi_0$), whose proof can be found in Appendix~\ref{appendix:proof-free-energy-trick}.

\begin{lemma}[Conversion to log-partition function]
\label{lem:free-energy-trick}
Define the log-partition function $F^{\Phi_0}: \R^{d \times d} \to \R$
\begin{align}
F^{\Phi_0}(M) \defeq \log \sum_{\phi\in\Phi_0} \exp\{ - \<\phi, M\>\}.
\end{align}
Then Line~\ref{line:phi-hedge-phi-update} of $\Phi$-Hedge (Algorithm~\ref{algorithm:phi-regret-minimization}) has a closed-form update for all $t\ge 1$:
\begin{align}\label{eqn:phi-update-grad-F}
\phi^{t} = -\grad F^{\Phi_0}\paren{ \eta\sum_{s=1}^{t-1} M^s } = - \frac{ \sum_{\phi\in\Phi_{0}} \exp\big\{-\eta \<\phi, \sum_{s=1}^{t-1} M^s\>\big\} \phi }{\sum_{\phi\in\Phi_{0}} \exp\big\{-\eta\<\phi, \sum_{s=1}^{t-1} M^s\>\big\}},~~~~ M^t \defeq \ell^t (\mu^t)^\top.
\end{align}
\end{lemma}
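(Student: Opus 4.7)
The plan is to verify the closed-form expression by unrolling the Hedge recursion in Line~\ref{line:phi-hedge-p-update} and matching it with the gradient of $F^{\Phi_0}$ via a direct computation.

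First I would unroll the multiplicative-weights update. Starting from the uniform initialization $p^1_\phi = 1/|\Phi_0|$ and applying Line~\ref{line:phi-hedge-p-update} iteratively yields the closed form
\begin{equation*}
p^t_\phi \;\propto_\phi\; \exp\!\left\{ -\eta \sum_{s=1}^{t-1} \langle \phi \mu^s, \ell^s\rangle \right\}.
\end{equation*}
The key observation is then to rewrite each instantaneous loss $\langle \phi\mu^s, \ell^s\rangle$ as a Frobenius inner product between $\phi$ and a rank-one ``loss matrix'' $M^s \defeq \ell^s(\mu^s)^\top$, using the identity $\langle \phi\mu^s, \ell^s\rangle = (\ell^s)^\top \phi \mu^s = \langle \phi, \ell^s(\mu^s)^\top\rangle$. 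Substituting back gives
\begin{equation*}
p^t_\phi \;=\; \frac{\exp\!\big\{ -\eta \langle \phi, \sum_{s=1}^{t-1} M^s\rangle \big\}}{\sum_{\phi'\in\Phi_0}\exp\!\big\{ -\eta \langle \phi', \sum_{s=1}^{t-1} M^s\rangle \big\}}.
\end{equation*}

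Next I would compute $\nabla F^{\Phi_0}$ directly. Since $F^{\Phi_0}$ is a log-sum-exp of linear functions $M\mapsto -\langle \phi, M\rangle$, its gradient at a point $M$ is the softmax-weighted average of the corresponding linear coefficients $-\phi$, i.e.
\begin{equation*}
\nabla F^{\Phi_0}(M) \;=\; -\,\frac{\sum_{\phi\in\Phi_0} \exp\{-\langle \phi, M\rangle\}\,\phi}{\sum_{\phi\in\Phi_0} \exp\{-\langle \phi, M\rangle\}}.
\end{equation*}
Evaluating at $M = \eta\sum_{s=1}^{t-1} M^s$ and comparing with Line~\ref{line:phi-hedge-phi-update}, which defines $\phi^t = \sum_{\phi\in\Phi_0} p^t_\phi \phi$, yields exactly $\phi^t = -\nabla F^{\Phi_0}\!\big(\eta\sum_{s=1}^{t-1} M^s\big)$, establishing \eqref{eqn:phi-update-grad-F}.

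There is really no hard step here: the lemma is essentially a reformulation of Hedge's closed-form iterate as a softmax gradient. The only mild care required is to confirm the algebraic identity $\langle \phi\mu, \ell\rangle = \langle \phi, \ell\mu^\top\rangle$ (so that the per-round cost $\langle \phi\mu^s,\ell^s\rangle$ becomes linear in $\phi$ under the Frobenius pairing), after which both the Hedge iterate and $-\nabla F^{\Phi_0}$ are written as the same softmax average of the vertices $\phi\in\Phi_0$.
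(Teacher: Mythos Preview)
Your proposal is correct and follows essentially the same approach as the paper's proof: unroll the multiplicative-weights update using the uniform initialization, rewrite each per-round loss $\langle \phi\mu^s,\ell^s\rangle$ as the Frobenius inner product $\langle \phi, M^s\rangle$ with $M^s=\ell^s(\mu^s)^\top$, and identify the resulting softmax average with $-\nabla F^{\Phi_0}$ evaluated at the cumulative matrix loss.
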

Eq.~\eqref{eqn:phi-update-grad-F} suggests a strategy for evaluating $\phi^t=-\grad F^{\Phi_0}(\eta\sum_{s=1}^{t-1}M^s)$---So long as the vertex set $\Phi_0$ has some structure that allows efficient evaluation of the sum of exponentials on the numerators and denominators (i.e. faster than naive sum), $\phi^t$ may be computed directly in sublinear in $|\Phi_0|$ time, and there is no need to maintain the underlying distribution $p^t\in\Delta_{\Phi_0}$.

The following lemma enables such an efficient computation for the log-partition function $F^\trig\defeq F^{\Phi^\trig}$ (and its gradient) associated with the trigger modification set $\Phi=\Phi^\trig$. This lemma (proof deferred to Appendix~\ref{appendix:proof-evaluate-partition-function-gradient}) is a consequence of the specific structure of $\Phi_0$ (cf.~\eqref{eqn:Phi-trig}), whose elements are indexed by a sequence $x_ga_g$ and a deterministic subtree policy $v^{x_g}\in\mc{V}^{x_g}$.
\begin{lemma}[Recursive expression of $F^{\trig}$ and $\grad F^{\trig}$]
\label{lem:evaluate-partition-function-gradient}
For any loss matrix $M \in \R^{XA \times XA}$, the \emph{EFCE log-partition function} can be written as
\begin{align}
\label{eqn:F-trig-M}
\textstyle   F^{\trig}(M) =  \log \sum_{g, x_g, a_g} \exp\Big\{  - \<I - E_{\succeq x_ga_g}, M\> + F_{x_g a_g, x_g}(M)  \Big\},
\end{align}
where for any $x_h \succeq x_g$,
\begin{equation}\label{eqn:F-M-FTRL-form}
 \textstyle   F_{x_g a_g, x_h}(M) \defeq \log \sum_{a_h} \exp\Big\{  - M_{x_ha_h, x_g a_g} +  \sum_{x_{h+1} \in \cC(x_h,a_h)} F_{x_g a_g, x_{h+1}}(M) \Big\}. 
\end{equation}
Furthermore, define $\lambda = (\lambda_{x_g a_g})_{x_g a_g \in \cX \times \cA} \in \Delta_{XA}$ and $m = ( m_{x_g a_g} )_{x_g a_g \in \cX \times \cA}$ with $m_{x_ga_g}\in\Pi^{x_g}$ (and also identified as a vector in $\R^{XA}$) as
\begin{align}
& \lambda_{x_ga_g} \propto_{x_ga_g} \exp\Big\{   - \< I - E_{\succeq x_g a_g}, M\> + F_{x_ga_g,x_g}(M) \Big\},\label{eqn:lambda-M-FTRL-form} \\
&\textstyle  m_{x_ga_g, h}(a_h \vert x_h) \propto_{a_h}\exp\Big\{  - M_{x_ha_h,x_ga_g} + \sum_{x_{h+1} \in \cC(x_h, a_h)} F_{x_ga_g, x_{h+1}}(M) \Big\},\label{eqn:m-M-FTRL-form}
\end{align}
then we have
\begin{align}
\label{equation:phi-lambda-m}
 \textstyle   -\grad F^{\trig}(M)=\phi(\lambda, m) \defeq \sum_{g, x_g, a_g} \lambda_{x_ga_g} (I - E_{\succeq x_g a_g} + m_{x_ga_g} e_{x_g a_g}^\top).
\end{align}
\end{lemma}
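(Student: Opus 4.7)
The plan is to proceed in two stages: first establish the recursive expressions \eqref{eqn:F-trig-M}--\eqref{eqn:F-M-FTRL-form} for $F^{\trig}$ by a dynamic-programming calculation on the tree, and then derive the gradient formula \eqref{equation:phi-lambda-m} by invoking Lemma~\ref{lem:free-energy-trick} and marginalizing the resulting softmax distribution over $\Phi^{\trig}_{0}$.

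For the first stage, I would begin from the definition $F^{\trig}(M) = \log\sum_{\phi \in \Phi^{\trig}_0} \exp\{-\<\phi, M\>\}$ and substitute $\phi_{x_g a_g \to v^{x_g}} = (I - E_{\succeq x_g a_g}) + v^{x_g} e_{x_g a_g}^\top$. Since $\<I - E_{\succeq x_g a_g}, M\>$ depends only on the trigger sequence $x_g a_g$, it factors out of the inner sum over deterministic subtree policies $v^{x_g}\in\mc{V}^{x_g}$. What remains is the sum $\sum_{v^{x_g}} \exp\{-\<v^{x_g} e_{x_g a_g}^\top, M\>\} = \sum_{v^{x_g}} \exp\{-\sum_{(x_h,a_h)\succeq x_g} v^{x_g}_{g:h}(x_h,a_h) M_{x_h a_h, x_g a_g}\}$. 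I would then define $F_{x_g a_g, x_h}(M)$ as the analogous log-sum-exp restricted to the subtree at $x_h$ (so that $F_{x_g a_g, x_g}$ is exactly the log of that inner sum) and derive the recursion \eqref{eqn:F-M-FTRL-form} by conditioning on the action $a_h$ chosen at $x_h$: once $a_h$ is fixed, only $(x_h, a_h)$ contributes $M_{x_h a_h, x_g a_g}$ at this level, and the deterministic subtree policies at the children $x_{h+1}\in\cC(x_h,a_h)$ are chosen independently (because the subtrees are disjoint by the partition property of $\cC$), so the product in the exponential splits into the sum $\sum_{x_{h+1}\in\cC(x_h,a_h)} F_{x_g a_g, x_{h+1}}(M)$ inside the log-sum-exp. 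This yields \eqref{eqn:F-trig-M} directly.

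For the second stage, Lemma~\ref{lem:free-energy-trick} gives $-\grad F^{\trig}(M) = \sum_{\phi \in \Phi^{\trig}_{0}} q_\phi \cdot \phi$, where $q_\phi \propto \exp\{-\<\phi, M\>\}$. Indexing $\Phi^{\trig}_{0}$ by pairs $((x_g,a_g), v^{x_g})$ and grouping the sum, the marginal over $(x_g,a_g)$ becomes $\lambda_{x_g a_g} = \sum_{v^{x_g}} q_{(x_g a_g, v^{x_g})} \propto \exp\{-\<I - E_{\succeq x_g a_g}, M\> + F_{x_g a_g, x_g}(M)\}$ by the stage-one computation, giving \eqref{eqn:lambda-M-FTRL-form}. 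The conditional expectation then factors the identity-type term out cleanly: the constant part contributes $\sum_{g,x_g,a_g}\lambda_{x_g a_g}(I-E_{\succeq x_g a_g})$, while the variable part contributes $\sum_{g,x_g,a_g}\lambda_{x_g a_g}\,\bar v^{x_g a_g}\,e_{x_g a_g}^\top$, where $\bar v^{x_g a_g} := \E[v^{x_g}\mid x_g a_g]$ is the conditional expectation of the sequence-form vector under the normalized Gibbs distribution $p(v^{x_g}) \propto \exp\{-\<v^{x_g} e_{x_g a_g}^\top, M\>\}$.

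The main obstacle is identifying $\bar v^{x_g a_g}$ with the sequence-form vector $m_{x_g a_g}\in\Pi^{x_g}$ whose local action distributions are prescribed by \eqref{eqn:m-M-FTRL-form}. To resolve this, I would prove by induction on the depth (from the leaves upward along the subtree rooted at $x_g$) that for every $(x_h,a_h)\succeq x_g$,
\begin{align*}
\E_{v^{x_g}\sim p}\bigl[v^{x_g}_{g:h}(x_h,a_h)\bigr] \;=\; \prod_{h'=g}^{h} m_{x_g a_g, h'}(a_{h'}\vert x_{h'}),
\end{align*}
where $m_{x_g a_g, h'}$ is defined by \eqref{eqn:m-M-FTRL-form}. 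The inductive step amounts to writing the Gibbs distribution at $x_{h'}$ as a softmax whose log-normalizer is exactly $F_{x_g a_g, x_{h'}}$ (by construction in stage one), and observing that the partition function for the subtree below $(x_{h'}, a_{h'})$ appears as the bonus $\sum_{x_{h'+1}\in\cC(x_{h'},a_{h'})} F_{x_g a_g, x_{h'+1}}$ in the softmax logits, which is exactly how \eqref{eqn:m-M-FTRL-form} is constructed. The independence of the subtree choices at different children of $(x_{h'},a_{h'})$ (needed to chain the induction) again follows from the tree partition property. Together with stage one, this recovers \eqref{equation:phi-lambda-m}.
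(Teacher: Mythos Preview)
Your proposal is correct and essentially matches the paper's proof. Stage one (the recursive expression for $F^{\trig}$) is identical to the paper's argument. For stage two, the paper differentiates the recursive formulas directly by the chain rule, obtaining a recursion $-\grad F_{x_g a_g, x_h}(M) = \sum_{a_h} m_{x_ga_g,h}(a_h\vert x_h)\bigl[e_{x_ha_h}e_{x_ga_g}^\top + \sum_{x_{h+1}\in\cC(x_h,a_h)}(-\grad F_{x_g a_g, x_{h+1}})(M)\bigr]$ which it then unrolls to $m_{x_ga_g}e_{x_ga_g}^\top$; you instead invoke the softmax-mean identity from Lemma~\ref{lem:free-energy-trick} and identify the Gibbs marginals with the sequence form of $m_{x_g a_g}$ by induction. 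These are the same computation dressed in different language: the paper's gradient recursion is precisely your inductive step once one notes that $-\grad F_{x_g a_g, x_h}(M)$ equals $\E_{v^{x_h}\sim p_{x_h}}[v^{x_h}]\,e_{x_ga_g}^\top$ by the definition of $F_{x_g a_g, x_h}$ as a log-partition function. Your probabilistic framing is arguably more transparent about why the answer lands in $\Pi^{x_g}$, while the paper's chain-rule version is slightly more mechanical; neither buys anything the other does not.
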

Above, $\lambda = (\lambda_{x_g a_g})_{x_g a_g \in \cX \times \cA} \in \Delta_{XA}$ is a probability distribution over $\cX\times\cA$, and $m = ( m_{x_g a_g} )_{x_g a_g \in \cX \times \cA} \in \mc{M} \equiv \prod_{g,x_g a_g} \Pi^{x_g a_g}$ is a collection of subtree policies $m_{x_g a_g}$, where each $m_{x_g a_g} \in \Pi^{x_g}$ is a subtree policy that specifies an action distribution $m_{x_g a_g, h}(a_h | x_h)$ for every $x_h \succeq x_g$, and can be identified with a vector in $\R^{XA}$ (c.f. Section \ref{sec:EFG-prelim}). 

The recursive structure in Lemma~\ref{lem:evaluate-partition-function-gradient} offers a roadmap for evaluating $(\lambda,m)$ and thus $\grad F^{\trig}(M)$ in $O(X^2A^2)$ time (formal statement in Appendix~\ref{appendix:runtime}). Applying Lemma~\ref{lem:evaluate-partition-function-gradient} with $M = \eta \sum_{s = 1}^{t-1} M^s$ gives an efficient implementation of~(\ref{eqn:phi-update-grad-F}), i.e. the $\Phi$-Hedge algorithm with $\Phi=\Phi^{\trig}$. For clarity, we summarize this in Algorithm~\ref{algorithm:efce-ftrl}. We remark that the parameters $(\lambda^{t}, m^{t})$ therein can also be expressed in terms of $(\lambda^{t-1}, m^{t-1})$ and $M^{t-1}$, which we present in Algorithm~\ref{algorithm:efce-omd} (the equivalent ``OMD'' form) in Appendix~\ref{sec:efce-omd}. We also note that the fixed point equation $\phi^t \mu = \mu$ in Line~\ref{line:fixed-point} can be solved in $O(X^2A^2)$ time~\citep[Corollary 4.15]{farina2021simple}.

\begin{algorithm}[t]
\caption{EFCE-OMD (FTRL form; equivalent OMD form in Algorithm~\ref{algorithm:efce-omd})}
\label{algorithm:efce-ftrl}
\begin{algorithmic}[1]
\REQUIRE Learning rate $\eta>0$. 
\FOR{$t = 1, 2, \ldots, T$}
\STATE For each $x_g a_g \in \cX \times \cA$, from the reverse order of $x_h$, compute $m^{t}_{x_ga_g, h}(a_h \vert x_h)$ and $F^{t}_{x_ga_g, x_h}$
\begin{align}
& \textstyle m^{t}_{x_ga_g, h}(a_h \vert x_h) \propto_{a_h} \exp\Big\{ - \eta\sum_{s=1}^{t-1} M^s_{x_ha_h, x_ga_g} + \sum_{x_{h+1} \in \cC(x_h, a_h)} F^{t}_{x_ga_g, x_{h+1}} \Big\}, \label{equation:mt} \\
&~\textstyle F^{t}_{x_ga_g, x_h} = \log  \sum_{a_h}  \exp\Big\{ -\eta  \sum_{s=1}^{t-1}M^s_{x_ha_h, x_ga_g} + \sum_{x_{h+1} \in \cC(x_h, a_h)} F^{t}_{x_ga_g, x_{h+1}} \Big\} \label{equation:ft},
\end{align}
\STATE Compute $\lambda_{x_g a_g}^{t}$ as
\begin{align}
&\textstyle \lambda_{x_ga_g}^{t} \propto_{x_ga_g}  \exp\Big\{ - \eta\< I - E_{\succeq x_g a_g}, \sum_{s=1}^{t-1}M^s\> + F^{ t}_{x_ga_g,x_g} \Big\}.  \label{equation:lambdat}
\end{align}
\STATE Compute $\phi^t = \phi(\lambda^t, m^t)$ where $\phi$ is in Eq. (\ref{equation:phi-lambda-m}).
\STATE Compute the policy $\mu^t$, which is a solution of the fixed point equation $\phi^t \mu^t =\mu^t$. \alglinelabel{line:fixed-point}
\STATE Receive loss $\ell^t = \{\ell^t_h(x_h, a_h)\}_{(x_h, a_h) \in \cX \times \cA} \in \R^{XA}_{\ge 0}$.
\STATE Compute matrix loss $M^t = \ell^t (\mu^t)^\top \in \R^{XA \times XA}_{\ge 0}$. 
    \ENDFOR
 \end{algorithmic}
\end{algorithm}

\subsection{Equivalence to FTRL and OMD}
\label{sec:phi-hedge-dilated-entropy}

We now show that Algorithm~\ref{algorithm:efce-ftrl} is equivalent to FTRL and OMD with suitable dilated entropies and divergences (hence the name EFCE-OMD). 
We define the trigger dilated entropy function and trigger dilated KL divergence function over $(\lambda,m)\in\Delta_{XA}\times \mc{M}$ as
\[
\begin{aligned}
\textstyle H^{\trig}(\lambda, m) \defeq &~ \textstyle H(\lambda) + \sum_{g, x_g, a_g} \lambda_{x_g a_g} H_{x_g}(m_{x_g a_g}), \\
\textstyle D^{\trig}( \lambda, m\| \lambda', m') \defeq &~ \textstyle D_{\kl}(\lambda \| \lambda') + \sum_{g, x_g, a_g} \lambda_{x_g a_g} D_{x_g}(m_{x_g a_g} \| m_{x_g a_g}'),
\end{aligned}
\]
where $H(\cdot)$ and $D_{\kl}(\cdot \| \cdot)$ are the (negative) Shannon entropy and KL divergence; and for any $x_g$, $H_{x_g}(\cdot)$ is the dilated entropy, and $D_{x_g}(\cdot\| \cdot)$ is the dilated KL divergence \cite{hoda2010smoothing}, both for the subtree rooted at $x_g$ (detailed definitions in Appendix~\ref{appendix:def-dilated}).

\begin{lemma}[Equivalent formulations of $\Phi^{\trig}$-hedge]
\label{lem:equivalence-algorithm}
For any sequence of loss functions $\{M^t\}_{t\ge 1}$, the iterates $(\lambda^t, m^t)$ in Algorithm~\ref{algorithm:efce-ftrl} (i.e. \eqref{equation:mt}-\eqref{equation:lambdat}) are equivalent to (i.e. satisfy) the following FTRL update on $H^{\trig}$ and OMD update on $D^{\trig}$:
\begin{align}
& \textstyle (\lambda^{t}, m^{t}) =~ \argmin_{\lambda, m} \Big[\eta \< \phi(\lambda, m), \sum_{s=1}^{t-1} M^s\> + H^{\trig}(\lambda, m) \Big], \label{equation:phit-H-form} \\
& \textstyle (\lambda^{t}, m^{t}) =~ \argmin_{\lambda, m}\Big[ \eta \< \phi(\lambda, m), M^{t-1}\> + D^{\trig}(\lambda, m \| \lambda^{t-1}, m^{t-1}) \Big].  \label{equation:phit-D-form}
\end{align}
\end{lemma}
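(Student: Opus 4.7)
The plan is to prove both forms of the characterization by reducing each to a tree-structured softmax problem and applying backward induction on the game tree.

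For the FTRL form~\eqref{equation:phit-H-form}, I would first expand the objective using the bilinear expression $\phi(\lambda, m) = \sum_{g, x_g, a_g} \lambda_{x_g a_g} (I - E_{\succeq x_g a_g} + m_{x_g a_g} e_{x_g a_g}^\top)$ together with the decomposition $H^\trig(\lambda, m) = H(\lambda) + \sum_{g, x_g, a_g} \lambda_{x_g a_g} H_{x_g}(m_{x_g a_g})$. Setting $\bar M := \sum_{s=1}^{t-1} M^s$, the FTRL objective rearranges to
\[
\sum_{g, x_g, a_g} \lambda_{x_g a_g} \Big\{\eta \langle I - E_{\succeq x_g a_g}, \bar M\rangle + \big[\eta \langle m_{x_g a_g}, \bar M_{:,\, x_g a_g}\rangle + H_{x_g}(m_{x_g a_g})\big]\Big\} + H(\lambda),
\]
and since each $\lambda_{x_g a_g} \ge 0$, the inner bracket can be minimized over $m_{x_g a_g} \in \Pi^{x_g}$ separately for each $(x_g, a_g)$. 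This inner problem is the standard dilated-entropy FTRL on the subtree polytope $\Pi^{x_g}$; by backward induction (``soft value iteration'') through the subtree, its optimal value equals $-F^t_{x_g a_g, x_g}$ with the $F^t$'s satisfying the recursion~\eqref{equation:ft}, and the unique minimizer at each infoset $x_h \succeq x_g$ is the softmax~\eqref{equation:mt}. Plugging the optimal value back in, the remaining minimization over $\lambda$ is an entropy-regularized linear problem on $\Delta_{XA}$ whose solution is exactly the softmax~\eqref{equation:lambdat}.

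For the OMD form~\eqref{equation:phit-D-form}, I would argue by induction on $t$. Unfolding $D^\trig(\lambda, m \| \lambda^{t-1}, m^{t-1}) = D_{\kl}(\lambda \| \lambda^{t-1}) + \sum_{g, x_g, a_g} \lambda_{x_g a_g} D_{x_g}(m_{x_g a_g} \| m_{x_g a_g}^{t-1})$ reduces the OMD objective, as above, to an outer entropic problem in $\lambda$ plus decoupled inner problems $\min_{m_{x_g a_g}} \eta \langle m_{x_g a_g}, M^{t-1}_{:,\, x_g a_g}\rangle + D_{x_g}(m_{x_g a_g} \| m_{x_g a_g}^{t-1})$. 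Using the induction hypothesis that $m^{t-1}_{x_g a_g}$ already takes the form~\eqref{equation:mt} at time $t-1$, a backward induction of the same shape shows that the inner minimizer combines the prior softmax factor with the new loss factor $\exp\{-\eta M^{t-1}_{x_h a_h, x_g a_g}\}$ to reproduce~\eqref{equation:mt} at time $t$, and the outer problem in $\lambda$ reproduces~\eqref{equation:lambdat}. Equivalently, one may verify that $D^\trig$ is the Bregman divergence of the Legendre regularizer $H^\trig$ on $\Delta_{XA} \times \mc{M}$ and invoke the standard FTRL-OMD equivalence for Legendre regularizers; the Bregman-divergence identity is a known property of dilated entropies \citep{hoda2010smoothing,kroer2015faster} and can be checked termwise by computing $\nabla H^\trig(\lambda', m')$ and matching against $D^\trig(\lambda, m \| \lambda', m')$.

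The main obstacle is executing the backward induction carefully. At each infoset $x_h$, the local per-infoset problem is $\min_{m_h(\cdot | x_h)} \sum_{a_h} m_h(a_h | x_h)\big[\eta \bar M_{x_h a_h,\, x_g a_g} + \log m_h(a_h|x_h) - \sum_{x_{h+1} \in \cC(x_h, a_h)} F^t_{x_g a_g, x_{h+1}}\big]$, and one must factor out the ``reach weight'' $m_{x_g a_g, g:h-1}(x_h)$ propagated from the parent so that the local problem decouples cleanly, recovering both the softmax update~\eqref{equation:mt} and the log-partition recursion $V_{x_g a_g, x_h} = -F^t_{x_g a_g, x_h}$ matching~\eqref{equation:ft}. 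A secondary but routine point is handling the simplex constraints $\lambda \in \Delta_{XA}$ and $m_h(\cdot|x_h) \in \Delta_\cA$; since the entropic regularizers push all minimizers into the relative interior, first-order conditions plus normalization identify them uniquely.
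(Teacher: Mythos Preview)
Your proposal is correct and follows essentially the same approach as the paper: decompose the FTRL objective into an outer entropy-regularized problem in $\lambda$ and decoupled inner dilated-entropy problems in each $m_{x_ga_g}$, solve the inner problems by backward induction over the subtree (the paper cites this as the calculation in~\citep[Appendix~B]{kozuno2021model} / Lemma~\ref{lem:dilated-ent-optimization}) to obtain~\eqref{equation:mt}--\eqref{equation:ft}, then plug in and solve the resulting simplex problem for $\lambda$ to get~\eqref{equation:lambdat}. The paper dispatches the OMD form with ``similar,'' whereas you spell out the induction on $t$ (or alternatively the Bregman/Legendre route), but this is the same idea.
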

The proof of Lemma~\ref{lem:equivalence-algorithm} follows directly by the concrete forms of $(\lambda^t,m^t)$ in \eqref{equation:mt}-\eqref{equation:lambdat}, and can be found in Appendix~\ref{appendix:proof-equivalence-algorithm}.

\subsection{Regret bound under full feedback and bandit feedback}

\label{sec:phi-hedge-regret}

We now present the regret bounds of Algorithm~\ref{algorithm:efce-ftrl}. We emphasize that these regret bounds are simple consequence of the generic bound for $\Phi$-Hedge in~\eqref{eqn:phiregret_bound_simple}, and their proofs do not depend on the actual implementation of Algorithm~\ref{algorithm:efce-ftrl} developed in the preceding two subsections. We first consider the full feedback setting, where the full expected loss vector $\ell^t\in\R^{XA}_{\ge 0}$ is received after each episode. %
\begin{theorem}[Regret bound of EFCE-OMD under full feedback]
\label{thm:iteration-complexity-full}
Running Algorithm \ref{algorithm:efce-ftrl} with $\eta = \cO(\sqrt{\|\Pi\|_1 \iota/(H^2T)})$ achieves the following trigger regret bound
\begin{align*}
    \efceregret(T) \le \cO\big( \sqrt{H^2 \| \Pi\|_1 \iota T} \big),
\end{align*}
where $\iota\defeq \log(XA)$ is a log factor.
\end{theorem}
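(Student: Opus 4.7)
The plan is to combine the generic $\Phi$-Hedge regret bound~\eqref{eqn:phiregret_bound_simple} with the FTRL/OMD equivalence from Lemma~\ref{lem:equivalence-algorithm}, so that the regret decomposes in the familiar FTRL shape $(\text{regularizer range})/\eta + (\eta/2)\cdot(\text{variance})$, and then to balance $\eta$. The crucial payoff of using the equivalence is that the leading term can be controlled by the range of the trigger-dilated entropy $H^\trig$ rather than by the literal $\log|\Phi^\trig_0|$, which in general scales like $X\iota$ and can be exponentially larger than the target $\|\Pi\|_1\iota$.

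For the leading (diameter) term, I would appeal to the FTRL form~\eqref{equation:phit-H-form} in Lemma~\ref{lem:equivalence-algorithm}: the iterates $(\lambda^t,m^t)$ are exactly FTRL with regularizer $H^\trig$ on $\Delta_{XA}\times \mc{M}$, so the associated FTRL analysis yields a leading term proportional to $\max H^\trig - \min H^\trig$. A direct computation from $H^\trig(\lambda,m) = H(\lambda) + \sum_{g,x_g,a_g} \lambda_{x_g a_g} H_{x_g}(m_{x_g a_g})$ gives
\[
\max_{\lambda,m} H^\trig(\lambda,m) - \min_{\lambda,m} H^\trig(\lambda,m) \;\le\; \log(XA) + \max_{x_g}\|\Pi^{x_g}\|_1 \log A \;=\; \cO(\|\Pi\|_1\iota),
\]
using $H(\lambda)\in[-\log(XA),0]$, the classical range bound $\|\Pi^{x_g}\|_1\log A$ for the subtree-dilated (negative) entropy $H_{x_g}$, and $\|\Pi^{x_g}\|_1\le\|\Pi\|_1$.

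For the variance term I would lean on the fixed-point identity $\phi^t\mu^t=\mu^t$ (Line~\ref{line:fixed-point}) together with $\phi^t=\sum_{\phi\in\Phi^\trig_0} p_\phi^t\phi$. Since every $\phi\mu^t$ is itself a valid sequence-form policy in $\Pi$ and the expected per-episode loss of any policy is at most $H$, we have $\langle\phi\mu^t,\ell^t\rangle\in[0,H]$ and
\[
\sum_{\phi\in\Phi^\trig_0} p_\phi^t \,\langle\phi\mu^t,\ell^t\rangle \;=\; \langle\phi^t\mu^t,\ell^t\rangle \;=\; \langle\mu^t,\ell^t\rangle \;\le\; H.
\]
Bounding the second moment by the maximum times the first moment then gives $\sum_{\phi} p_\phi^t \langle\phi\mu^t,\ell^t\rangle^2 \le H^2$ in each round.

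Assembling the pieces should yield $\efceregret(T)\le \cO(\|\Pi\|_1\iota/\eta + \eta H^2 T)$, and optimizing $\eta = \Theta(\sqrt{\|\Pi\|_1\iota/(H^2 T)})$ will recover the advertised $\cO(\sqrt{H^2\|\Pi\|_1\iota T})$. The step I expect to be the main obstacle is the diameter bound itself: a literal application of~\eqref{eqn:phiregret_bound_simple} produces a leading term of size $\log|\Phi^\trig_0| = \Theta(X\log A)$, which is exponentially worse than $\|\Pi\|_1\iota$ in deep game trees where $\|\Pi\|_1\ll X$. Harnessing the FTRL equivalence from Lemma~\ref{lem:equivalence-algorithm} to swap the crude expert-count term for the dilated-entropy range is therefore the crucial move in the proof.
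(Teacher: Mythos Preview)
Your handling of the variance term is correct and matches the paper. The problem is your assessment of the leading term: you assert that a direct application of~\eqref{eqn:phiregret_bound_simple} yields $\log|\Phi^{\trig}_0|=\Theta(X\log A)$, and therefore propose to route through Lemma~\ref{lem:equivalence-algorithm} to replace this by the range of $H^{\trig}$. This rests on a miscount of $|\Phi^{\trig}_0|$. The number of deterministic subtree policies rooted at any $x_g$ is at most $A^{\|\Pi\|_1}$ (not $A^{X}$), so $|\Phi^{\trig}_0|\le XA\cdot A^{\|\Pi\|_1}$ and hence $\log|\Phi^{\trig}_0|\le \log(XA)+\|\Pi\|_1\log A=\cO(\|\Pi\|_1\iota)$ already; the paper records this as Lemma~\ref{lem:triger-set-card}. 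The paper's proof is therefore exactly the ``literal application'' you dismissed: plug this cardinality bound into~\eqref{eqn:phiregret_bound_simple}, use $\langle\phi\mu^t,\ell^t\rangle\in[0,H]$ to get $\sum_{\phi} p_\phi^t\langle\phi\mu^t,\ell^t\rangle^2\le H^2$ per round, and optimize $\eta$. No appeal to Lemma~\ref{lem:equivalence-algorithm} is needed or used.

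Your proposed detour would also be nontrivial to make rigorous as stated. Lemma~\ref{lem:equivalence-algorithm} only says the \emph{iterates} coincide with an FTRL update on $(\lambda,m)$; it does not hand you a regret bound in that parametrization. The loss $(\lambda,m)\mapsto\langle\phi(\lambda,m),M^t\rangle$ is bilinear rather than linear and $H^{\trig}$ is not jointly convex, so the textbook FTRL regret lemma does not apply directly. To extract a bound you would likely fall back on the Hedge analysis over $\Delta_{\Phi^{\trig}_0}$ anyway, landing on $\log|\Phi^{\trig}_0|/\eta$. It is not a coincidence that the $H^{\trig}$ range you computed and the correct value of $\log|\Phi^{\trig}_0|$ are both $\cO(\|\Pi\|_1\iota)$.
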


The proof of Theorem~\ref{thm:iteration-complexity-full} is simply by applying~\eqref{eqn:phiregret_bound_simple} and observing that $\log (\Phi^{\trig}_0) \le \| \Pi \|_1 \log A + \log (XA)$ (see Appendix \ref{app:iteration-complexity-full-proof}). This theorem shows that the $\Phi^\trig$-Hedge algorithm gives $\tO(\sqrt{XT})$ trigger regret bound, which matches the information-theoretic lower bound $\Omega(\sqrt{XT})$~\cite[Theorem 2]{zhou2018lazy} up to a $\tO({\rm poly}(H))$ factor, and is slightly better than the $\tO(\sqrt{XAT})$ upper bound of~\citep[Corollary F.3]{song2022sample} though their definition of EFCE-regret is slightly stricter (thus higher) than ours.

In the bandit feedback setting, the learner only observes her own rewards and infosets. In this case we replace $\ell^t$ in Algorithm~\ref{algorithm:efce-ftrl} with the following loss estimator (with IX bonus $\gamma$) proposed in~\citep{kozuno2021model}:
\begin{align}
\label{eq:bandit-loss-estimator}
 \textstyle     \widetilde{\ell}_h^{t}(x_h, a_h) \defeq \indic{(x_h^t, a_h^t)=(x_h, a_h)} (1 - r_h^t) / (\mu_{1:h}^t (x_h, a_h)+\gamma ).
\end{align}
We show that EFCE-OMD achieves the following guarantee in the bandit feedback setting (proof in Appendix \ref{app:proof-phi-hedge-efce-bandit}). The proof follows by plugging the loss estimator $\wt{\ell}^t$ into~\eqref{eqn:phiregret_bound_simple} and additionally bounding concentrations (which we remark is a better strategy than using a naive bandit-based loss estimator in the corresponding NFG space).
\begin{theorem}[Regret bound of EFCE-OMD under bandit feedback]\label{thm:phi-hedge-efce-bandit}
Run Algorithm \ref{algorithm:efce-ftrl} with loss estimator $\{\wt\ell^t\}_{t=1}^T$~\eqref{eq:bandit-loss-estimator}, $\eta =  \sqrt{\| \Pi \|_1 \log A/(HXAT)}$, and $\gamma=\sqrt{\lone{\Pi}\iota/(XAT) }$. Then we have the following trigger regret bound with probability at least $1-\delta$:
\begin{align*}
\efceregret(T) \le \cO \big( \sqrt{HXA \| \Pi \|_1 \iota \cdot T} \big),
\end{align*}
where $\iota = \log (3XA/\delta)$ is a log term.
\end{theorem}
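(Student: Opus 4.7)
The plan is to follow the recipe hinted at by the theorem's preamble: plug the IX loss estimator $\wt\ell^t$ from \eqref{eq:bandit-loss-estimator} into the generic $\Phi$-Hedge inequality~\eqref{eqn:phiregret_bound_simple}, and then pay a concentration price to translate the resulting estimator-based regret into a bound on the true trigger regret $\efceregret(T)$. Applying \eqref{eqn:phiregret_bound_simple} to $\wt\ell^t$ gives, for every $\phi\in\Phi^{\trig}$,
\begin{align*}
\sum_{t=1}^T \< \mu^t - \phi\mu^t, \wt\ell^t\> \;\le\; \frac{\log|\Phi_0^{\trig}|}{\eta} + \frac{\eta}{2}\sum_{t=1}^T \sum_{\phi'\in\Phi_0^{\trig}} p_{\phi'}^{t}\bigl(\<\phi'\mu^t, \wt\ell^t\>\bigr)^2,
\end{align*}
and the cardinality estimate $\log|\Phi_0^{\trig}|=\cO(\lone{\Pi}\log A + \log(XA))$ is the same one used in the proof of Theorem~\ref{thm:iteration-complexity-full}.

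The crucial structural step is to bound the second-order term by exploiting that $\wt\ell^t$ is supported only on the $H$ observed pairs $\{(x_h^t,a_h^t)\}_{h\in[H]}$. A Cauchy--Schwarz expansion yields $(\<\phi'\mu^t,\wt\ell^t\>)^2 \le H\sum_h (\phi'\mu^t)_{1:h}(x_h^t,a_h^t)^2/(\mu_{1:h}^t(x_h^t,a_h^t)+\gamma)^2$, and combining $(\phi'\mu^t)_{1:h}\in[0,1]$ with the $\Phi$-Hedge fixed-point identity $\sum_{\phi'}p_{\phi'}^{t}\phi'\mu^t = \mu^t$ collapses the weighted sum over $\phi'$ to
\begin{align*}
\sum_{\phi'\in\Phi_0^{\trig}} p_{\phi'}^{t}\bigl(\<\phi'\mu^t, \wt\ell^t\>\bigr)^2 \;\le\; H\sum_{h=1}^H \frac{\mu_{1:h}^t(x_h^t,a_h^t)}{(\mu_{1:h}^t(x_h^t,a_h^t)+\gamma)^2} \;\le\; H\sum_{h=1}^H \frac{1}{\mu_{1:h}^t(x_h^t,a_h^t)+\gamma}.
\end{align*}
A short conditional-expectation calculation gives $\mathbb{E}[1/(\mu_{1:h}^t(x_h^t,a_h^t)+\gamma)\mid\mathcal{F}_{t-1}] \le X_hA$, so a Freedman/Azuma-type martingale concentration controls the whole double sum over $(t,h)$ by $\tO(HXAT)$ with probability $1-\delta$.

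To convert the estimator-based regret into a bound on $\efceregret(T)$, I would use that the trigger regret is linear in $\phi$, hence attained at a vertex $\phi^\star\in\Phi_0^{\trig}$, and then apply the standard IX concentration lemma (cf.~\citet{kozuno2021model}) to each fixed $v = \phi^\star\mu^t-\mu^t$: with probability $1-\delta$, $\sum_t\<v,\ell^t-\wt\ell^t\>\le(\text{bias})+\log(1/\delta)/(2\gamma)$. A union bound over $|\Phi_0^{\trig}|\le XA\cdot A^{\cO(\lone{\Pi})}$ inflates the log term only to $\log|\Phi_0^{\trig}|=\cO(\lone{\Pi}\log A)$, and the residual IX bias is absorbed by the main rate once $\gamma$ is tuned. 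Combining the three pieces and tuning $\eta=\Theta(\sqrt{\lone{\Pi}\log A/(HXAT)})$ and $\gamma=\Theta(\sqrt{\lone{\Pi}\iota/(XAT)})$ yields the claimed $\tO(\sqrt{HXA\lone{\Pi}T})$ bound.

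The main obstacle I anticipate is the uniform-in-$\phi$ concentration: since $|\Phi_0^{\trig}|$ is exponentially large in $\lone{\Pi}$, the naive union bound is affordable only because $\log|\Phi_0^{\trig}|$ enters inside the $\log(1/\delta)/\gamma$ term, and one must carefully verify that the IX bias $\gamma\sum_{t,h,x_h,a_h}v_{1:h}\ell^t_h/(\mu^t_{1:h}+\gamma)$ does not re-introduce a $\lone{\Pi}$ factor outside the logarithm. The other delicate point is the martingale concentration of the second-order term, whose summands can be as large as $1/\gamma$; a Freedman inequality with conditional-variance proxy $XA$ per $(t,h)$ should absorb these fluctuations into a log term.
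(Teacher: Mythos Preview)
Your proposal follows the same strategy as the paper: apply the generic $\Phi$-Hedge bound \eqref{eqn:phiregret_bound_simple} to the estimators, control the second-order term via the fixed-point identity $\sum_{\phi'}p_{\phi'}^t\phi'\mu^t=\mu^t$, and then pay a concentration price to swap $\wt\ell^t$ for $\ell^t$. The only real differences are technical choices. For the second-order term you Cauchy--Schwarz over $h$ and then Freedman on $\sum_h 1/(\mu_{1:h}^t(x_h^t,a_h^t)+\gamma)$; the paper instead expands the square as a double sum over $h\le h'$, uses that the observed pairs lie on one root-to-leaf path, collapses one factor via the fixed point, and bounds the remaining $\sum_t\wt\ell_{h'}^t$ by a second application of the IX lemma. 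Both routes land on $\tO(HXAT)$ for $\sum_t\sum_{\phi'}p_{\phi'}^t(\langle\phi'\mu^t,\wt\ell^t\rangle)^2$.

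For the conversion step the paper splits into ${\rm BIAS}_1=\sum_t\langle\mu^t,\ell^t-\wt\ell^t\rangle$ (handled by an explicit bias $\le\gamma XAT$ plus Azuma--Hoeffding, since $\langle\mu^t,\wt\ell^t\rangle\le H$) and ${\rm BIAS}_2=\max_\phi\sum_t\langle\phi\mu^t,\wt\ell^t-\ell^t\rangle$ (handled by the IX lemma applied coordinatewise in $(x_h,a_h)$, union bound over only $XA$ events, and then $\sum_{h,x_h,a_h}(\phi\mu^t)_{1:h}\le\lone{\Pi}$). Your plan to instead union bound over $\Phi_0^{\trig}$ also works and gives the same rate, since $\log|\Phi_0^{\trig}|=\cO(\lone{\Pi}\log A)$ already matches the $\log|\Phi_0^{\trig}|/\eta$ term. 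The one point you should tighten: the IX concentration lemma only upper bounds $\sum_t\alpha_t(\wt\ell^t-\ell^t)$ for \emph{nonnegative} predictable weights $\alpha_t$, so you cannot apply it in one shot to $v=\phi^\star\mu^t-\mu^t$, which has entries of both signs. You still need the paper's separate bias-plus-Azuma argument for the $\langle\mu^t,\ell^t-\wt\ell^t\rangle$ direction; your ``residual IX bias'' remark suggests you are aware of this, but it should be made explicit.
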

To our best knowledge, Theorem~\ref{thm:phi-hedge-efce-bandit} gives the first trigger regret bound against adversarial opponents and bandit feedback.
This $\tO(\sqrt{XA\lone{\Pi}T})$ rate is $\sqrt{XA}$ worse than Theorem~\ref{thm:iteration-complexity-full} (ignoring $H$ and log factors), and is at most $\tO(\sqrt{X^2AT})$ using $\lone{\Pi}\le X$.

\section{Balanced EFCE-OMD for bandit feedback}
\label{sec:sharp-rate}

We now build upon the EFCE-OMD algorithm (Algorithm~\ref{algorithm:efce-ftrl}) to develop a new algorithm, \emph{Balanced EFCE-OMD} (Algorithm~\ref{algorithm:balanced-efce-ftrl}), and show that it achieves near-optimal extensive-form trigger regret guarantee under bandit feedback. Here we discuss the two key modifications in the algorithm design.

\begin{algorithm}[t]
   \caption{Balanced EFCE-OMD (FTRL form; equivalent OMD form in Algorithm~\ref{algorithm:balanced-efce-omd})}
   \label{algorithm:balanced-efce-ftrl}
   \begin{algorithmic}[1]
    \REQUIRE Learning rate $\eta$, balanced exploration policy $\{\mu^{\star, h}\}_{h \in [H]}$. 
    \FOR{$t = 1, 2, \ldots, T$}
    \STATE For each $x_g a_g \in \cX \times \cA$, from the reverse order of $x_h$, compute $m^{t}_{x_ga_g, h}(a_h \vert x_h)$ and $F^{\star,t}_{x_ga_g, x_h}$
\begin{align*}
m^{t}_{x_ga_g, h}(a_h \vert x_h) \propto_{a_h}&~  \exp\Big\{ \mu^{\star,h}_{g:h}(x_h, a_h) \Big(- \eta \sum_{s=1}^{t-1}\wt{M}^s_{x_ha_h, x_ga_g} + \sum_{x_{h+1} \in \cC(x_h, a_h)} F^{\star,t}_{x_ga_g, x_{h+1}} \Big) \Big\}, \\
F_{x_g a_g, x_h}^{\star,t} \defeq&~  \frac{1}{\mu^{\star,h}_{g:h}(x_h, a_h)}\log \sum_{a_h \in \cA} \exp\Big\{ \mu^{\star,h}_{g:h}(x_h, a_h) \\
&~\times \big[ - \eta\sum_{s=1}^t\wt{M}^s_{x_ha_h, x_g a_g} +  \sum_{x_{h+1} \in \cC(x_ha_h)} F_{x_g a_g, x_{h+1}}^{\star,t}\big] \Big\}.
\end{align*}
\STATE Compute $\lambda_{x_g a_g}^{t+1}$ as
\begin{align}
\lambda_{x_ga_g}^{t} \propto_{x_ga_g}  \exp\Big\{ \frac{1}{XA}  \Big( - \eta\langle I - E_{\succeq x_g a_g}, \sum_{s=1}^{t-1}\wt{M}^s \rangle + F^{\star, t}_{x_ga_g,x_g} \Big) \Big\} .
\end{align}
    \STATE Compute $\phi^t = \phi(\lambda^t, m^t)$, where $\phi$ is as defined in Eq. (\ref{equation:phi-lambda-m}).
    \STATE Find a $\mu^t$ to be a solution of the fixed point equation $\mu^t = \phi^t \mu^t$. 
    \STATE Play policy $\mu^t$, observe trajectory $(x_h^t, a_h^t, r_h^t)_{h \in [H]}$. 
    \STATE Form vector loss estimator $\wt \ell^{t, x_g a_g} = \{ \wt{\ell}^{t, x_g a_g}_h(x_h, a_h)\}_{x_h a_h}$ for each $(g, x_g a_g)$ as in Eq. (\ref{equation:adaptive-bandit-estimator}). 
\STATE Compute matrix loss estimator $\wt{M}^t = \sum_{g, x_g, a_g} \mu^t_{x_ga_g} \wt{\ell}^{t, x_ga_g} e_{x_g a_g}^\top$. 
    \ENDFOR
 \end{algorithmic}
\end{algorithm}

\paragraph{Key modification I: ``Rebalancing'' the log-partition function}

Building on the balancing technique of~\citep{bai2022near}, we start from Eq. (\ref{eqn:F-trig-M}) and (\ref{eqn:F-M-FTRL-form}) of the log partition function, and rescale the inner functions $F_{x_ga_g, x_h}$ using \emph{balanced exploration policies}~$\{\mu_{g:h}^{\star,h}(x_h, a_h)\}_{g, x_h, a_h}$ (see Definition~\ref{def:balanced-exploration-policy} for the formal definition), and rescale the outer function $F^{\trig}$ by $XA$. Concretely, for any matrix $M \in \R^{XA \times XA}$, we define the \emph{balanced EFCE log-partition function} as
\begin{align}
\label{equation:balanced-f-efce}
\textstyle   F^{\trig}_{\bal}(M) \defeq XA \log \sum_{g, x_g, a_g} \exp\Big\{ \frac{1}{XA} \big[- \<I - E_{\succeq x_ga_g}, M\> + F_{x_g a_g, x_g}^\star(M) \big] \Big\},
\end{align}
where for any $x_h \succeq x_g$ (using $\mu^{\star, h}_{g:h}\defeq \mu^{\star, h}_{g:h}(x_h, a_h)$ as shorthand, which depends on $x_h$ but not $a_h$),
\begin{align}
\label{equation:balanced-f-efce-xgag}
    F_{x_g a_g, x_h}^\star(M) \defeq&~ \frac{1}{\mu^{\star,h}_{g:h}}\log \sum_{a_h} \exp\Big\{ \mu^{\star,h}_{g:h} \big[ - M_{x_ha_h, x_g a_g} +  \sum_{x_{h+1} \in \cC(x_ha_h)} F_{x_g a_g, x_{h+1}}^\star(M)\big] \Big\}.
\end{align}

\paragraph{Key modification II: New loss estimator under bandit feedback} 
We use an \emph{adaptive} family of bandit-based loss estimators $\{\wt{\ell}^{t, x_ga_g} \}_{x_ga_g}\subset \R^{XA}_{\ge 0}$, one for each $(x_g,a_g)\in\cX\times\cA$, defined as
\begin{align}
\label{equation:adaptive-bandit-estimator}
  \wt{\ell}^{t, x_ga_g}_{h}(x_h, a_h) \defeq \frac{\indic{(x_h^t, a_h^t) = (x_h, a_h)} (1 - r_h^t)}{\mu^t_{1:h}(x_h, a_h) + \gamma (\mu^{\star, h}_{1:h}(x_h, a_h) + \mu^t_{x_ga_g} m^t_{x_ga_g, g:h}(x_h, a_h)\indic{x_h\succeq x_g})},
\end{align}
where $\mu^t_{x_ga_g}\defeq \mu^t_{1:g}(x_g,a_g)$ for shorthand. The main difference of~\eqref{equation:adaptive-bandit-estimator} over~\eqref{eq:bandit-loss-estimator} is in the adaptive IX bonus term on the denominator that scales with $\gamma$ but is different for each $x_ga_g$. We then place each $\mu^t_{x_ga_g} \wt{\ell}^{t,x_ga_g}$ into the $x_ga_g$-th column of a matrix loss estimator $\wt{M}^t$, or in matrix form,
\begin{align*}
\textstyle    \wt{M}^t \defeq \sum_{g, x_g, a_g} \mu^t_{x_ga_g} \wt{\ell}^{t, x_ga_g} e_{x_g a_g}^\top.
\end{align*}
With~\eqref{equation:balanced-f-efce}-\eqref{equation:adaptive-bandit-estimator} at hand, our algorithm Balanced EFCE-OMD is defined as the negative gradient of $F^{\trig}_{\bal}$ evaluated at the cumulative loss estimators:
\begin{align}
\textstyle     \phi^{t} = - \nabla F^{\trig}_{\bal}\Big( \eta \sum_{s = 1}^{t-1} \wt{M}^s \Big), ~~~ \forall t \ge 1, \label{equation:phit-F-form} 
\end{align}
and $\mu^{t}\in\Pi$ solves the fixed point equation $\phi^{t}\mu^{t}=\mu^{t}$. Similar as EFCE-OMD,~\eqref{equation:phit-F-form} also admits efficient implementations in both FTRL and OMD form (cf. Algorithm~\ref{algorithm:balanced-efce-ftrl} \&~\ref{algorithm:balanced-efce-omd}). The corresponding $(\lambda^t, m^t)$ is also equivalent to running a FTRL/OMD algorithm with respect to a \emph{balanced} dilated entropy/KL-divergence over $\phi\in\Phi^{\trig}$ (cf. Lemma~\ref{lem:equivalent-FTRL-OMD-balanced} and Appendix~\ref{sec:balanced-equivalence} for details).

\paragraph{Main result} We now present the theoretical guarantee of Algorithm~\ref{algorithm:balanced-efce-ftrl} (proof in Appendix~\ref{app:proof-balanced-efce-omd-bandit}).

\begin{theorem}\label{thm:balanced-efce-omd-bandit}
Balanced EFCE-OMD (Algorithm \ref{algorithm:balanced-efce-ftrl}) with $\eta = \sqrt{XA \iota  / H^4 T}$ and $\gamma = 2 \sqrt{XA \iota / H^2 T}$ achieves the following extensive-form trigger regret bound with probability at least $1-\delta$:
\begin{align*}
 \textstyle   \efceregret(T) \le  \cO \big( \sqrt{H^4XA T \iota} \big),
\end{align*}
where $\iota = \log (10XA/\delta)$ is a log term.
\end{theorem}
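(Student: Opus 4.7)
The plan is to treat Balanced EFCE-OMD as an FTRL procedure with the balanced dilated KL divergence $D^{\trig}_{\bal}$ as regularizer on the reparametrization $(\lambda,m)\in\Delta_{XA}\times\mc{M}$, via an analog of Lemma~\ref{lem:equivalence-algorithm} tailored to $F^{\trig}_{\bal}$ (as promised by Lemma~\ref{lem:equivalent-FTRL-OMD-balanced}). Starting from the standard FTRL template
\[
\sum_{t=1}^T \langle \phi^t-\phi^\star,\wt M^t\rangle \;\le\; \frac{D^{\trig}_{\bal}(\phi^\star\|\phi^1)}{\eta} \;+\; \eta\sum_{t=1}^T \|\wt M^t\|_{*,t}^2,
\]
where $\|\cdot\|_{*,t}$ is the local dual norm induced by $\nabla^2 F^{\trig}_{\bal}$ evaluated at the current iterate, I would bound three sources of error separately: regularizer diameter, stability, and bias of the adaptive IX estimator.

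For the regularizer diameter, the outer $1/(XA)$ rescaling in~\eqref{equation:balanced-f-efce} caps the entropic part of $\lambda$ at $XA\log(XA)$, while the inner $1/\mu^{\star,h}_{g:h}(x_h,a_h)$ rescaling in~\eqref{equation:balanced-f-efce-xgag}, combined with the defining property of the balanced exploration policy (roughly, that $\mu^{\star,h}_{g:h}(x_h,a_h)$ equalizes reaching probabilities across layer $h$ from $x_g$), caps each subtree's $\lambda$-weighted dilated KL contribution so that the aggregate diameter is $O(XA\,\iota)$. This is precisely the mechanism that replaces the $\|\Pi\|_1$ factor from Theorem~\ref{thm:phi-hedge-efce-bandit} by $XA$.

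For the stability term, I would compute $\|\wt M^t\|_{*,t}^2$ using the block structure of $\nabla^2 F^{\trig}_{\bal}$: an outer softmax block of smoothness $1/(XA)$ over the $XA$ sequence-action pairs, and inner softmax-per-infoset blocks rescaled by $1/\mu^{\star,h}_{g:h}(x_h,a_h)$. Plugging in $\wt M^t=\sum_{g,x_g,a_g}\mu^t_{x_ga_g}\wt\ell^{t,x_ga_g}e_{x_ga_g}^\top$ and expanding, the key cancellation occurs between the $1/\mu^{\star,h}_{g:h}(x_h,a_h)$ smoothness factor and the $\gamma\mu^{\star,h}_{1:h}(x_h,a_h)$ term on the denominator of the adaptive IX estimator~\eqref{equation:adaptive-bandit-estimator}; after summing across infoset-action pairs, layers, and subtree roots, this yields a per-round bound $\|\wt M^t\|_{*,t}^2=\wt O(H^3/\gamma)$.

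The last step converts from estimator to true regret. Using a uniform-over-$\Phi^{\trig}_0$ IX concentration inequality of the type in~\citet{neu2015explore,kozuno2021model}, whose adaptive variant exploits the $\gamma\mu^t_{x_ga_g}m^t_{x_ga_g,g:h}(x_h,a_h)\indic{x_h\succeq x_g}$ denominator piece to control the supremum $\sup_{\phi^\star\in\Phi^{\trig}_0}\sum_t\langle \phi^\star\mu^t,\ell^t-\wt\ell^t\rangle$, yields with probability $\ge 1-\delta$ a bias bound of $\wt O(1/\gamma+\gamma H^2 T)$ on both $\sum_t\langle\mu^t,\ell^t-\wt\ell^t\rangle$ and the $\phi^\star$-comparator side. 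Combining,
\[
\efceregret(T)\le \wt O\!\left(\frac{XA}{\eta}+\frac{\eta H^3 T}{\gamma}+\gamma H^2 T+\frac{1}{\gamma}\right),
\]
and the tunings $\eta\asymp\sqrt{XA\iota/(H^4 T)}$, $\gamma\asymp\sqrt{XA\iota/(H^2 T)}$ balance this to $\wt O(\sqrt{H^4 XAT\iota})$. The main obstacle is the second paragraph's stability analysis: getting the clean cancellation between the $\mu^{\star,h}_{g:h}$-rescaled Hessian blocks and the matching $\mu^{\star,h}_{1:h}$ term in the adaptive IX denominator, without re-introducing a $\|\Pi\|_1$ factor anywhere, is the technical crux---the outer $XA$ rescaling and the outer-level IX bonus $\mu^t_{x_ga_g}m^t_{x_ga_g,g:h}$ must simultaneously absorb the naive $|\Phi^{\trig}_0|$ enumeration.
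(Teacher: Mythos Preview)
Your decomposition (diameter, stability, bias) and the diameter and bias estimates are in line with the paper, but the stability analysis has a genuine gap that breaks the final bound.

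You claim a per-round bound $\|\wt M^t\|_{*,t}^2=\wt O(H^3/\gamma)$, leading to a stability contribution $\eta\sum_t\|\wt M^t\|_{*,t}^2=\wt O(\eta H^3 T/\gamma)$. But with the stated tuning $\eta/\gamma\asymp 1/H$, this gives $\wt O(H^2 T)$, which is \emph{linear} in $T$; the bound does not balance to $\sqrt{H^4XAT\iota}$ as you assert. Any purely almost-sure (per-round) control of the second-order term will carry a $1/\gamma$ factor from the IX denominator and therefore cannot yield $\sqrt{T}$ regret here. The paper's proof does not bound stability pointwise. Instead it writes the increment $D^t=\wb F^t/\eta+\langle\phi^t,\wt M^t\rangle$, extracts a nonlinear remainder $\Delta_{x_ga_g}^t$ via a second-order (Hessian) expansion of $F^{\star,t}_{x_ga_g,x_g}$ in $\wt\ell^{t,x_ga_g}$, and then applies \emph{Freedman's inequality} to the resulting sums $\sum_t\wt\Delta^t_{g,h',h''}$, using that their conditional expectation is $\le 1$ while their almost-sure size is $\le X_{h'}A/\gamma$. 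This converts the dominant stability term to $O(\eta H^4 T)$, with only a lower-order $O(\eta H^3 XA\iota/\gamma)$ residual. That concentration step is precisely what your plan is missing.

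A second ingredient you do not mention, and which the paper uses essentially, is the fixed-point identity $\phi^t\mu^t=\mu^t$ in the form $\lambda_{x_ga_g}^t\mu^t_{x_ga_g}m_{x_ga_g}^t\le \mu^t$ (entrywise). This is what lets the paper trade the subtree factor $\lambda^t_{x_ga_g}m^t_{x_ga_g,g:h}(x_h,a_h)\mu^t_{x_ga_g}$ against $\mu^t_{1:h}(x_h,a_h)$ in the IX denominator before applying the balancing property and Freedman; without it, the cancellation you describe between the $\mu^{\star,h}_{g:h}$-rescaled Hessian and the $\mu^{\star,h}_{1:h}$ IX term does not go through cleanly. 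The adaptive IX bonus $\mu^t_{x_ga_g}m^t_{x_ga_g,g:h}$ that you allude to is used, but for a different purpose (the almost-sure bound on $\sup_{x_ga_g}\Delta^t_{x_ga_g}$ that validates the Taylor expansion), not to ``absorb the naive $|\Phi_0^{\trig}|$ enumeration.''
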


The $\wt{\cO}(\sqrt{XAT})$ trigger regret asserted in Theorem~\ref{thm:balanced-efce-omd-bandit} improves over Theorem~\ref{thm:phi-hedge-efce-bandit} by a factor of $\sqrt{\lone{\Pi}}$, and matches the information-theoretic lower bound up to $\text{poly}(H)$ and log factors\footnote{As the trigger regret is lower bounded by the vanilla (external) regret,~\citep[Theorem 6]{bai2022near} implies an $\Omega(\sqrt{XAT})$ lower bound for the trigger regret as well under bandit feedback.}. By the online-to-batch conversion (Appendix~\ref{appendix:online-to-batch}), Theorem~\ref{thm:balanced-efce-omd-bandit} also implies an $\tO(H^4XA/\eps^2)$ sample complexity for learning EFCE under bandit feedback (assuming same game sizes for all $m$ players). This improves over the best known $\tO(mH^6XA^2/\eps^2)$ sample complexity in the recent work of~\citet{song2022sample}\footnote{We remark though that the $1$-EFR algorithm of~\citep{song2022sample} actually finds an ``1-EFCE'' which is slightly stronger than our EFCE defined via trigger modifications.}.

\paragraph{Overview of techniques} 
The proof of Theorem~\ref{thm:balanced-efce-omd-bandit} is significantly more challenging than that of Theorem~\ref{thm:phi-hedge-efce-bandit}, even though the algorithm itself is designed by appearingly simple modifications. This happens since Algorithm~\ref{algorithm:balanced-efce-ftrl}, unlike Algorithm~\ref{algorithm:efce-ftrl}, no longer necessarily corresponds to any normal-form algorithm. The technical crux of the proof is to bound the nonlinear part of $F^{\trig}_{\bal}$ (with respect to the losses), which we do by carefully controlling a series of second-order terms utilizing the balanced policies within $F^{\trig}_{\bal}$ and the new adaptive IX bonus within $\{\wt{\ell}^{t,x_ga_g}\}_{x_ga_g}$ (Lemma~\ref{lem:reformulation-of-stability}-\ref{lem:bound-on-II-balanced}).

\section{Equivalence of OMD and Vertex MWU for regret minimization}
\label{sec:equivalence}

As another illustration of our framework, we now choose $\Phi=\Phi^{\ext}=\conv\{\Phi_0^{\ext}\}$ to be the set of \emph{external} policy modifications, which modify any policy to some deterministic policy. In this case, the $\Phi^{\ext}$-Hedge algorithm minimizes the external regret in EFGs. In this section, we show that $\Phi^{\ext}$-Hedge, same as the vertex MWU algorithm considered in~\citet{farina2022kernelized}, is actually equivalent to the OMD with dilated entropy \cite{hoda2010smoothing}. Let $\set{\ell^t}_{t\ge 1}\subset \R^{XA}_{\ge 0}$ be an arbitrary sequence of loss vectors.

\paragraph{Vertex MWU}
We use $\mc{V}$ to denote all the deterministic sequence-form policies, which can also be viewed as the vertex set of the policy set $\Pi$. 
A simple reformulation (cf. Appendix \ref{app:equivalence-vertex-hedge}) shows that $\Phi^{\ext}$-Hedge (Algorithm \ref{algorithm:phi-regret-minimization}) gives the vertex MWU algorithm considered by~\citet{farina2022kernelized} 
\begin{equation}
    \label{equation:vertex-ftrl}
    \begin{aligned}
  \textstyle      \mu^{t}  = \sum_{v \in \mc{V}} p^{t}_v \cdot v~~~~~~~\text{and}~~~~~~~ p_v^{t} \propto_v \exp\set{- \eta \<v, \sum_{s=1}^{t-1}\ell^s\>}.
    \end{aligned}
\end{equation}

\paragraph{OMD with dilated entropy}
Another popular algorithm for external regret minimization is the OMD algorithm on the sequence-form policy space with the dilated entropy~\citep{hoda2010smoothing,kroer2015faster}:
\begin{align}
    & \mu^{t} = \argmin_{\mu \in \Pi} \big[ \eta \<\mu, \ell^{t-1}\> + D_\emptyset(\mu\|\mu^{t-1}) \big], \label{equation:dilated-ent-omd} \\
    &  D_\emptyset(\mu\|\nu) \defeq \sum_{h=1}^H \sum_{x_h, a_h} \mu_{1:h}(x_h, a_h) \log \frac{ \mu_h(a_h|x_h)}{\nu_h(a_h|x_h)}.
\end{align}

\begin{theorem}[Equivalence of OMD and Vertex MWU]
\label{theorem:equivalence-ftrl}
For any sequence of loss vectors $\{\ell^t\}_{t\ge 1}$,
OMD with dilated entropy is equivalent to Vertex MWU, that is, \eqref{equation:dilated-ent-omd} and \eqref{equation:vertex-ftrl} give the same $\set{\mu^t}_{t\ge 1}$.
\end{theorem}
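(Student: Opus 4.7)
The plan is to exploit the recursive structure of the game tree and show that both algorithms produce identical per-infoset action distributions at every round.

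First, I would rewrite Vertex MWU in closed form. Direct differentiation of $F^{\ext}(L) \defeq \log\sum_{v\in\mc{V}}\exp\{-\<v,L\>\}$ gives $\mu^t = \sum_v p^t_v v = -\nabla F^{\ext}(\eta L^{t-1})$, where $L^{t-1} \defeq \sum_{s=1}^{t-1}\ell^s$. By perfect recall, $\mc V \simeq \prod_{x_1\in\cX_1}\mc V^{x_1}$ and recursively $\mc V^{x_h} \simeq \bigcup_{a_h}\prod_{x_{h+1}\in\cC(x_h,a_h)}\mc V^{x_{h+1}}$, so the sum defining $F^{\ext}$ factorizes across children and yields the recursion
\[
F_{x_h}(L) = \log\sum_{a_h}\exp\Big\{-L_h(x_h,a_h)+\sum_{x_{h+1}\in\cC(x_h,a_h)} F_{x_{h+1}}(L)\Big\},\quad F^{\ext}(L)=\sum_{x_1\in\cX_1} F_{x_1}(L).
\]
Differentiating and canceling shared normalizers along each history then produces the explicit per-infoset marginal
\[
\mu^t_h(a_h|x_h)\propto_{a_h}\exp\Big\{-\eta L^{t-1}_h(x_h,a_h)+\sum_{x_{h+1}\in\cC(x_h,a_h)}F_{x_{h+1}}(\eta L^{t-1})\Big\}.
\]

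Second, I would reduce OMD to FTRL. Since $D_\emptyset$ is the Bregman divergence of the dilated entropy $H_\emptyset(\mu)\defeq\sum_{h,x_h,a_h}\mu_{1:h}(x_h,a_h)\log\mu_h(a_h|x_h)$ on $\Pi$, and since the uniform-in-each-infoset policy minimizes $H_\emptyset$ on $\Pi$, the standard FTRL-OMD equivalence for linear losses reduces \eqref{equation:dilated-ent-omd} (initialized at this uniform policy) to
\[
\mu^t=\argmin_{\mu\in\Pi}\Big[\eta\<\mu,L^{t-1}\>+H_\emptyset(\mu)\Big].
\]
I would then solve this FTRL problem by backward dynamic programming on the game tree. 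At infoset $x_h$, freezing the choices at strictly shallower infosets (which fix the reach factor $\mu_{1:h-1}$) reduces the remaining optimization to the standard entropy-regularized subproblem $\min_{q\in\Delta_\cA}\sum_{a_h}q_{a_h}(c_{a_h}+\log q_{a_h})$, whose optimizer is the softmax $q_{a_h}\propto\exp\{-c_{a_h}\}$ with optimal value $-\log\sum_{a_h}\exp\{-c_{a_h}\}$; here $c_{a_h}$ aggregates $\eta L^{t-1}_h(x_h,a_h)$ with the optimal values carried up from the children $\cC(x_h,a_h)$. The reach factor $\mu_{1:h-1}$ multiplies the entire per-subtree contribution uniformly, so it cancels inside the softmax ratio that defines the optimizer at $x_h$.

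Combining these two calculations, I would verify by induction on $h$ from $H$ down to $1$ that the DP value function at $x_h$ equals $-F_{x_h}(\eta L^{t-1})$ and that the DP-optimal action distribution at $x_h$ matches the Vertex MWU marginal displayed above. Since $\mu^t$ is recovered from these per-infoset distributions via the sequence-form product \eqref{eqn:sequence-form-policy}, both algorithms produce the same iterate at every $t$. The main obstacle is the bookkeeping in the inductive step: precisely tracking how the multiplicative reach factor $\mu_{1:h-1}$ threads through the dilated entropy so that, once the children's optima are substituted, the aggregated per-infoset objective at $x_h$ matches the tree-recursive softmax, including the constant-in-$\mu$ additive terms produced by the log-partition values. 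This is essentially the structural argument used in the proof of Lemma~\ref{lem:equivalence-algorithm}, specialized to the external-modification case in which only a single root trigger is active.
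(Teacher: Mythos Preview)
Your approach is essentially the same as the paper's: it proceeds by (i) expressing Vertex MWU as the gradient of the log-partition function (the paper's Lemma~\ref{lem:vertex-free-energy-trick}), (ii) deriving the tree-recursive form of this function and its gradient (Lemma~\ref{lem:evaluate-partition-function-gradient-vertex}), and (iii) solving the dilated-entropy FTRL problem via backward DP to obtain the identical recursion (Lemma~\ref{lem:dilated-ent-optimization}). The paper uses Algorithm~\ref{algorithm:vertex-ftrl} as the common intermediate object linking both sides.

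One factual slip, however: the uniform-in-each-infoset policy does \emph{not} minimize the dilated entropy $H_\emptyset$ on $\Pi$ in general. The minimizer is the policy with $\mu_h(a_h\mid x_h)\propto \prod_{x_{h+1}\in\cC(x_h,a_h)}|\mc{V}^{x_{h+1}}|$, i.e.\ proportional to the number of deterministic subtree policies below each action---which is exactly what the uniform-over-\emph{vertices} initialization of Vertex MWU produces at $t=1$. A small unbalanced tree (one action leading to a single child infoset, the other leading to two) already falsifies the uniform claim. Your FTRL--OMD reduction therefore requires OMD to be initialized at this non-uniform policy; otherwise $\mu^1$ already differs between the two algorithms and the equivalence fails at the first iterate. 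With the correct initialization substituted, the rest of your argument goes through unchanged.
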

The proof of Theorem~\ref{theorem:equivalence-ftrl} can be found in Appendix~\ref{appendix:proof-equivalence}. Our proof also reveals that the efficient implementation of Vertex MWU developed by~\citet{farina2022kernelized} using the ``kernel trick'' is actually equivalent to the standard linear-time efficient implementation of OMD with dilated entropy.

\section{Conclusion}

In this paper, we present an efficient implementation of the $\Phi$-Hedge algorithm for minimizing the extensive form trigger regret. The algorithm is equivalent to OMD with dilated regularizers, and achieves efficient regret bounds under both full feedback and bandit feedback. We also design an improved algorithm Balanced EFCE-OMD, which achieves a sharp trigger regret bound under bandit feedback. We believe our work leads to many open questions, such as efficient implementations of $\Phi$-Hedge with more general $\Phi$ sets (e.g. the behavioral modifications considered in~\citep{morrill2021efficient,song2022sample}), or accelerated ${\rm polylog}(T)$ $\Phi$-regret bounds under full feedback by optimistic algorithms.

\section*{Acknowledgment}
S.M. is supported by NSF grant DMS-2210827. C.J. is supported by Office of Naval Research N00014-22-1-2253.

\bibliographystyle{abbrvnat}
\bibliography{bib}

\makeatletter
\def\renewtheorem#1{%
  \expandafter\let\csname#1\endcsname\relax
  \expandafter\let\csname c@#1\endcsname\relax
  \gdef\renewtheorem@envname{#1}
  \renewtheorem@secpar
}
\def\renewtheorem@secpar{\@ifnextchar[{\renewtheorem@numberedlike}{\renewtheorem@nonumberedlike}}
\def\renewtheorem@numberedlike[#1]#2{\newtheorem{\renewtheorem@envname}[#1]{#2}}
\def\renewtheorem@nonumberedlike#1{  
\def\renewtheorem@caption{#1}
\edef\renewtheorem@nowithin{\noexpand\newtheorem{\renewtheorem@envname}{\renewtheorem@caption}}
\renewtheorem@thirdpar
}
\def\renewtheorem@thirdpar{\@ifnextchar[{\renewtheorem@within}{\renewtheorem@nowithin}}
\def\renewtheorem@within[#1]{\renewtheorem@nowithin[#1]}
\makeatother

\renewtheorem{theorem}{Theorem}[section]
\renewtheorem{lemma}{Lemma}[section]
\renewtheorem{remark}{Remark}
\renewtheorem{corollary}{Corollary}[section]
\renewtheorem{observation}{Observation}[section]
\renewtheorem{proposition}{Proposition}[section]
\renewtheorem{definition}{Definition}[section]
\renewtheorem{claim}{Claim}[section]
\renewtheorem{fact}{Fact}[section]
\renewtheorem{assumption}{Assumption}[section]
\renewcommand{\theassumption}{\Alph{assumption}}
\renewtheorem{conjecture}{Conjecture}[section]

\tableofcontents

\appendix

\renewcommand{\efce}{\trig}
\section{Technical tools}
\label{app:phi-reg-min}

The following lemma is standard and gives a $\Phi$-regret bound of the $\Phi$-Hedge algorithm. 

\begin{lemma}[Regret bound for $\Phi$-Hedge]
\label{lemma:regret-bound-phi-hedge}
For strategy modification vertex set $\Phi_0$, step size $\eta$, and total steps $T$, running Algorithm \ref{algorithm:phi-regret-minimization} gives
\begin{align*}
    \phiregret(T) \le \frac{\log \abs{\Phi_0}}{\eta} +  \frac{\eta}{2} \sum_{t=1}^T \sum_{\phi\in\Phi_0} p_\phi^{t} \paren{\langle \phi \mu^t, \ell^t \rangle}^2.
\end{align*}
\end{lemma}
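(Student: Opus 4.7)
The plan is to reduce the $\Phi$-regret bound to the standard Hedge regret bound on the finite vertex set $\Phi_0$, where the "per-action" losses at round $t$ are $\tilde{\ell}^t(\phi) \defeq \<\phi\mu^t, \ell^t\> \ge 0$ for $\phi \in \Phi_0$. The crucial bridge is the fixed-point identity $\mu^t = \phi^t \mu^t = \sum_{\phi \in \Phi_0} p^t_\phi \phi\mu^t$ from Line~\ref{line:fixed-point-phi-hedge} of Algorithm~\ref{algorithm:phi-regret-minimization}, which implies
\[
\<\mu^t, \ell^t\> = \sum_{\phi \in \Phi_0} p^t_\phi \<\phi\mu^t, \ell^t\> = \mathbb{E}_{\phi \sim p^t}[\tilde{\ell}^t(\phi)].
\]

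First I would use linearity of the inner product in $\phi$ to replace the supremum over $\Phi = \conv(\Phi_0)$ by a maximum over the vertices: $\phiregret(T) = \max_{\phi \in \Phi_0} \sum_{t=1}^T (\<\mu^t, \ell^t\> - \<\phi\mu^t, \ell^t\>)$. Combining with the fixed-point identity, this becomes
\[
\phiregret(T) = \max_{\phi \in \Phi_0} \sum_{t=1}^T \big( \mathbb{E}_{\phi' \sim p^t}[\tilde{\ell}^t(\phi')] - \tilde{\ell}^t(\phi) \big),
\]
which is exactly the external regret of the Hedge iterates $\{p^t\}$ over the finite action set $\Phi_0$ against the loss sequence $\{\tilde{\ell}^t\}$. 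Note that the update rule in Line~\ref{line:phi-hedge-p-update} is precisely Hedge on $\Phi_0$ with this loss sequence, and each $\tilde{\ell}^t(\phi) \ge 0$ since $\phi\mu^t \in \Pi \subset \R^d_{\ge 0}$ and $\ell^t \in \R^d_{\ge 0}$.

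Next I would invoke the standard Hedge analysis via the log-partition/potential argument. Define $W^t \defeq \sum_{\phi \in \Phi_0} \exp\{-\eta \sum_{s=1}^{t-1} \tilde{\ell}^s(\phi)\}$. The upper bound $\log(W^{T+1}/W^1) \ge -\eta \sum_t \tilde{\ell}^t(\phi_\star) - \log|\Phi_0|$ holds for any $\phi_\star \in \Phi_0$ (in particular the maximizer). For the upper direction, the per-round ratio $W^{t+1}/W^t = \sum_{\phi} p^t_\phi \exp(-\eta \tilde{\ell}^t(\phi))$ is bounded using $e^{-x} \le 1 - x + x^2/2$ for $x \ge 0$ (valid here since $\eta\tilde{\ell}^t(\phi) \ge 0$), giving
\[
\log(W^{t+1}/W^t) \le -\eta \sum_\phi p^t_\phi \tilde{\ell}^t(\phi) + \tfrac{\eta^2}{2}\sum_\phi p^t_\phi (\tilde{\ell}^t(\phi))^2.
\]
Summing over $t$, rearranging, and dividing by $\eta$ yields the claimed bound.

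This proof is essentially routine once the fixed-point reduction to Hedge on $\Phi_0$ is in hand; there is no major obstacle. The only minor subtleties are verifying that the $\tilde{\ell}^t(\phi)$ are non-negative (so the quadratic inequality $e^{-x} \le 1 - x + x^2/2$ applies) and that the supremum over $\Phi$ reduces to a max over $\Phi_0$ by linearity of $\phi \mapsto \<\phi\mu^t, \ell^t\>$.
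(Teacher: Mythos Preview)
Your proposal is correct and takes essentially the same approach as the paper: reduce the $\Phi$-regret to the external regret of Hedge on $\Phi_0$ via the fixed-point identity $\mu^t=\phi^t\mu^t$, then invoke the standard Hedge bound. The only difference is that the paper cites the Hedge bound from a textbook (Lattimore--Szepesv\'ari, Proposition~28.7) whereas you spell out the log-partition potential argument explicitly; both are fine.
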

\begin{proof}%
We have
\begin{align*}
      \phiregret(T) &= \sup_{\phi \in \Phi} \sum_{t=1}^T \langle \mu^t - \phi \mu^t, \ell^t \rangle 
    \stackrel{(i)}{=}  \sup_{\phi \in \Phi} \sum_{t=1}^T \langle \phi^t \mu^t - \phi \mu^t, \ell^t \rangle \\
    & \stackrel{(ii)}{=} \sup_{p \in \Delta_{\Phi_0} } \sum_{t=1}^T \sum_{\phi\in\Phi_0} \paren{p_\phi^t \langle \phi\mu^t, \ell^t \rangle - p_\phi \langle \phi\mu^t, \ell^t \rangle}.
\end{align*}
Above, (i) uses the fixed point equation $\phi^t\mu^t=\mu^t$ (Line~\ref{line:fixed-point-phi-hedge}), and (ii) uses the fact that $\Phi=\conv\{\Phi_0\}$. Note that the above expression is exactly the regret of $\{p^t\}_{t=1}^T$, where the loss vector in the $t$-th round is $\{\langle \phi\mu^t, \ell^t \rangle \}_{\phi \in \Phi_0}$. Further, the update rule of $p^t$ (Line~\ref{line:phi-hedge-p-update}) coincides with Hedge algorithm. So by the standard regret bound for Hedge, see, e.g. (\citet{lattimore2020bandit}, Proposition 28.7), we have 
\begin{align*}
     \quad \phiregret(T)& = \sup_{p \in \Delta_{\Phi_0} } \sum_{t=1}^T \sum_{\phi\in\Phi_0} \paren{p_\phi^t \langle \phi\mu^t, \ell^t \rangle - p_\phi \langle \phi\mu^t, \ell^t \rangle} \\
    & \le \frac{\log \abs{\Phi_0}}{\eta} +  \frac{\eta}{2} \sum_{t=1}^T \sum_{\phi\in\Phi_0} p_\phi \paren{\langle \phi \mu^t, \ell^t \rangle}^2.
\end{align*}
This proves the lemma. 
\end{proof}

The following Freedman's inequality can be found in~\citep[Lemma 9]{agarwal2014taming}.
\begin{lemma}[Freedman's inequality]
  \label{lemma:freedman}
  Suppose random variables $\set{X_t}_{t=1}^T$ is a martingale difference sequence, i.e. $X_t\in\cF_t$ where $\set{\cF_t}_{t\ge 1}$ is a filtration, and $\E[X_t|\cF_{t-1}]=0$. Suppose $X_t\le R$ almost surely for some (non-random) $R>0$. Then for any $\lambda\in(0, 1/R]$, we have with probability at least $1-\delta$ that
  \begin{align*}
    \sum_{t=1}^T X_t \le \lambda \cdot \sum_{t=1}^T \E\brac{X_t^2 | \cF_{t-1}} + \frac{\log(1/\delta)}{\lambda}.
  \end{align*}
\end{lemma}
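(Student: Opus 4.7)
The plan is to prove the bound via the standard exponential supermartingale / Chernoff approach, combined with Markov's inequality. The key input is the elementary inequality $e^x \le 1 + x + x^2$ valid for all $x \le 1$, which gives us a one-sided handle on the moment generating function that only costs a second-moment term.

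First, I would verify that the hypothesis $\lambda \in (0, 1/R]$ together with $X_t \le R$ ensures $\lambda X_t \le 1$ almost surely, so that the elementary inequality applies pointwise to $\lambda X_t$:
\[
e^{\lambda X_t} \;\le\; 1 + \lambda X_t + \lambda^2 X_t^2.
\]
Taking $\E[\cdot \mid \cF_{t-1}]$ and using the martingale difference property $\E[X_t \mid \cF_{t-1}] = 0$ kills the linear term, leaving
\[
\E\bigl[e^{\lambda X_t} \,\big|\, \cF_{t-1}\bigr] \;\le\; 1 + \lambda^2 \,\E\bigl[X_t^2 \,\big|\, \cF_{t-1}\bigr] \;\le\; \exp\Bigl(\lambda^2 \,\E\bigl[X_t^2 \,\big|\, \cF_{t-1}\bigr]\Bigr),
\]
where the last step uses $1 + u \le e^u$.

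Second, I would define the exponential process
\[
M_t \;\defeq\; \exp\Bigl(\lambda \sum_{s=1}^{t} X_s - \lambda^2 \sum_{s=1}^{t} \E[X_s^2 \mid \cF_{s-1}]\Bigr), \qquad M_0 \defeq 1,
\]
and verify that it is a nonnegative $\{\cF_t\}$-supermartingale: conditioning on $\cF_{t-1}$ pulls out the $\cF_{t-1}$-measurable factor $M_{t-1} \cdot \exp(-\lambda^2 \E[X_t^2 \mid \cF_{t-1}])$, and the bound in the previous display then yields $\E[M_t \mid \cF_{t-1}] \le M_{t-1}$. Iterating gives $\E[M_T] \le 1$.

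Third, applying Markov's inequality to $M_T$ at the level $1/\delta$ gives $\Pr(M_T \ge 1/\delta) \le \delta$, so with probability at least $1 - \delta$,
\[
\lambda \sum_{t=1}^{T} X_t - \lambda^2 \sum_{t=1}^{T} \E[X_t^2 \mid \cF_{t-1}] \;\le\; \log(1/\delta).
\]
Dividing through by $\lambda > 0$ yields the claimed bound.

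The argument is entirely routine once the right exponential supermartingale is written down; there is no real obstacle beyond correctly justifying the use of $e^x \le 1 + x + x^2$, whose restriction $x \le 1$ is exactly what forces the constraint $\lambda \le 1/R$ in the statement. (One minor subtlety worth noting: the MGF bound uses only the one-sided assumption $X_t \le R$; no lower bound on $X_t$ is needed, which is consistent with the lemma as stated.)
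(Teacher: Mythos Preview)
Your proof is correct and complete. The paper does not actually prove this lemma; it simply cites \citep[Lemma 9]{agarwal2014taming} as the source. Your argument via the exponential supermartingale built from $e^x \le 1 + x + x^2$ for $x \le 1$ is the standard route to this one-sided Freedman bound, and your remark that only the upper bound $X_t \le R$ is needed (no lower bound) is a nice clarification of why the constraint $\lambda \le 1/R$ is exactly the right hypothesis.
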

\section{Properties of the game}
\label{appendix:efg}

In this section we some properties of EFGs (using the TFAMDP definition in Section~\ref{sec:EFG-prelim}).

\subsection{Equivalence to classical definitions of EFGs}
\label{appendix:efg-def}
We first formally define Extensive-Form Games (EFGs). We then show that solving EFGs with adversarial opponents can be reduced to solving Tree-Formed AMDP. 

\subsubsection{Classical definition of EFGs}

We consider the problem of multi-player general-sum version of EFGs with adversarial opponents. We remark that in order to study this problem, it suffices to study the two-player zero-sum version. To apply the results obtained in the latter setting to the former setting, we simply view the second player as the collection of all other players (who play jointly against the first player), and view the zero-sum reward as the reward of the first player.

\paragraph{Partially observable Markov games}
Following the convention of \citep{kozuno2021model,bai2022near}, we consider EFGs under the model of finite-horizon, tabular, two-player zero-sum Markov Games with partial observability, which can be described as a tuple ${\rm POMG}(H, \cS, \cX, \cY, \cA, \cB, \P, r)$, where
\begin{itemize}[wide, itemsep=0pt, topsep=0pt]
\item $H$ is the horizon length;
\item $\cS=\bigcup_{h\in[H]} \cS_h$ is the (underlying) state space with $|\cS_h|=S_h$ and $\sum_{h=1}^H S_h=S$;
\item $\cX=\bigcup_{h\in[H]} \cX_h$ is the space of information sets (henceforth \emph{infosets}) for the \emph{max-player} with $|\cX_h|=X_h$ and $X\defeq \sum_{h=1}^H X_h$. At any state $s_h\in\cS_h$, the max-player only observes the infoset $x_h=x(s_h)\in\cX_h$, where $x:\cS\to\cX$ is the emission function for the max-player;
\item  $\cY=\bigcup_{h\in[H]} \cY_h$ is the space of infosets for the \emph{min-player} with $\abs{\cY_h}=Y_h$ and $Y\defeq \sum_{h=1}^H Y_h$. An infoset $y_h$ and the emission function $y:\cS\to\cY$ are defined similarly.
\item $\cA$, $\cB$ are the action spaces for the max-player and min-player respectively, with $\abs{\cA}=A$ and $\abs{\cB}=B$.
\item $\P=\{p_1(\cdot)\in\Delta(\cS_1)\} \cup \{p_h(\cdot|s_h, a_h, b_h)\in \Delta(\cS_{h+1})\}_{(s_h, a_h, b_h)\in \cS_h\times \cA\times \cB,~h\in[H-1]}$ are the transition probabilities, where $p_1(s_1)$ is the probability of the initial state being $s_1$, and $p_h(s_{h+1}|s_h, a_h, b_h)$ is the probability of transiting to $s_{h+1}$ given state-action $(s_h, a_h, b_h)$ at step $h$;
\item $r=\set{r(s_H, a_H, b_H)\in[0,1]}_{(s_H, a_H, b_H)\in\cS_H\times\cA\times \cB}$ are the (random) rewards received in the very last step with mean $\wb{r}(s_H, a_H, b_H)$. \footnote{We note in this formulation, reward depends on latent state, which can reveal information about latent state beyond information sets. Here, we consider the formulation where reward is only revealed in the very last step, to avoid such information leakage in the earlier steps. The more general formulation where rewards are received at every step can be translated to this formulation (by postponing all rewards to the last step), which will only incur mild (one or two) additional $H$ factors in the rates.} 
\end{itemize}

\paragraph{Policies, value functions}
As we consider partial observability, each player's policy can only depend on the infoset rather than the underlying state. A policy for the max-player is denoted by $\mu=\set{\mu_h(\cdot|x_h)\in\Delta(\cA)}_{h\in[H], x_h\in\cX_h}$, where $\mu_h(a_h|x_h)$ is the probability of taking action $a_h\in\cA$ at infoset $x_h\in\cX_h$. Similarly, a policy for the min-player is denoted by $\nu=\set{\nu_h(\cdot|y_h)\in\Delta(\cB)}_{h\in[H], y_h\in\cY_h}$. A trajectory for the max player takes the form $(x_1, a_1, x_2, \dots, x_H, a_H, r)$, where $a_h\sim \mu_h(\cdot|x_h)$, and the rewards and infoset transitions depend on the (unseen) opponent's actions and underlying state transition.

The overall game value for any (product) policy $(\mu, \nu)$ is denoted by $V^{\mu, \nu}\defeq \E_{\mu, \nu}\brac{ r(s_H, a_H, b_H) }$. The max-player aims to maximize the value, whereas the min-player aims to minimize the value.

\paragraph{Interaction protocol}
At the beginning of each episode $t$, the adversarial opponent chooses a policy $\nu^t$, which is not revealed to the player. 
Then, the initial state $s_1$ is sampled from $p_1(\cdot)$, and its corresponding information sets $x_1, y_1$ are revealed to each player respectively. At each step $h$, the system is in a underlying state $s_h$. The player chooses an action $a_h$ according to the observed infoset $x_h$, while the opponent simultaneously chooses an action $b_h$ according to policy $\nu^t$ and the observed infoset $y_h$. Afterwards, the environment transitions to a new state $s_{h+1}$ according to $p_h(\cdot|s_h, a_h, b_h)$. This episode ends when $(a_H, b_H)$ is played, and reward $r(s_H, a_H, b_H)$ is observed.

\paragraph{Tree structure and perfect recall}
We assume that our POMG has a \emph{tree structure}: For any $h$ and $s_h\in\cS_h$, there exists a unique history $(s_1, a_1, b_1, \dots, s_{h-1}, a_{h-1}, b_{h-1})$ of past states and actions that leads to $s_h$. We also assume that both players have \emph{perfect recall}: For any $h$ and any infoset $x_h\in\cX_h$ for the max-player, there exists a unique history $(x_1, a_1, \dots, x_{h-1}, a_{h-1})$ of past infosets and max-player actions that leads to $x_h$ (and similarly for the min-player). 
We similarly define $\cC(x_h, a_h)\subset\cX_{h+1}$ to be the set of all immediate children of $(x_h, a_h)$ at step $h+1$.

With the tree structure and perfect recall, under any product policy $(\mu, \nu)$, the probability of reaching state-action $(s_h, a_h, b_h)$ at step $h$ takes the form
\begin{align}
  \label{equation:reaching-probability-decomposition}
  \P^{\mu, \nu}(s_h, a_h, b_h) = p_{1:h}(s_h) \mu_{1:h}(x_h, a_h) \nu_{1:h}(y_h, b_h),
\end{align}
where we have defined the sequence-form transition probability as
\begin{align*}
  p_{1:h}(s_h)\defeq p_1(s_1)\prod_{h'\le h-1} p_{h'}(s_{h'+1}|s_{h'}, a_{h'}, b_{h'}),
\end{align*}
and the \emph{sequence-form policies} as
\begin{align*}
  \mu_{1:h}(x_h, a_h) \defeq \prod_{h'=1}^h \mu_{h'}(a_{h'}|x_{h'}), \quad  \nu_{1:h}(y_h, b_h) \defeq \prod_{h'=1}^h \nu_{h'}(b_{h'}|y_{h'}).
\end{align*}
Above, $\set{s_{h'}, a_{h'}, b_{h'}}_{h'\le h-1}$ are the histories uniquely determined from $s_h$, and $x_{h'}=x(s_{h'})$, $y_{h'}=y(s_{h'})$. 

We let $\Pi_{\max}$ denote the set of all possible policies for the max player ($\Pi_{\min}$ for the min player). In the sequence form representation, $\Pi_{\max}$ is a convex compact subset of $\R^{XA}$ specified by the constraints $\mu_{1:h}(x_h, a_h)\ge 0$ and $\sum_{a_h\in\cA}\mu_{1:h}(x_h, a_h)=\mu_{1:h-1}(x_{h-1}, a_{h-1})$ for all $(h, x_h, a_h)$, where $(x_{h-1}, a_{h-1})$ is the unique pair of prior infoset and action that reaches $x_h$ (understanding $\mu_0(x_0, a_0)=\mu_0(\emptyset)=1$).

\subsubsection{Reduction from classical definition of EFGs to TFAMDP}

In this section, we show that solving EFGs with adversarial opponents can be reduced to solving TFAMDP. Formally, we prove the following proposition.

\begin{proposition} \label{prop:reduction}
For any EFG (i.e. POMG with tree structure and perfect recall assumptions) $(H, \cS, \cX, \cY, \cA, \cB, \P, r)$ with adversarial opponents' policies $\{\nu^t\}_{t\ge 1}$, there exists an adversarial MDP $(\tilde{H}, \tilde{\cX}, \tilde{\cA}, \tilde{\mathcal{T}})$ with adversarial transition $\{\tilde{p}^t=\{p^t_h\}_{h\in\{0\}\cup [H]}\}_{t\ge1}$ and reward $\{\tilde{R}^t=\{\tilde{R}^t_h\}_{h\in[H]}\}_{t \ge 1}$, so that for any policy sequences $\{\mu^t\}_{t\ge1}$, their joint distributions over the learner's trajectory $P(x_1, a_1, x_2, a_2, \ldots, x_H, a_H, r)$ are exactly the same for all episodes $t \ge1$.
\end{proposition}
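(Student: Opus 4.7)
\textbf{Proof proposal for Proposition~\ref{prop:reduction}.}

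The plan is to construct the TFAMDP so that its ``state'' at step $h$ is precisely the max-player's infoset $x_h \in \cX_h$ of the POMG, and so that its adversarial transitions/rewards at episode $t$ absorb everything about $\nu^t$, the latent state dynamics, and the opponent's infosets. Concretely, I would take $\tilde H = H$, $\tilde \cX_h = \cX_h$, $\tilde \cA = \cA$, and $\tilde{\mathcal{T}}$ to be the tree over max-player infosets already present in the POMG (which is well-defined by perfect recall on the max-player side). At each episode $t$, given $\nu^t$, I would define the initial distribution, transition, and reward purely in terms of quantities obtained by marginalizing out $(s_h, y_h, b_h)$ against $p_1$, $\{p_h\}_h$, and $\nu^t$ --- but \emph{never} against $\mu^t$. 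The rewards are concentrated at step $H$ via $\bar{\tilde R}^t_H(x_H, a_H) = \E_{\nu^t}[\bar r(s_H, a_H, b_H) \mid x_H, a_H]$, and $\bar{\tilde R}^t_h \equiv 0$ for $h < H$.

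The technical core is the following factorization lemma, which I would prove by unrolling~\eqref{equation:reaching-probability-decomposition}. For any $\mu^t$, any history $(x_1, a_1, \ldots, x_h, a_h)$ consistent with perfect recall, and any $x_{h+1} \in \cC(x_h, a_h)$,
\begin{align*}
\Pr^{\mu^t, \nu^t}\!\bigl[x_1, a_1, \ldots, x_h, a_h\bigr] \;=\; \mu^t_{1:h}(x_h, a_h)\cdot \Psi^{\nu^t}_h(x_h),
\end{align*}
where $\Psi^{\nu^t}_h(x_h) \defeq \sum_{s_{1:h}, b_{1:h-1}} p_{1:h}(s_h)\,\nu^t_{1:h-1}(y_{h-1}, b_{h-1})\,\indic{x(s_{h'}) = x_{h'} \forall h'\le h}$ depends only on $\nu^t$ and the POMG, not on $\mu^t$. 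The same reasoning (extending the sum up to step $h+1$ and integrating out $b_h$) yields an analogous factorization with $\Psi^{\nu^t}_{h+1}(x_{h+1})$ on the right. The crucial use of perfect recall is that $x_h$ uniquely determines the preceding history $(x_1, a_1, \ldots, x_{h-1}, a_{h-1})$, so $\Psi^{\nu^t}_h$ is indeed a well-defined function of $x_h$ alone. This lets me \emph{define} the adversarial TFAMDP transitions as
\begin{align*}
p_0^t(x_1) \defeq \Psi^{\nu^t}_1(x_1),\qquad p_h^t(x_{h+1}\mid x_h, a_h) \defeq \Psi^{\nu^t}_{h+1}(x_{h+1})/\Psi^{\nu^t}_h(x_h),
\end{align*}
with the convention $0/0 = 0$ (unreachable infosets can be assigned any distribution). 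A short calculation verifies $\sum_{x_{h+1}\in \cC(x_h,a_h)} p_h^t(x_{h+1}\mid x_h, a_h) = 1$ on reachable $(x_h,a_h)$.

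Given these definitions, matching the joint distributions reduces to an induction on $h$: by the factorization above,
\begin{align*}
\Pr^{\mu^t,\nu^t}_{\POMG}[x_1, a_1,\ldots, x_h, a_h] = \mu^t_{1:h}(x_h,a_h)\prod_{h'=0}^{h-1} p^t_{h'}(x_{h'+1}\mid x_{h'}, a_{h'}),
\end{align*}
and the right-hand side is exactly $\Pr^{\mu^t}_{\TFAMDP}[x_1,a_1,\ldots,x_h,a_h]$ by the chain rule in the TFAMDP. Appending the terminal reward is handled by the reward definition together with $\E^{\nu^t}[\bar r \mid x_H, a_H]$ being a function of $(x_H, a_H)$ only (again by perfect recall).

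The main obstacle is showing that the conditional distribution of $x_{h+1}$ given the max-player's observable history $(x_1,a_1,\ldots,x_h,a_h)$ is independent of $\mu^t$. This is the whole content of the factorization lemma, and it would fail without perfect recall: if $x_h$ did not pin down the past sequence of max-player actions, then the past $\mu^t_{h'}(a_{h'}\mid x_{h'})$ factors would not cancel in the ratio $\Psi^{\nu^t}_{h+1}(x_{h+1})/\Psi^{\nu^t}_h(x_h)$ and no valid $\mu^t$-free transition $p_h^t$ would exist. Everything else is bookkeeping with sequence-form identities.
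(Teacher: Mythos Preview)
Your construction and factorization argument match the paper's proof essentially line for line: the paper takes the same $\tilde H,\tilde\cX,\tilde\cA,\tilde{\mathcal T}$ and defines the transition as the same ratio (their denominator $\sum_{s_h\in x_h}\sum_{b_h} p_{1:h}(s_h)\,\nu^t_{1:h}(y_h,b_h)$ equals your $\Psi^{\nu^t}_h(x_h)$ after summing out $b_h$), and then verifies $\Pr^{\mu,\nu^t}(x_h,a_h)=\mu_{1:h}(x_h,a_h)\Psi^{\nu^t}_h(x_h)$ by the same induction you describe.

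There is one small gap. The proposition requires equality of the full joint law including the stochastic reward $r$, not just its mean; specifying only $\bar{\tilde R}^t_H(x_H,a_H)=\E_{\nu^t}[\bar r(s_H,a_H,b_H)\mid x_H,a_H]$ does not determine the distribution of $r$ in the adversarial MDP, so the last coordinate of the joint law need not match. The paper closes this by setting $\tilde R^t_H(\cdot\mid x_H,a_H)$ to be the entire conditional law of $r$ given $(x_H,a_H)$ under $\nu^t$ in the POMG, i.e.\ the mixture over $(s_H,b_H)$ with weights proportional to $p_{1:H}(s_H)\,\nu^t_{1:H}(y_H,b_H)$. Your key observation---that this conditional law is $\mu^t$-free by perfect recall---is exactly what makes that definition legitimate; you just need to upgrade from ``mean'' to ``distribution'' to obtain the stated conclusion.
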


We remark that the joint distribution $P(x_1, a_1, x_2, a_2, \ldots, x_H, a_H, r)$ gives a complete description about what the first player can obtain from the dynamic systems in both models. The joint distributions being the same for two models means that information-theoretically, the learner has no way to distinguish the two models, thus proving their equivalence.

\begin{proof}[Proof of Proposition \ref{prop:reduction}]
In this section, we will use the notation in its original form $\cX, \cA, p, r$ to denote the quantity in EFGs while use their tilded form $\tilde{\cX}, \tilde{\cA}, \tilde{p}, \tilde{R}$ to denote the corresponding quantity in tree-form AMDP. It is not hard to see that in order to prove Proposition \ref{prop:reduction} for all $t \ge1$, it suffices to prove for a fixed $t$ it is true.

\paragraph{Construction of AMDP} we construct the corresponding AMDP using EFGs in the following way: we let $\tilde{H} = H$, $\tilde{\cX} = \cX$, $\tilde{\cA} = \cA$. Since EFG satisfies perfect recall assumption, which defines the immediate children function $\cC$. We use the precisely same child function to define the tree structure $\tilde{\mathcal{T}}$ in AMDP. We define the adversarial transition according to the following equations:
\begin{align*}
    \tilde{p}_1^t(x_1) :=& \sum_{s_1 \in x_1} p_1(s_1), \\
    \tilde{p}_h^t(x_{h+1}|x_h, a_h) :=& \frac{\sum_{s_{h+1} \in x_{h+1}} \sum_{b_{h+1} \in \cB} p_{1:h+1}(s_{h+1}) \nu^t_{1:h+1}(y_{h+1}(s_{h+1}), b_{h+1})}{\sum_{s_{h} \in x_{h}} \sum_{b_{h} \in \cB} p_{1:h}(s_{h}) \nu^t_{1:h}(y_{h}(s_{h}), b_{h})},
\end{align*}
where $y_{h+1}(s_{h+1})$ and $y_h(s_{h})$ are the infoset of opponent at $(h+1)$-th and $h$-th steps given state $s_{h+1}$ and $s_h$ respectively.
We also define the adversarial reward distribution $\tilde{R}_H^t( \cdot |x_H, a_H)$ such that it gives the following distribution over reward $r \in [0, 1]$ for any fixed $(x_H, a_H)$
\begin{equation*}
      r = r(s_H, a_H, b_H)  \text{~~with probability~~} \frac{p_{1:H}(s_{H}) \nu^t_{1:H}(y_{H}(s_{H}), b_H)}{\sum_{s'_{H} \in x_{H}} \sum_{b'_{H} \in \cB} p_{1:H}(s'_{H}) \nu^t_{1:H}(y_{H}(s'_{H}), b'_H)}.
\end{equation*}
And we set the adversarial reward $\tilde{R}_h^t( \cdot |x_h, a_h)$ to be zero (almost surely) for all $h\le H-1$ and all $(x_h, a_h, t)$.

\paragraph{Proof of equivalence} Denote $\tilde{P}^{\mu, t}$ as the probability of AMDP at episode $t$ with policy $\mu$; denote $P^{\mu, \nu^t}$ as the probability of EFGs under policy $\mu$ and $\nu^t$. It is very easy to check by induction over step $h$, that for any $h \in [H]$, and all policy $\mu$ simultaneously:
\begin{align*}
    \tilde{P}^{\mu, t}(x_h, a_h) = P^{\mu, \nu^t}(x_h, a_h) = \sum_{s_h \in x_h}  \sum_{b_h \in \cB} \mu_{1:h}(x_h, a_h) p_{1:h}(s_h) \nu^t_{1:h}(y_h(s_h), b_h).
\end{align*}
This proves that the joint distribution:
\begin{equation} \label{eq:joint_state}
    \tilde{P}^{\mu, t}(x_1, a_1, \ldots, x_H, a_H) = P^{\mu, \nu^t}(x_1, a_1, \ldots, x_H, a_H).
\end{equation}
Finally, the construction of adversarial reward is such that its conditional distribution given $(x_H, a_H)$ is exactly the same as the conditional distribution of the reward in the EFG:
\begin{equation*}
    \tilde{R}_H^t( r = r(s_H, a_H, b_H) |x_H, a_H) = P^{\mu, \nu^t}(r = r(s_H, a_H, b_H) | x_H, a_H),
\end{equation*}
which immediately gives that:
\begin{equation} \label{eq:conditional_reward}
    \tilde{P}^{\mu, t}(r_H | x_1, a_1, \ldots, x_H, a_H) = P^{\mu, \nu^t}(r| x_1, a_1, \ldots, x_H, a_H).
\end{equation}
Combining \eqref{eq:joint_state} and \eqref{eq:conditional_reward}, we finish the proof.
\end{proof}

\subsection{Online-to-batch conversion}
\label{appendix:online-to-batch}

We consider an EFG with $m$ players (e.g. using the definition in Section~\ref{appendix:efg-def}), where each player faces an equivalent Tree-Form Adversarial MDP. For any product policy $\pi=\set{\pi_i}_{i\in[m]}$, let $\ell^{\pi_{-i}}$ denote the expected loss function for the \ith player if that the other players play policy $\pi_{-i}$. We define a correlated policy $\bar\pi$ as a probability distribution over product policies, i.e. $\pi\sim\bar\pi$ gives a product policy $\pi$.

An EFCE of the game is defined as follows~\citep{celli2020no,farina2021simple}.
\begin{definition}[Extensive-form correlated equilibrium]
A correlated policy $\bar\pi$ is an \emph{$\epsilon$-approximate Extensive-Form Correlated Equilibrium} (EFCE) of the EFG if
\begin{align*}
     \max_{i\in[m]} \max_{\phi \in \Phi_{i}^\efce}
    \E_{\pi \sim \wb\pi}\paren{\<\phi \pi_i, \ell^{\pi_{-i}}\> - \<\pi_i, \ell^{\pi_{-i}}\>
    }
    \le \epsilon.
\end{align*}
We say $\bar\pi$ is an (exact) EFCE if the above is equality.
\end{definition}

When the game is played with product policies for $T$ rounds, suppose the product policy at round $t$ is $\pi^t$, the extensive-form trigger regret \eqref{eqn:efceregret} for the \ith player becomes
\begin{align*}
    \efceregret_i(T) = \max_{\phi \in \Phi^\efce_i} \sum_{t=1}^T  \<\phi \pi^t_i-\pi_i^t, \ell^{\pi_{-i}^t}\>.
\end{align*}
The following online-to-batch lemma for EFCE is standard, see e.g.~\citep{celli2020no}.
\begin{lemma}[Online-to-batch for EFCE]\label{lem:online-to-batch}
Let $\{ \pi^t= (\pi_i^t)_{i \in [n]} \}_{t \in [T]}$ be a sequence of product policies for all players over $T$ rounds. Then, for the average (correlated) policy $\bar\pi={\rm Unif}(\{\pi^t\}_{t=1}^T)$ is 
an $\epsilon$-EFCE, where $\epsilon = \max_{i\in[m]} \efceregret_i(T)/T$.
\end{lemma}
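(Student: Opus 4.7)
The plan is to verify the EFCE inequality for $\bar\pi$ by directly expanding both sides of the definition and matching them up against the definition of $\efceregret_i(T)$; there is essentially no nontrivial step, as the correspondence is purely linear.

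First I would fix an arbitrary player $i \in [m]$ and an arbitrary trigger modification $\phi \in \Phi_i^{\efce}$, and expand
\[
\E_{\pi \sim \bar\pi}\bigl[\langle \phi \pi_i - \pi_i,\, \ell^{\pi_{-i}}\rangle\bigr]
= \frac{1}{T}\sum_{t=1}^T \langle \phi \pi_i^t - \pi_i^t,\, \ell^{\pi_{-i}^t}\rangle,
\]
using the fact that $\bar\pi$ is the uniform mixture over the $T$ product policies $\{\pi^t\}_{t=1}^T$ (so the expectation is just a time-average) together with the linearity of the inner product in its first argument. Note here that the loss vector $\ell^{\pi_{-i}^t}$ depends only on the opponents' policies at round $t$, which is exactly the setting the trigger regret is defined against.

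Next I would take the maximum over $\phi \in \Phi_i^{\efce}$ on both sides and match the right-hand side to the trigger regret of player $i$,
\[
\max_{\phi \in \Phi_i^{\efce}} \frac{1}{T}\sum_{t=1}^T \langle \phi \pi_i^t - \pi_i^t,\, \ell^{\pi_{-i}^t}\rangle
= \frac{\efceregret_i(T)}{T},
\]
directly from the definition~\eqref{eqn:efceregret} applied to the $i$th player. Finally, taking the maximum over $i \in [m]$ yields
\[
\max_{i \in [m]}\max_{\phi \in \Phi_i^{\efce}} \E_{\pi \sim \bar\pi}\bigl[\langle \phi \pi_i - \pi_i,\, \ell^{\pi_{-i}}\rangle\bigr]
\le \frac{\max_{i\in[m]} \efceregret_i(T)}{T} = \epsilon,
\]
which is exactly the $\epsilon$-EFCE condition.

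There is no real obstacle here; the lemma is a direct consequence of unwinding definitions, and the only thing to be careful about is that the per-round ``loss'' $\ell^{\pi_{-i}^t}$ used to define $\efceregret_i(T)$ is precisely the expected loss function appearing in the EFCE definition when the opponents' policies are sampled jointly with $\pi_i$ from $\bar\pi$ (a consequence of $\bar\pi$ being supported on \emph{product} policies $\pi^t$, so that conditioning on $\pi = \pi^t$ fixes both $\pi_i$ and $\pi_{-i}$ consistently). Once this is observed the lemma follows in a few lines.
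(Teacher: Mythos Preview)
Your proposal is correct and is exactly the standard argument. The paper itself does not give a proof of this lemma; it states it as standard and cites \citep{celli2020no}, so there is nothing further to compare against.
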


\subsection{Properties}
\label{appendix:efg-properties}

For any $h < h'$ and $x_{h} \in \cX_{h}$, we let $\cC_{h'}(x_{h}, a_{h})\equiv \{ x \in \cX_{h'}:x  \succ (x_{h}, a_{h})  \}$ and $\cC_{h'}(x_{ h}) \equiv \{ x \in \cX_{h'}:x \succeq x_{h}  \} = \cup_{a_{h} \in \cA} \cC_{h'}(x_{h}, a_{h})$ denote the infosets within the $h'$-th step that are reachable from (i.e. children of) $x_{h}$ or $(x_{h}, a_{h})$, respectively. For shorthand, let $\cC(x_{h}, a_{h}) \defeq \cC_{h+1}(x_{h}, a_{h})$ and $\cC(x_{h}) \defeq \cC_{h+1}(x_{h})$ denote the set of immediate children. 

We define $X_{\succeq x_h}$ for any $x_h\in\cX_h$ as %
\begin{align}
    X_{\succeq x_h} \defeq \sum_{h' = h}^H |\cC_{h'}(x_h)|.
\end{align}
It can be interpreted as the number of infosets in the subtree rooted at $x_h$.
\begin{lemma}
The $L^1$ norm of a sequence form is upper bounded by $\lone{\Pi} \le X$.
\end{lemma}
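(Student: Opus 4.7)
The plan is to expand $\lone{\mu}$ as a sum over $(h,x_h,a_h)$ of sequence-form entries and bound the inner sum for each $(h,x_h)$ by $1$, so that the total is bounded by the number of infosets $X=\sum_{h=1}^H X_h$. Throughout, perfect recall is the crucial structural ingredient: it guarantees that every infoset $x_h$ has a unique predecessor pair $(x_{h-1},a_{h-1})$.

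First, I would fix an arbitrary policy $\mu \in \Pi$ and write
\begin{align*}
\lone{\mu} \;=\; \sum_{h=1}^H \sum_{x_h \in \cX_h} \sum_{a_h \in \cA} \mu_{1:h}(x_h,a_h).
\end{align*}
Next, for each fixed $h$ and $x_h\in\cX_h$, let $(x_{h-1},a_{h-1})$ be the unique predecessor pair (with the convention $\mu_{1:0}(\emptyset)=1$ at $h=1$). Using the product form \eqref{eqn:sequence-form-policy} and $\sum_{a_h}\mu_h(a_h\mid x_h)=1$, I would observe
\begin{align*}
\sum_{a_h \in \cA} \mu_{1:h}(x_h, a_h)
\;=\; \mu_{1:h-1}(x_{h-1},a_{h-1}) \cdot \sum_{a_h\in\cA}\mu_h(a_h\mid x_h)
\;=\; \mu_{1:h-1}(x_{h-1},a_{h-1}) \;\le\; 1,
\end{align*}
since each factor in the product defining $\mu_{1:h-1}$ lies in $[0,1]$.

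Summing this bound over $x_h\in\cX_h$ gives $\sum_{x_h,a_h}\mu_{1:h}(x_h,a_h)\le X_h$, and then summing over $h\in[H]$ yields $\lone{\mu}\le \sum_{h=1}^H X_h=X$. Taking the maximum over $\mu\in\Pi$ gives the claim $\lone{\Pi}\le X$. No step here should pose any real obstacle; the only subtlety is invoking perfect recall to justify the telescoping identity for the inner sum over $a_h$, and noting that the bound $\mu_{1:h-1}(x_{h-1},a_{h-1})\le 1$ is slack whenever the predecessor sequence probability is strictly less than one (which is precisely why the inequality in $\lone{\Pi}\le X$ is generally not tight).
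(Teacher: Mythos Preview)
Your proof is correct and takes a genuinely different route from the paper's. The paper argues by bottom-up induction on subtrees, showing the stronger intermediate claim $\lone{\Pi^{x_h}}\le X_{\succeq x_h}$ for every infoset $x_h$ (base case $h=H$, then combining children bounds at each internal node), and specializes to the root at the end. Your argument is more direct: you bound the contribution of each layer $h$ by $X_h$ via the elementary observation $\sum_{a_h}\mu_{1:h}(x_h,a_h)=\mu_{1:h-1}(x_{h-1},a_{h-1})\le 1$, then sum over layers. Your approach is shorter and avoids any induction; the paper's approach buys the subtree-level inequality $\lone{\Pi^{x_g}}\le X_{\succeq x_g}$ as a byproduct, which can be useful elsewhere but is not needed for the stated lemma.
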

\begin{proof}
We prove the claim by induction over the root of the subtree, for $h=H,\cdots,1$. When $h=H$, for each infoset $x_h$, the sequence form is just a probability distribution, which sums up to $\lone{\Pi^{x_h}} = 1 = |X_{\succeq x_h}|$. If the claim holds for $h+1$, consider an infoset $x_h$ in the $h$-th level.  By induction hypothesis we have 
$$
\lone{\Pi^{x_h}} = \max_{a_h}\sum_{x_{h+1} \succeq (x_h,a_h)} \lone{\Pi^{x_{h+1}}} \le \max_{a_h}\sum_{x_{h+1} \succeq (x_h,a_h)} |X_{\succeq x_{h+1}}| \le |X_{\succeq x_h}|.
$$
So the equation above holds for any $x_h$. Setting $x_h = \emptyset$ gives $\| \Pi \|_1 \le X$ which completes the proof. 
\end{proof}

\begin{lemma}
\label{lem:triger-set-card}
We have $|\Phi_0^\efce| \le X A^{\|\Pi\|_1+1}$.
\end{lemma}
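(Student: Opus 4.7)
The plan is to bound $|\Phi_0^\efce|$ by first using the definition in \eqref{eqn:Phi-trig}, which expresses it as a union of sets indexed by triples $(g, x_g, a_g)$ and deterministic subtree policies $v^{x_g} \in \mc{V}^{x_g}$. Since the number of $(g, x_g, a_g)$ triples is at most $XA$, it suffices to show $|\mc{V}^{x_g}| \le A^{\|\Pi\|_1}$ for every $x_g \in \cX$. This reduces the problem to the two sub-tasks: (i) bound the number of deterministic subtree policies $|\mc{V}^{x_g}|$ by $A^{K(x_g)}$ where $K(x_g) \defeq \|\Pi^{x_g}\|_1$, and (ii) bound $K(x_g) \le \|\Pi\|_1$ for all $x_g$.

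For task (i), I would proceed by backward induction on the tree, using the fact that a deterministic subtree policy at $x_g$ is obtained by (a) choosing one action $a_g \in \cA$ at the root $x_g$, and (b) independently choosing one deterministic subtree policy at each immediate child $x_{g+1} \in \cC(x_g, a_g)$. This gives the recursion
\begin{equation*}
|\mc{V}^{x_g}| = \sum_{a_g \in \cA} \prod_{x_{g+1} \in \cC(x_g, a_g)} |\mc{V}^{x_{g+1}}|.
\end{equation*}
A parallel recursion for $K(x_g)$ can be derived from the sequence-form definition: decomposing $\|\mu^{x_g}\|_1$ by conditioning on the action at $x_g$ gives $K(x_g) = 1 + \max_{a_g \in \cA} \sum_{x_{g+1} \in \cC(x_g, a_g)} K(x_{g+1})$. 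Combining the two recursions via the inductive hypothesis $|\mc{V}^{x_{g+1}}| \le A^{K(x_{g+1})}$ and the trivial bound $\sum_{a_g} (\cdot) \le A \cdot \max_{a_g} (\cdot)$ yields $|\mc{V}^{x_g}| \le A^{1 + \max_{a_g} \sum_{x_{g+1}} K(x_{g+1})} = A^{K(x_g)}$.

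For task (ii), given any $\mu^{x_g} \in \Pi^{x_g}$, I would extend it to a full-tree policy $\mu \in \Pi$ that deterministically follows the unique history from the root to $x_g$ (which exists by perfect recall) and then plays $\mu^{x_g}$ on the subtree. A direct calculation of the sequence form shows $\|\mu\|_1 = (g-1) + \|\mu^{x_g}\|_1$, so $K(x_g) \le \|\Pi\|_1 - (g-1) \le \|\Pi\|_1$.

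Putting everything together gives $|\Phi_0^\efce| \le \sum_{g, x_g, a_g} |\mc{V}^{x_g}| \le XA \cdot A^{\|\Pi\|_1} = XA^{\|\Pi\|_1 + 1}$. The only mildly nontrivial step is verifying the matching recursions for $|\mc{V}^{x_g}|$ and $K(x_g)$, but this follows cleanly from the tree structure and the sequence-form representation; there is no serious obstacle beyond careful bookkeeping.
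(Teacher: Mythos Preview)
Your proposal is correct and follows the same overall structure as the paper's proof: bound the number of trigger pairs $(x_g,a_g)$ by $XA$ and bound each $|\mc{V}^{x_g}|$ by $A^{\|\Pi\|_1}$. The only difference is that the paper dispatches the bound $|\mc{V}|\le A^{\|\Pi\|_1}$ by citing Proposition~5.1 of \citet{farina2022kernelized}, whereas you give a self-contained inductive argument with the matching recursions for $|\mc{V}^{x_g}|$ and $K(x_g)=\|\Pi^{x_g}\|_1$, and then embed subtree policies into full-tree policies to get $K(x_g)\le\|\Pi\|_1$. Your route is slightly more informative (it yields the sharper per-subtree bound $|\mc{V}^{x_g}|\le A^{\|\Pi^{x_g}\|_1}$) but is otherwise the same idea made explicit.
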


\begin{proof}
By Proposition 5.1 of \cite{farina2022kernelized}, $\mc{V} \le A^{\|\Pi\|_1}$. Since there are at most $XA$ different infoset-action pair to be trigger, we have $|\Phi_0^\efce| \le X A^{\|\Pi\|_1+1}$.
\end{proof}
\section{Proofs for Section \ref{sec:phi-hedge-implementation} \&~\ref{sec:phi-hedge-dilated-entropy}}
\label{app:phi-hedge-efce}

\subsection{Incremental (OMD) form of Algorithm~\ref{algorithm:efce-ftrl}}
\label{sec:efce-omd}

We first present an incremental update of $(\lambda^{t+1}, m^{t+1} )$ from $(\lambda^{t}, m^{t} )$ as in Algorithm~\ref{algorithm:efce-omd}. We set the initial values of these variables as 
\begin{align}\label{eqn:lambda-m-init}
 \lambda_{x_ga_g}^1 \propto_{x_ga_g} \exp\Big\{    F_{x_g}^0 \Big\}, \quad \quad m^1_{x_ga_g, h}(a_h \vert x_h) \propto_{a_h}\exp\Big\{ \sum_{x_{h+1} \in \cC(x_h, a_h)} F_{x_{h+1}}^0 \Big\},
\end{align}
where for any $x_h \succeq x_g$, $F_{ x_h}^0$ is recursively defined as 
\begin{equation*}
    F_{x_h}^0 \defeq \log \sum_{a_h \in \cA} \exp\Big\{ \sum_{x_{h+1} \in \cC(x_h,a_h)} F_{x_{h+1}}^0 \Big\}. 
\end{equation*}

When the summation is over an empty set $\cC(x_h,a_h)$, the sum should be understood as zero. Here $F_{x_h}^0$ has an intuitive meaning: it is the logarithm of the number of deterministic sequence-form policies starting from $x_h$, and can be computed by the above sum-product formulation recursively. %

Algorithm~\ref{algorithm:efce-omd} is computationally more efficient than Algorithm~\ref{algorithm:efce-ftrl}
 when the loss estimator is sparse. For example, with bandit feedback, we need to update the loss matrix for at most $H$ infoset-action pairs, and thus incur at most $H^3$ operations in Algorithm~\ref{algorithm:efce-omd}. On the contrary, Algorithm~\ref{algorithm:efce-ftrl} requires $O((XA)^2)$ operations to update the policy in each iteration. 
 
 We now prove that Algorithm~\ref{algorithm:efce-omd} and Algorithm~\ref{algorithm:efce-ftrl} are actually equivalent.
 
\begin{algorithm}[t]
\caption{EFCE-OMD (OMD form; equivalent FTRL form in Algorithm \ref{algorithm:efce-ftrl})}
\label{algorithm:efce-omd}
\begin{algorithmic}[1]
\REQUIRE Learning rate $\eta$. 
\STATE Initialize $\lambda_{x_g a_g}^1$, and $m_{x_ga_g, h}^1(a_h \vert x_h)$, for all $(g, x_g, a_g, h, x_h, a_h)$ with $g \le h$ using Eq. (\ref{eqn:lambda-m-init}). 
\FOR{$t = 1, 2, \ldots, T$}
\STATE Compute $\phi^t = \phi(\lambda^t, m^t)$ where $\phi$ is in Eq. (\ref{equation:phi-lambda-m}).
\STATE Compute the policy $\mu^t$, which is a solution of the fixed point equation $\mu = \phi^t \mu$. 
\STATE Receive loss $\ell^t = \{\ell^t_h(x_h, a_h)\}_{(x_h, a_h) \in \cX \times \cA} \in \R^{XA}_{\ge 0}$.
\STATE Compute matrix loss $M^t = \ell^t (\mu^t)^\top \in \R_{\ge 0}^{XA \times XA}$. 
\STATE For each $x_g a_g \in \cX \times \cA$, from the reverse order of $x_h$, compute $m^{t+1}_{x_ga_g, h}(a_h \vert x_h)$ and $\wt{F}^{t}_{x_ga_g, x_h}$
\begin{align*}
& m^{t+1}_{x_ga_g, h}(a_h \vert x_h) \propto_{a_h} m^{t}_{x_ga_g, h}(a_h \vert x_h) \exp\Big\{ - \eta M^t_{x_ha_h, x_ga_g} + \sum_{x_{h+1} \in \cC(x_h, a_h)} \wt{F}^{t}_{x_ga_g, x_{h+1}} \Big\},  \\
&~ \wt{F}^{t}_{x_ga_g, x_h} = \log  \sum_{a_h\in \cA} m^{t}_{x_ga_g, h}(a_h|x_h) \exp\Big\{ -\eta  M^t_{x_ha_h, x_ga_g} + \sum_{x_{h+1} \in \cC(x_h, a_h)} \wt{F}^{t}_{x_ga_g, x_{h+1}} \Big\} ,
\end{align*}
\STATE Compute $\lambda_{x_g a_g}^{t+1}$ as
\begin{align*}
& \lambda_{x_ga_g}^{t+1} \propto_{x_ga_g} \lambda_{x_ga_g}^{t}  \exp\Big\{ - \eta\langle I - E_{\succeq x_g a_g}, M^t\rangle  + \wt{F}^{ t}_{x_ga_g,x_g} \Big\}.  
\end{align*}
    \ENDFOR
 \end{algorithmic}
\end{algorithm}

\begin{lemma}
  \label{lem:efce-omd}
  Given the same sequence of $M^t$, 
  Algorithm~\ref{algorithm:efce-ftrl} and Algorithm~\ref{algorithm:efce-omd} outputs the same  $\lambda^t$ and $m^t$ and thus the same $\phi^t$.
\end{lemma}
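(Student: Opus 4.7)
The plan is to prove the equality of iterates by a double induction: outer induction on $t$, and, within the inductive step, backward induction on $h$ (from $h=H$ down to $h=g$) to establish a key identity linking the OMD quantity $\wt{F}^t_{x_g a_g, x_h}$ to the FTRL quantities $F^t_{x_g a_g, x_h}$ and $F^{t+1}_{x_g a_g, x_h}$.

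For the base case $t = 1$, I would verify that the OMD initialization \eqref{eqn:lambda-m-init} agrees with the FTRL formulas \eqref{equation:mt}--\eqref{equation:lambdat} evaluated with empty sum $\sum_{s=1}^{0} M^s = 0$. Concretely, both $F^0_{x_h}$ (defined in the OMD initialization) and $F^1_{x_g a_g, x_h}$ (defined in \eqref{equation:ft}) satisfy the same leaf-to-root recursion $\log\sum_{a_h}\exp\{\sum_{x_{h+1}\in\cC(x_h,a_h)} \cdot\}$ with no loss term, so $F^1_{x_g a_g, x_h} = F^0_{x_h}$ for every $x_h \succeq x_g$, and hence both algorithms produce the same $(\lambda^1, m^1)$.

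For the inductive step, assume $(\lambda^t, m^t)$ (and thus $\phi^t$, $\mu^t$, $M^t$) agree in both algorithms. The main work is to prove the identity
\begin{equation*}
    \wt{F}^{t}_{x_ga_g, x_h} = F^{t+1}_{x_ga_g, x_h} - F^{t}_{x_ga_g, x_h} \qquad \text{for all } x_h \succeq x_g,
\end{equation*}
by reverse induction on $h$. For the inductive step on $h$, I plug the FTRL form of $m^t_{x_g a_g, h}(a_h|x_h)$, namely $m^t_{x_g a_g, h}(a_h|x_h) = \exp\{-\eta \sum_{s=1}^{t-1} M^s_{x_h a_h, x_g a_g} + \sum_{x_{h+1}\in\cC(x_h,a_h)} F^t_{x_g a_g, x_{h+1}} - F^t_{x_g a_g, x_h}\}$, into the defining recursion for $\wt{F}^t_{x_g a_g, x_h}$. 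After pulling the constant $-F^t_{x_g a_g, x_h}$ out of the $\log\sum$ and using the inner inductive hypothesis to replace $\sum_{x_{h+1}} \wt{F}^t_{x_g a_g, x_{h+1}}$ by $\sum_{x_{h+1}} (F^{t+1} - F^t)_{x_g a_g, x_{h+1}}$, the $F^t$ cross-terms collapse and the remaining expression is precisely $F^{t+1}_{x_g a_g, x_h} - F^t_{x_g a_g, x_h}$ by the recursion \eqref{equation:ft} applied at time $t+1$.

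Given this identity, I would then finish by checking that the OMD update for $m^{t+1}$ matches the FTRL form: substitute the FTRL form of $m^t$ and the identity for $\wt{F}^t_{x_g a_g, x_{h+1}}$ into the OMD incremental rule, and observe that all $F^t$ terms cancel, leaving exactly the exponent appearing in \eqref{equation:mt} with $t$ replaced by $t+1$. The same calculation applied at the root level $x_h = x_g$ yields the match for $\lambda^{t+1}$ from \eqref{equation:lambdat}. Equality of $\phi^{t+1}$ then follows immediately from the defining relation \eqref{equation:phi-lambda-m}. The main obstacle I anticipate is bookkeeping: correctly tracking which factors are absorbed into the $\propto_{a_h}$ proportionality (since those are normalized per $x_h$) versus which genuinely contribute to the $F^t$ or $\wt{F}^t$ recursion --- the explicit normalizers $F^t$ in \eqref{equation:ft} are exactly what make these cancellations telescope cleanly, and keeping the indexing straight across the forward-in-$t$ and backward-in-$h$ inductions is where care is required.
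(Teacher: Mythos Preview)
The proposal is correct and takes essentially the same approach as the paper: both prove the telescoping identity $\wt{F}^{t}_{x_ga_g,x_h}=F^{t+1}_{x_ga_g,x_h}-F^{t}_{x_ga_g,x_h}$ by a double induction (forward in $t$, backward in $h$), and then read off equality of $(\lambda^t,m^t)$. Your indexing is in fact cleaner than the paper's write-up, and your explicit treatment of the base case $t=1$ via $F^1_{x_ga_g,x_h}=F^0_{x_h}$ spells out a step the paper only asserts ``by definition.''
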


\begin{proof}
We only need to prove for any $x_ga_g,x_h$ and $t$, $F^t_{x_ga_g, x_h} = \sum_{s=1}^t\wt{F}^{s}_{x_ga_g, x_h}$. Then $\lambda^t$ and $m^t$ will be the same in Algorithm~\ref{algorithm:efce-ftrl} and Algorithm~\ref{algorithm:efce-omd}.

We prove the above claim by induction. For the base case, the claim clearly holds if $h=H+1$ or $t=1$ by definition. Assume this holds at $t-1$ and $h+1$, then at the $h$-th step in Algorithm~\ref{algorithm:efce-omd}, 
\begin{align*}
    \wt{F}^{t}_{x_ga_g, x_h} = &\log  \sum_{a_h\in \cA} m^{t}_{x_ga_g, h}(a_h|x_h) \exp\Big\{ -\eta  M^t_{x_ha_h, x_ga_g} + \sum_{x_{h+1} \in \cC(x_h, a_h)} \wt{F}^{t}_{x_ga_g, x_{h+1}} \Big\}\\
    =&\log  \sum_{a_h\in \cA}  \exp\Big\{ -\eta  \sum_{s=1}^tM^s_{x_ha_h, x_ga_g} + \sum_{x_{h+1} \in \cC(x_h, a_h)} F^{t}_{x_ga_g, x_{h+1}} \Big\}\\
    &-\log  \sum_{a_h\in \cA}  \exp\Big\{ -\eta  \sum_{s=1}^{t-1}M^s_{x_ha_h, x_ga_g} + \sum_{x_{h+1} \in \cC(x_h, a_h)} F^{t-1}_{x_ga_g, x_{h+1}} \Big\}\\
    =& F^{t}_{x_ga_g, x_h}-F^{t-1}_{x_ga_g, x_h}.
\end{align*}

Thus $F^{t}_{x_ga_g, x_h} =  \sum_{s=1}^t\wt{F}^{s}_{x_ga_g, x_h}$. We completes the proof by noticing at $H+1$ step, $F^{t}_{x_ga_g, x_{H+1}} = \wt{F}^{t}_{x_ga_g, x_{H+1}} = 0$.
\end{proof}

\subsection{Proof of Lemma \ref{lem:free-energy-trick}}
\label{appendix:proof-free-energy-trick}

By Line~\ref{line:phi-hedge-p-update} of Algorithm~\ref{algorithm:phi-regret-minimization}, we have
\begin{equation}
p_\phi^{t+1} = \frac{p_\phi^{t} \cdot \exp\{ - \eta \langle \phi \mu^t, \ell^t \rangle  \}}{\sum_{\phi'} p_{\phi'}^{t} \cdot \exp\{ - \eta \langle {\phi'} \mu^t, \ell^t \rangle  \}} =  \frac{p_\phi^{t} \cdot \exp\{ - \eta \langle \phi,M^t \rangle  \}}{\sum_{\phi'} p_{\phi'}^{t} \cdot \exp\{ - \eta \langle \phi',M^t \rangle  \}}.
\end{equation}

Repeating this update and using the uniform initialization, we have 
\[
p_\phi^{t+1} = \frac{\exp\{ - \eta \langle \phi,\sum_{s=1}^t M^s \rangle \}}{\sum_{\phi'} \exp\{ - \eta \langle \phi',\sum_{s=1}^t M^s \rangle \}}. 
\]
As a result, we have
\begin{equation}
\phi^{t} = \sum_{\phi}p_\phi^{t}\phi = \frac{\sum_{\phi}\exp\{ - \eta \langle \phi,\sum_{s=1}^{t-1}M^s \rangle \}\phi}{\sum_{\phi}\exp\{ - \eta \langle \phi,\sum_{s=1}^{t-1}M^s \rangle \}} = -\grad F^{\Phi_0}\paren{ \eta\sum_{s=1}^{t-1} M^s }.
\end{equation}
This proves the lemma. 
\qed

\subsection{Proof of Lemma \ref{lem:evaluate-partition-function-gradient}}
\label{appendix:proof-evaluate-partition-function-gradient}

For any $x_h \succeq x_g$, we define $F_{x_g a_g, x_h}(M)$ by 
\begin{align*}
F_{x_g a_g, x_h}(M) \defeq \log \sum_{m_{x_ga_g}\in\mc{V}^{x_h}} \exp(-\langle m_{x_ga_g} e_{x_ga_g}^\top, M\rangle ).
\end{align*}

Note that for any $\phi \in \Phi_0^\trig$, there exists a unique $(g, x_g, a_g, m_{x_g a_g}) \in [H] \times \cX \times \cA \times \mc{V}^{x_g}$ such that $\phi = \phi_{x_g a_g \to m_{x_g a_g}}$. As a consequence, we have 
\begin{align*}
   F^{\efce}(M) =&~ \log \sum_{\phi\in\Phi^{\efce}_0} \exp(-\langle\phi, M\rangle ) \\
   =&~ \log\sum_{g,x_g,a_g} \sum_{m_{x_ga_g}\in\mc{V}^{x_g}} \exp(-\langle\phi_{x_ga_g\to m_{x_ga_g}}, M\rangle ) \\
   =& \log\sum_{g,x_g,a_g} \sum_{m_{x_ga_g}\in\mc{V}^{x_g}} \exp(-\langle I - E_{\succeq x_ga_g} + m_{x_ga_g} e_{x_ga_g}^\top, M\rangle )\\
   =&~ \log \sum_{g, x_g, a_g} \exp\Big\{  - \langle I - E_{\succeq x_ga_g}, M\rangle  + F_{x_g a_g, x_g}(M) \Big\}.
\end{align*}

It remains to evaluate $F_{x_g a_g, x_h}(M)$ recurrently, which is handled by the structure of $\mc{V}^{x_h}$ as follows:
\begin{align*}
&F_{x_g a_g, x_h}(M) \\
=& \log \sum_{m_{x_ga_g}\in\mc{V}^{x_h}} \exp(-\langle m_{x_ga_g} e_{x_ga_g}^\top, M\rangle ) \\
=& \log \sum_{a_h \in \cA}  \exp(- M_{x_ha_h, x_g a_g}) \prod_{x_{h+1} \in \cC(x_ha_h)} \sum_{m_{x_{h+1}a_{h+1}}\in\mc{V}^{x_{h+1}}}  \exp(-\langle m_{x_{h+1}a_{h+1}} e_{x_ga_g}^\top, M\rangle )\\
=& \log \sum_{a_h \in \cA} \exp\Big\{ - M_{x_ha_h, x_g a_g} + \sum_{x_{h+1} \in \cC(x_ha_h)} \log \sum_{m_{x_{h+1}a_{h+1}}\in\mc{V}^{x_{h+1}}} \exp(-\langle m_{x_{h+1}a_{h+1}} e_{x_ga_g}^\top, M\rangle ) \Big\}\\
=& \log \sum_{a_h \in \cA} \exp\Big\{ - M_{x_ha_h, x_g a_g} + \sum_{x_{h+1} \in \cC(x_ha_h)} F_{x_g a_g, x_{h+1}}(M) \Big\}.
\end{align*}
This proves Eq. (\ref{eqn:F-trig-M}) and (\ref{eqn:F-M-FTRL-form}). 

Calculating the gradient, we have
\begin{align}
    -\grad F^{\efce}(M) =& \frac{\sum_{g, x_g, a_g}\exp\Big\{  - \langle I - E_{\succeq x_ga_g}, M\rangle  + F_{x_g a_g, x_g}(M)  \Big\}\big[I - E_{\succeq x_ga_g}-\grad F_{x_g a_g, x_h}(M)\big]}{\sum_{g, x_g, a_g} \exp\Big\{  - \langle I - E_{\succeq x_ga_g}, M\langle \rangle + F_{x_g a_g, x_g}(M)  \Big\}} \nonumber \\
    =& \sum_{g, x_g, a_g} \lambda_{x_g, a_g} \big[I - E_{\succeq x_ga_g}-\grad F_{x_g a_g, x_h}(M)\big].\label{eqn:grad_F-tr}
\end{align}

It remains to compute $\grad F_{x_g a_g, x_h}(M)$. By the recurrent formula, we have
\begin{align*}
 &-\grad F_{x_g a_g, x_h}(M)  \\
 =& \frac{\sum_{a_h \in \cA}\exp\Big\{  - M_{x_ha_h, x_g a_g} +  \sum_{x_{h+1} \in \cC(x_ha_h)} F_{x_g a_g, x_{h+1}}(M) \Big\}\big[e_{x_ha_h}e_{x_ga_g}^\top -\sum_{x_{h+1} \in \cC(x_ha_h)} \grad F_{x_g a_g, x_{h+1}}(M) \big]}{\sum_{a_h \in \cA} \exp\Big\{  - M_{x_ha_h, x_g a_g} +  \sum_{x_{h+1} \in \cC(x_ha_h)} F_{x_g a_g, x_{h+1}}(M) \Big\}} \\ 
 =&\sum_{a_h \in \cA} m_{x_ga_g, h}(a_h \vert x_h) \big[e_{x_ha_h}e_{x_ga_g}^\top +\sum_{x_{h+1} \in \cC(x_ha_h)} (-\grad F_{x_g a_g, x_{h+1}})(M) \big].
\end{align*}
This gives a recursion formula for $- \grad F_{x_g a_g, x_h}(M)$. Solving this recursion formula, we get
$$
-\grad F_{x_g a_g, x_h}(M) = m_{x_ga_g} e_{x_g a_g}^\top,
$$
Plugging this into Eq. (\ref{eqn:grad_F-tr}) completes the proof. 
\qed

\subsection{Runtime of Algorithm~\ref{algorithm:efce-ftrl}}
\label{appendix:runtime}

Here we explain how Lemma~\ref{lem:evaluate-partition-function-gradient} and its execution in Algorithm~\ref{algorithm:efce-ftrl} is an $O(X^2A^2)$ time (in floating-point operations) efficient implementation of $-\grad F^{\trig}(M)$ for any matrix $M\in\R^{XA\times XA}$.

First, the function value $F^{\trig}(M)$ can be recursively evaluated using~\eqref{eqn:F-trig-M} \&~\eqref{eqn:F-M-FTRL-form}, where we first evaluate~\eqref{eqn:F-M-FTRL-form} for any $x_ga_g\in\cX\times\cA$ recursively in a bottom-up fashion over $\set{x_h:x_h\succeq x_g}$ (i.e. the subtree rooted at $x_g$) up until $x_h=x_g$, and then plug in the resulting values of $F_{x_ga_g, x_g}(M)$ into~\eqref{eqn:F-trig-M} to obtain $F^{\trig}(M)$. This process costs $O(XA)$ operations for each $x_g a_g$, so in total costs $O(X^2 A^2)$ operations. Second,~\eqref{eqn:lambda-M-FTRL-form}-\eqref{equation:phi-lambda-m} show that the gradient can be obtained without much extra cost: By~\eqref{equation:phi-lambda-m}, $-\grad F^{\trig}(M)$ is determined by the parameters $(\lambda, m)$, which then by~\eqref{eqn:lambda-M-FTRL-form} \&~\eqref{eqn:m-M-FTRL-form} are exactly the ratios of the recursive log-sum-exps which we already evaluated in the previous step, and thus can be directly yielded (with cost of the same-order) while evaluating $F^{\trig}(M)$. So the total runtime of the recursive computations in Lemma~\ref{lem:evaluate-partition-function-gradient} (i.e. Algorithm~\ref{algorithm:efce-ftrl}) is $O(X^2 A^2)$.

\subsection{Dilated entropy and dilated KL}
\label{appendix:def-dilated}
We introduce the subtree dilated entropy and subtree dilated KL divergence, a variant of dilated entropy and dilated KL divergence introduced in \cite{hoda2010smoothing,kroer2015faster,kozuno2021model}. Here the modification we made is that we allow these quantities to be rooted at any infoset $x_g$. The original version is recovered when the choose the full game tree as the subtree. These quantities were further used to define the trigger dilated entropy and trigger dilated KL divergence as in Section \ref{sec:phi-hedge-dilated-entropy}. 

\begin{definition}[Dilated entropy and Dilated KL divergence]
The dilated entropy $H_{x_g}$ rooted at $x_g$ of subtree policy $\mu^{x_g}\in\Pi^{x_g}$ is defined as
\begin{align}
H_{x_g}(\mu^{x_g}) \defeq \sum_{h=g}^H \sum_{(x_h, a_h) \succeq x_g} \mu^{x_g}_{g:h}(x_h, a_h) \log \mu^{x_g}_h(a_h | x_h).
\end{align}

The dilated KL divergence $D_{x_g}$ rooted at $x_g$ between two subtree policies $\mu^{x_g}, \nu^{x_g}\in\Pi^{x_g}$ is defined as
\begin{align}
    D_{x_g}(\mu^{x_g} \| \nu^{x_g}) \defeq \sum_{h=g}^H \sum_{(x_h, a_h) \succeq x_g} \mu^{x_g}_{g:h}(x_h, a_h) \log \frac{\mu^{x_g}_h(a_h | x_h)}{\nu^{x_g}_h(a_h | x_h)}.
\end{align}
\end{definition}

\subsection{Proof of Lemma \ref{lem:equivalence-algorithm}}
\label{appendix:proof-equivalence-algorithm}

We check solving optimization problem~\eqref{equation:phit-H-form} will result in exactly the same form of Algorithm~\ref{algorithm:efce-ftrl}. The OMD form~\eqref{equation:phit-D-form} is similar.

Using the definition of $H^{\efce}(\lambda, m)$, the objective function in \eqref{equation:phit-H-form} can be written as 
\begin{align*}
   H(\lambda )+ \sum_{g,x_ga_g}\lambda_{x_ga_g}\left[\eta\langle I - E_{\succeq x_g a_g}, \sum_{s=1}^tM^s\rangle  +\eta\langle m_{x_ga_g}, \sum_{s=1}^tM^s_{\cdot,x_ga_g}\rangle +H_{x_g}(m_{x_g a_g})\right].
\end{align*}

First fix $\lambda_{\cdot}$ and consider $m_{x_ga_g}$, which is just to minimize 
$
    \eta\langle m_{x_ga_g},\sum_{s=1}^tM^s_{\cdot,x_ga_g}\rangle  +H_{x_g}(m_{x_g a_g} )
$.

This is similar to  form studied in Appendix B of \cite{kozuno2021model} (or see Lemma~\ref{lem:dilated-ent-optimization} for a full proof), which implies that, the optimum is achieved at 
\begin{equation*}
    m^{t+1}_{x_ga_g, h}(a_h \vert x_h) \propto_{a_h} \exp\Big\{  \sum_{s=1}^t \big[ - \eta M^s_{x_ha_h, x_ga_g} + \sum_{x_{h+1} \in \cC(x_h, a_h)} F^{t}_{x_ga_g, x_{h+1}} \big] \Big\},
\end{equation*}
where
\begin{align*}
 F^{t}_{x_ga_g, x_h} =  \log  \sum_{a_h\in \cA}  \exp\Big\{ \sum_{s=1}^t \big[ -\eta M^s_{x_ha_h, x_ga_g}+ \sum_{x_{h+1} \in \cC(x_h, a_h)} F^{t}_{x_ga_g, x_{h+1}} \big]\Big\}.   
\end{align*}

Plug in the optimal $m_{x_ga_g}$, the object now becomes
\begin{align*}
  H(\lambda )+ \sum_{g,x_ga_g}\lambda_{x_ga_g}\left[\eta\langle I - E_{\succeq x_g a_g}, \sum_{s=1}^tM^s\rangle  -F^{t}_{x_ga_g, x_g}\right].
\end{align*}

This is a standard KL-regularized linear optimization problem on simplex. The optimum is achieved at 
\begin{equation*}
    \lambda_{x_ga_g}^{t+1} \propto_{x_ga_g} \exp\Big\{   - \eta\langle I - E_{\succeq x_g a_g}, \sum_{s=1}^tM^s\rangle  + F^{ t}_{x_ga_g,x_g}  \Big\}.
\end{equation*}
This gives the update of $\lambda^{t+1}$ and $m^{t+1}$ as in Algorithm \ref{algorithm:efce-ftrl}. This completes the proof. 
\qed

\section{Proofs for Section~\ref{sec:phi-hedge-regret}}

\subsection{Proof of Theorem \ref{thm:iteration-complexity-full}}\label{app:iteration-complexity-full-proof}

Using regret bound of $\Phi$-Hedge algorithm (Lemma \ref{lemma:regret-bound-phi-hedge}), we get 
\begin{align*}
    \efceregret(T) \le&~ \frac{\log |\Phi_0^\efce|}{\eta} + \frac{\eta}{2} \sum_{t=1}^T \sum_{\phi \in \Phi_0^\trig} p_\phi^{t} \paren{\langle \phi \mu^t, \ell^t \rangle}^2 \\
    \overset{(i)}{\le} &~ \frac{\log |\Phi_0^\efce|}{\eta} + \frac{\eta}{2} \sum_{t=1}^T \sum_{\phi \in \Phi_0^\trig} p_\phi^t H^2 = \frac{\log |\Phi_0^\efce|}{\eta} + \frac{\eta H^2 T}{2} .
\end{align*}
Here, (i) uses $\langle \phi \mu^t, \ell^t \rangle \in [0,H]$. Note that $\Phi^{\efce}_0 \defeq \bigcup_{g,x_ga_g} \bigcup_{v^{x_g}\in\mc{V}^{x_g}} \set{\phi_{x_ga_g\to v^{x_g}}}$ has cardinality upper bounded by $|\Phi_0^\efce| \le X A^{\|\Pi\|_1+1}$ by Lemma~\ref{lem:triger-set-card}. Substitute this into the regret bound, we have
\begin{align*}
      \efceregret(T) \le \frac{\log (X A^{\|\Pi\|_1+1})}{\eta} + \frac{\eta H^2 T}{2} 
      \le \frac{2\|\Pi\|_1\iota}{\eta} + \frac{\eta H^2 T}{2},
\end{align*}
where $\iota = \log(XA)$ is a log-term.
Choosing $\eta = 2\sqrt{\|\Pi\|_1 \iota/(H^2T)}$ gives $\efceregret(T) \le2\sqrt{H^2\|\Pi\|_1\iota T} $, which completes the proof. 
\qed

\subsection{Proof of Theorem \ref{thm:phi-hedge-efce-bandit}}\label{app:proof-phi-hedge-efce-bandit}

For the loss estimator $\wt \ell$, we have the following lemma (see Lemma 3 in \citet{kozuno2021model}):
\begin{lemma}\label{lemma:lemma3-in-kozuno}
Let $\delta \in (0,1)$ and $\gamma \in (0, \infty)$. Fix $h \in [H]$, and let $\alpha(x_h, a_h)\in [0, 2\gamma]$ be some constant for each $(x_h, a_h) \in \cX_h \times \cA$. Then with probability at least $1-\delta$, we have
\begin{align*}
    \sum_{t=1}^T \sum_{x_h\in \cX_h, a_h \in \cA} \alpha(x_h, a_h) \paren{\wt\ell_h^t(x_h, a_h) - \ell_h^t(x_h, a_h)} \le \log \frac{1}{\delta}.
\end{align*}
\end{lemma}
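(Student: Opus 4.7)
This inequality is the standard concentration bound for the implicit-exploration (IX) loss estimator used in the form stated here by \citet{kozuno2021model}. The plan is to build a multiplicative supermartingale out of the exponentiated partial sums and apply Markov's inequality.

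Concretely, let $\cF_{t-1}$ denote the sigma-field capturing the interaction history through round $t-1$ together with the adversary's choices $(p_0^t, \{p_h^t\}_h, \{R_h^t\}_h)$ at round $t$, and define the one-step contribution
\begin{align*}
S_t \defeq \sum_{x_h \in \cX_h,\, a_h \in \cA} \alpha(x_h, a_h)\bigl( \wt\ell_h^t(x_h, a_h) - \ell_h^t(x_h, a_h) \bigr).
\end{align*}
Note that $\sum_{x_h, a_h} \alpha(x_h, a_h)\,\ell_h^t(x_h, a_h)$ is $\cF_{t-1}$-measurable since $\alpha$ is a deterministic constant and $\ell_h^t$ is fixed by the adversary before round $t$. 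The core claim is the one-step MGF bound $\E[\exp(S_t) \mid \cF_{t-1}] \le 1$, equivalently $\E\bigl[ \exp\bigl( \sum_{x_h, a_h} \alpha(x_h, a_h)\,\wt\ell_h^t(x_h, a_h) \bigr) \bigm| \cF_{t-1} \bigr] \le \exp\bigl( \sum_{x_h, a_h} \alpha(x_h, a_h)\,\ell_h^t(x_h, a_h) \bigr)$. Granted this, $M_T \defeq \exp\bigl( \sum_{t=1}^T S_t \bigr)$ satisfies $\E[M_T] \le 1$ by iterated conditioning, so Markov's inequality gives $\Pr\bigl[ \sum_{t=1}^T S_t > \log(1/\delta) \bigr] = \Pr[M_T > 1/\delta] \le \delta$, which is exactly the statement of the lemma.

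The technical heart, and the main obstacle, is verifying the one-step MGF bound. The key structural observation is that at step $h$ of round $t$ the learner traverses exactly one pair $(x_h^t, a_h^t)$, so the sum inside the exponential collapses to the single non-zero term $\alpha(x_h^t, a_h^t)(1 - r_h^t)/(\mu_{1:h}^t(x_h^t, a_h^t) + \gamma)$, which lies in $[0, 2]$ under the hypothesis $\alpha \le 2\gamma$ (since the denominator is at least $\gamma$). One then combines: (i) a Taylor-type inequality of the form $\exp(z) \le 1 + z + c\,z^2$ valid on this bounded range; (ii) the downward-bias identity $\E[\wt\ell_h^t(x_h, a_h) \mid \cF_{t-1}] = \ell_h^t(x_h, a_h)\cdot \mu_{1:h}^t(x_h, a_h)/(\mu_{1:h}^t(x_h, a_h) + \gamma) \le \ell_h^t(x_h, a_h)$, which follows immediately from \eqref{equation:l-definition} and the definition of $\wt\ell_h^t$; and (iii) the key cancellation whereby the IX denominator $\mu_{1:h}^t(x_h, a_h) + \gamma$, together with the bound $\alpha(x_h, a_h) \le 2\gamma$, absorbs the second-order variance term into the first-order bias. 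Finishing with $1 + u \le \exp(u)$ produces the MGF bound, which closes the martingale argument. The main challenge is getting the constants to line up so that the variance term is exactly dominated rather than merely controlled up to a constant factor, which is where the choice of bonus shape $\mu_{1:h}^t + \gamma$ (rather than, say, $\max(\mu_{1:h}^t, \gamma)$) in the estimator \eqref{eq:bandit-loss-estimator} becomes essential.
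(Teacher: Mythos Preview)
The paper does not give its own proof of this lemma; it simply cites Lemma~3 of \citet{kozuno2021model}. Your proposal correctly reproduces the standard IX concentration argument from that reference (originating with Neu, 2015): build the exponential supermartingale $\exp\bigl(\sum_{s\le t} S_s\bigr)$, use the single-visited-pair structure together with $\alpha\le 2\gamma$ to get the one-step MGF bound, and finish with Markov's inequality---so your approach is exactly the one underlying the cited result.
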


\begin{proof}[Proof of Theorem \ref{thm:phi-hedge-efce-bandit}]
We have
\begin{align*}
    \efceregret(T) =& \max_{\phi \in \Phi^\efce} \sum_{t=1}^T \langle \mu^t - \phi \mu^t, \ell^t \rangle \\
    \le & \underbrace{\sum_{t=1}^T \langle \mu^t , \ell^t - \wt \ell^t \rangle}_{\rm BIAS_1}   
    + \underbrace{\max_{\phi \in \Phi^\efce}\sum_{t=1}^T \langle \phi \mu^t, \wt \ell^t -  \ell^t \rangle }_{\rm BIAS_{2}}
    + \underbrace{\max_{\phi \in \Phi^\efce}\sum_{t=1}^T \langle \mu^t - \phi \mu^t, \wt \ell^t \rangle}_{\rm REGRET}.
\end{align*}

We use the following three lemmas to bound the terms above respectively. In these lemmas, $\iota = \log (3XA/\delta)$ is a log factor. 
\begin{lemma}[Bound on ${\rm BIAS}_1$]
  \label{lemma:bias_1}
  With probability at least $1-\delta/3$, we have
  $$
  {\rm BIAS}_1 \le H\sqrt{2T\iota}+\gamma XAT.
  $$
\end{lemma}

\begin{lemma}[Bound on ${\rm BIAS}_2$]
  \label{lemma:bias_2}
  With probability at least $1-\delta/3$, we have
  $$
  {\rm BIAS}_2 \le \|\Pi\|_1\iota/\gamma.
  $$
\end{lemma}

\begin{lemma}[Bound on ${\rm REGRET}$]
  \label{lemma:regret_sample}
  With probability at least $1-\delta/3$, we have
  $$
  {\rm REGRET} \le \log|\Phi_0^\efce|/\eta+\eta HXAT+\eta HXA\iota/\gamma..
  $$
\end{lemma}

Lemma \ref{lemma:bias_1}, \ref{lemma:bias_2}, and \ref{lemma:regret_sample} bound bias terms and regret term respectively. Using these lemmas, we have with probability at least $1-\delta$,%
\begin{align*}
    \efceregret(T) \le \frac{\log|\Phi_0^\efce|}{\eta} + \eta HXAT + \eta HXA \iota/ \gamma + \|\Pi\|_1 \iota /\gamma + H\sqrt{2T \iota} + \gamma XAT. 
\end{align*}
By Lemma~\ref{lem:triger-set-card}, $|\Phi^\efce_0| \le XA^{\|\Pi\|_1 + 1}$. We further have
$$
\efceregret(T) \le \frac{2\|\Pi\|_1 \iota}{\eta} + \eta HXAT + \eta HXA \iota/ \gamma + \|\Pi\|_1 \iota /\gamma + H\sqrt{2T \iota} + \gamma XAT.
$$
Choosing $\gamma = \sqrt{\|\Pi\|_1\iota/(XAT)}$ and $\eta = \sqrt{\|\Pi\|_1 \iota/(HXAT)}$ gives
\begin{align*}
    \efceregret(T) \le &~ 5\sqrt{ HXA\|\Pi\|_1T \iota} + XA \iota \sqrt{H}  + H\sqrt{2T\iota} \\
    \le &~ \mc{O}(\sqrt{HXA\|\Pi\|_1 \iota \cdot T} + XA\iota \sqrt{H}),
\end{align*}
where we uses $\|\Pi\|_1 \ge H$.
Notice that there is a “trivial” bound $\efceregret(T) \le HT$.
For $T \ge XA\iota/\|\Pi\|_1$,  we have $XA\iota\sqrt{H} \le \sqrt{HXA\|\Pi\|_1 \iota \cdot T}$, which gives $\efceregret \le \mc{O}(\sqrt{HXA\|\Pi\|_1 \iota \cdot T})$; For $T \le XA\iota/\|\Pi\|_1$,  we have $HT \le \sqrt{HXA\|\Pi\|_1 \iota \cdot T}$, which gives $\efceregret \le HT \le  \mc{O}(\sqrt{HXA\|\Pi\|_1 \iota \cdot T})$. Therefore, we always have
\begin{align*}
    \efceregret \le \mc{O}(\sqrt{HXA\|\Pi\|_1 \iota \cdot T}).
\end{align*}
This completes the proof. 
\end{proof}

Here, we give the proofs of the lemmas we used above.

\begin{proof}[Proof of Lemma \ref{lemma:bias_1}]
We further decompose $\textrm{BIAS}_1$ to two terms by 
$$
{\rm BIAS}_1 = \sum_{t=1}^T \<\mu^t,\ell^t - \wt\ell^t\>=\underset{\left( A \right)}{\underbrace{\sum_{t=1}^T{\left< \mu ^t,\ell ^t-\mathbb{E}\left\{ \widetilde{\ell }^t|\mathcal{F}^{t-1} \right\} \right>}}}+\underset{\left( B \right)}{\underbrace{\sum_{t=1}^T{\left< \mu ^t,\mathbb{E}\left\{ \widetilde{\ell }^t|\mathcal{F}^{t-1} \right\} -\widetilde{\ell }^t \right>}}}.
$$

To bound $(A)$, plug in the definition of loss estimator, 
\begin{align*}
    &~\sum_{t=1}^T{\left< \mu ^t,\ell ^t-\mathbb{E}\left\{ \widetilde{\ell }^t|\mathcal{F}^{t-1} \right\} \right>}\\
    =&~\sum_{t=1}^T{\sum_{h=1}^H{\sum_{x_h,a_h}{\mu_{1:h}^{t}(x_h,a_h)\left[ \ell_h ^t(x_h,a_h)-\frac{\mu _{1:h}^{t}(x_h,a_h)\ell_h ^t(x_h,a_h)}{\mu _{1:h}^{t}(x_h,a_h)+\gamma } \right]}}}
\\
=&~\sum_{t=1}^T{\sum_{h=1}^H{\sum_{x_h,a_h}{\mu_{1:h}^{t}(x_h,a_h)\ell_h ^t(x_h,a_h)\left[ \frac{\gamma }{\mu _{1:h}^{t}(x_h,a_h)+\gamma } \right]}}}
\\
\le&~ \gamma  \sum_{t=1}^T{\sum_{h=1}^H{\sum_{x_h,a_h}{ \ell_h^t(x_h, a_h)}}} \le \gamma XAT,
\end{align*}
where the last inequality is by $\ell_h^t(x_h, a_h) \in [0, 1] $.

To bound $(B)$, first notice 
\begin{align*}
    \left< \mu ^t,\widetilde{\ell }^t \right> =&~{\sum_{h=1}^H{\sum_{x_h,a_h}{\mu  _{1:h}^{t}(x_h,a_h)\frac{\indic{(x_h^t, a_h^t)=(x_h, a_h) } \cdot (1 - r_h^t)}{\mu _{1:h}^{t}(x_h,a_h)+\gamma } }}}
\\
\le&~ \sum_{h=1}^H{\sum_{x_h,a_h}{\ones\left\{ x_h=x_{h}^{t},a_h=a_{h}^{t} \right\}}}=\sum_{h=1}^H{1}=H.
\end{align*}
Then by Azuma-Hoeffding, with probability at least $1-\delta/3$, we have
$$
\sum_{t=1}^T{\left< \mu ^t,\mathbb{E}\left\{ \widetilde{\ell }^t|\mathcal{F}^{t-1} \right\} -\widetilde{\ell }^t \right>}\le H\sqrt{2T\log(3/\delta)} \le H\sqrt{2T\iota}.
$$
Combining the bounds for (A) and (B) gives the desired result.
\end{proof}

\begin{proof}[Proof of Lemma \ref{lemma:bias_2}] %

We have with probability at least $1-\delta/3$,
\begin{align*}
    & {\rm BIAS}_2 = \max_{\phi\in\Phi^\efce} \sum_{t=1}^T{\left< \phi\mu^t ,\widetilde{\ell }^t-\ell ^t \right>}
\\
=& \max_{\phi\in\Phi^\efce} \sum_{t=1}^T{\sum_{h=1}^H{\sum_{x_h,a_h}{(\phi \mu^t) _{1:h}(x_h,a_h)\left[ \wt\ell_h^t(x_h,a_h)-\ell_h ^t(x_h,a_h) \right]}}}
\\
=& \max_{\phi\in\Phi^\efce}  \sum_{h=1}^H{\sum_{x_h,a_h}{\frac{(\phi \mu^t) _{1:h}(x_h,a_h)}{\gamma }\sum_{t=1}^T{\gamma \left[ \wt\ell_h^t(x_h,a_h)-\ell_h ^t(x_h,a_h) \right]}}}
\\
\overset{\left( i \right)}{\le} &~\frac{\log \left( 3 XA/\delta \right)}{\gamma}\max_{\phi\in\Phi^\efce}\sum_{h=1}^H   {\sum_{x_h,a_h}{(\phi \mu^t)_{1:h}(x_h,a_h)}}
\\
\le &~\|\Pi\|_1\iota/\gamma ,
\end{align*}
where $(i)$ is by applying Lemma \ref{lemma:lemma3-in-kozuno} for each $(x_h,a_h)$ pair. To be more specific, we choose $\alpha(x_h',a_h') = \gamma\indic{(x_h',a_h') = (x_h, a_h)}$, then Lemma \ref{lemma:lemma3-in-kozuno} yields that
\begin{align*}
    \sum_{t=1}^T \gamma \left[ \wt\ell_h^t(x_h,a_h)-\ell_h ^t(x_h,a_h) \right] \le \log\frac{3XA}{\delta}
\end{align*}
with probability at least $1-\delta/(3XA)$. Then taking union bound gives the inequality in $(i)$.%
\end{proof}

\begin{proof}[Proof of Lemma \ref{lemma:regret_sample}]
Note that by Lemma \ref{lemma:regret-bound-phi-hedge}, we have
\begin{align*}
    \textrm{REGRET} \le \frac{\log \vert \Phi_0^\trig\vert}{\eta} + \frac{\eta}{2} \sum_{t=1}^T{\sum_{\phi \in \Phi_0^\trig} {p_{\phi}^{t}}(\langle \phi \mu^t, \wt\ell^t \rangle)^2}.
\end{align*}

To bound the second term, we have 
\begin{align*}
  &\sum_{t = 1}^T \sum_{\phi \in \Phi_0^\trig} {p_\phi^{t}}(\langle \phi \mu^t, \wt\ell^t \rangle)^2 \\
  \le&~ 2\sum_{h'\ge h} \sum_{t = 1}^T  \sum_{x_h,a_h} \sum_{(x_{h'},a_{h'}) \in \cC_{h'}(x_h, a_h)} \sum_{\phi \in \Phi_0^\trig} p_\phi^t (\phi \mu^t)_{1:h}(x_h,a_h) (\phi \mu^t)_{1:h'}(x_{h'},a_{h'})  \wt \ell_{h}^t(x_{h},a_{h})\wt \ell_{h'}^t(x_{h'},a_{h'})\\
  \le&~ 2\sum_{h'\ge h} \sum_{t = 1}^T  \sum_{x_h,a_h} \sum_{(x_{h'},a_{h'}) \in \cC_{h'}(x_h, a_h)} \sum_{\phi \in \Phi_0^\trig} \frac{p_\phi^t(\phi \mu^t)_{1:h}(x_h,a_h)}{\mu _{1:h}^{t}(x_h,a_h) + \gamma} \wt \ell_{h'}^t(x_{h'},a_{h'})\\
 \le&~ 2\sum_{h'\ge h}  \sum_{t=1}^T  \sum_{x_h,a_h} \sum_{(x_{h'},a_{h'})\in \cC_{h'}(x_h, a_h)} \sum_{\phi \in \Phi_0^\trig} \frac{p_\phi^t(\phi \mu^t)_{1:h}(x_h,a_h)}{\mu _{1:h}^{t}(x_h,a_h)} \wt \ell_{h'}^t(x_{h'},a_{h'})\\
 \overset{(i)}{=}&~ 2\sum_{h'\ge h} \sum_{t=1}^T \sum_{x_h,a_h} \sum_{(x_{h'},a_{h'})\in \cC_{h'}(x_h, a_h)} \wt \ell_{h'}^t(x_{h'},a_{h'})\\
 \overset{\left( ii \right)}{\le} &~ 2\sum_{h'\ge h}  \Big(\sum_{t=1}^T\sum_{x_h,a_h} \sum_{(x_{h'},a_{h'})\in \cC_{h'}(x_h, a_h)}{\ell}_{h'}^t(x_{h'},a_{h'}) +X_hA\iota/\gamma\Big)\\
 \le&~ 2 H X A T + 2 H X A \iota/\gamma,
\end{align*}
where $(i)$ uses that $\mu^t$ is the solution of the fixed point equation $\mu^t = \sum_{\phi \in \Phi_0^\trig} p_\phi^t \phi \mu^t$; $(ii)$ is by Lemma \ref{lemma:lemma3-in-kozuno}, which gives 
\begin{align*}
    \sum_{t=1}^T \sum_{(x_{h'},a_{h'})\in \cC_{h'}(x_h, a_h)} \gamma \paren{\wt \ell_{h'}^t(x_{h'},a_{h'}) - {\ell}_{h'}^t(x_{h'},a_{h'})} \le \log \frac{3XA}{\delta}
\end{align*}
with probability at least $1-\delta/(3XA)$ (choosing $\alpha(x_h',a_h') = \gamma \indic{(x_h', a_h') \in \cC_{h'}(x_h, a_h)}$ in the lemma). Then taking union bound yields that $(ii)$ holds with probability at least $1-\delta/3$. 

Finally, putting everything together, the lemma is proved. 
\end{proof}

\section{Omitted details in Section~\ref{sec:sharp-rate}}
\label{sec:sharp-rate-details}

\subsection{Balanced exploration policy}
\label{sec:balanced-exploration-policy}
We define the balanced exploration policies formally as introduced in Definition 2 in  \citet{bai2022near}. For any $1\le h < h' \le H$ and $(x_{h'}, a_{h'}) \in \cX_{h'} \times \cA$, we denote $\cC_h(x_{h'}, a_{h'}) \defeq \{ x_h \in \cX_h: x_{h} \succ (x_{h'}, a_{h'})\}$ to be the set of children of $(x_{h'}, a_{h'})$ at layer $h$. 

\begin{definition}[Balanced exploration policy]
\label{def:balanced-exploration-policy}
For any $1\le h\le H$, the \emph{balanced exploration policy for layer $h$}, denoted as $\mu^{\star, h}\in\Pi$, is defined as 
  \begin{align}
    \label{equation:balanced-policy}
    \mu^{\star, h}_{h'}(a_{h'} | x_{h'}) \defeq \left\{
    \begin{aligned}
      & \frac{ \abs{\cC_{h}(x_{h'}, a_{h'})} }{ \abs{\cC_h(x_{h'})} },& h'\in\set{1,\dots,h-1}, \\
      &1/A, & h'\in\set{h,\dots,H}.
    \end{aligned}
        \right.
  \end{align}
\end{definition}

In words, at time steps $h'\le h-1$, the policy $\mu^{\star, h}$ plays actions proportionally to their number of descendants \emph{within the $h$-th layer} of the game tree. Then at time steps $h'\ge h$, it plays the uniform policy.

A crucial property of the balanced exploration policy is the balancing property (Lemma C.4 in \citet{bai2022near}). We also include it here for convience.

\begin{lemma}[Balancing property of $\mu^{\star, h}$]
  \label{lemma:balancing}
  For policy $\mu\in\Pi$ and any $h\in[H]$, we have
  \begin{align*}
    \sum_{(x_h, a_h)\in \mc{X}_h\times \mc{A}} \frac{\mu_{1:h}(x_h, a_h)}{\mu^{\star, h}_{1:h}(x_h, a_h)} = X_hA.
  \end{align*}
  Moreover, for any $x_g$ as the root of a subtree, we have
  \begin{align*}
    \sum_{(x_h, a_h)\succeq x_g} \frac{\mu_{g:h}(x_h, a_h)}{\mu^{\star, h}_{g:h}(x_h, a_h)} = \abs{\cC_{h}(x_{g})} \cdot A.
  \end{align*}
\end{lemma}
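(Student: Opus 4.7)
The plan is to establish the second (subtree) identity by backward induction on $g$, starting from $g=h$ and descending to $g=1$, and then deduce the first identity as an immediate corollary by summing over the layer-1 roots. Define
\[
S(x_g) \defeq \sum_{(x_h,a_h) \succeq x_g} \frac{\mu_{g:h}(x_h,a_h)}{\mu^{\star,h}_{g:h}(x_h,a_h)}, \qquad g \in \{1,\dots,h\}.
\]
I would prove $S(x_g) = |\cC_h(x_g)| \cdot A$ for every $x_g \in \cX_g$.

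For the base case $g = h$, the definition gives $\mu^{\star,h}_h(a_h \mid x_h) = 1/A$, and the set of descendants $(x_h, a_h) \succeq x_h$ is just $\{x_h\} \times \cA$, while $\cC_h(x_h) = \{x_h\}$. Thus $S(x_h) = \sum_{a_h} \mu_h(a_h\mid x_h) \cdot A = A = |\cC_h(x_h)| \cdot A$. For the inductive step $g < h$, I would split the sum over $(x_h,a_h) \succeq x_g$ by the immediate action $a_g$ and the immediate child $x_{g+1} \in \cC(x_g,a_g)$, use the multiplicative structure
\[
\mu_{g:h}(x_h,a_h) = \mu_g(a_g\mid x_g)\,\mu_{g+1:h}(x_h,a_h), \qquad \mu^{\star,h}_{g:h}(x_h,a_h) = \tfrac{|\cC_h(x_g,a_g)|}{|\cC_h(x_g)|}\,\mu^{\star,h}_{g+1:h}(x_h,a_h),
\]
pull the first-step factor outside, and apply the inductive hypothesis $S(x_{g+1}) = |\cC_h(x_{g+1})| A$ to each inner subtree sum. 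This yields
\[
S(x_g) = \sum_{a_g} \mu_g(a_g\mid x_g)\,\frac{|\cC_h(x_g)|}{|\cC_h(x_g,a_g)|} \cdot A \sum_{x_{g+1}\in\cC(x_g,a_g)} |\cC_h(x_{g+1})|.
\]
The key combinatorial step is the partition identity $\sum_{x_{g+1}\in\cC(x_g,a_g)} |\cC_h(x_{g+1})| = |\cC_h(x_g,a_g)|$, which follows directly from perfect recall and the tree structure: each layer-$h$ descendant of $(x_g,a_g)$ possesses a unique layer-$(g+1)$ ancestor, and that ancestor must belong to $\cC(x_g,a_g)$. Substituting back cancels the ratio and reduces the expression to $|\cC_h(x_g)| \cdot A \sum_{a_g}\mu_g(a_g\mid x_g) = |\cC_h(x_g)| \cdot A$, completing the induction.

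Finally, for the first identity I would apply the subtree result with $g=1$ and sum over $x_1 \in \cX_1$: by perfect recall every $(x_h,a_h) \in \cX_h \times \cA$ has a unique layer-$1$ ancestor, so
\[
\sum_{(x_h,a_h)\in\cX_h\times\cA} \frac{\mu_{1:h}(x_h,a_h)}{\mu^{\star,h}_{1:h}(x_h,a_h)} = \sum_{x_1\in\cX_1} S(x_1) = A\sum_{x_1\in\cX_1}|\cC_h(x_1)| = X_h A,
\]
where the last equality uses that $\{\cC_h(x_1)\}_{x_1\in\cX_1}$ partitions $\cX_h$.

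The proof is essentially bookkeeping; the only subtle point is making the partition identity $\sum_{x_{g+1}} |\cC_h(x_{g+1})| = |\cC_h(x_g,a_g)|$ rigorous, which is where perfect recall enters crucially. I do not anticipate a significant obstacle beyond this, since the inductive cancellation aligns cleanly with the explicit form of $\mu^{\star,h}$.
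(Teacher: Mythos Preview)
Your proof is correct. The paper itself does not prove this lemma; it simply cites it as Lemma~C.4 of \citet{bai2022near} and restates it for convenience, so there is no paper proof to compare against. Your backward induction on $g$ with the partition identity $\sum_{x_{g+1}\in\cC(x_g,a_g)} |\cC_h(x_{g+1})| = |\cC_h(x_g,a_g)|$ is exactly the natural argument and is almost certainly what appears in the cited reference.
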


\begin{corollary}\label{cor:balanced-policy-lower-bound}
We have 
\begin{align*}
    \mu^{\star, h}_{1:h}(x_h, a_h) \ge \frac{1}{X_hA}
\end{align*}
for any $h\in [H]$ and $(x_h,a_h)\in \cX_h \times \cA$.
\end{corollary}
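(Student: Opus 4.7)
The goal is to lower bound the sequence-form weight of $\mu^{\star,h}$ at a given $(x_h,a_h)$. The natural first step is to expand the sequence form along the unique history $(x_1,a_1,\ldots,x_{h-1},a_{h-1},x_h,a_h)$ guaranteed by perfect recall, applying the two cases of Definition~\ref{def:balanced-exploration-policy}. Since the last factor $\mu^{\star,h}_h(a_h\mid x_h)$ lies in the uniform regime ($h'=h$), it contributes $1/A$, while each earlier factor $\mu^{\star,h}_{h'}(a_{h'}\mid x_{h'})$ with $h'\le h-1$ contributes $|\cC_h(x_{h'},a_{h'})|/|\cC_h(x_{h'})|$. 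Thus
\begin{equation*}
\mu^{\star,h}_{1:h}(x_h,a_h) \;=\; \frac{1}{A}\prod_{h'=1}^{h-1}\frac{|\cC_h(x_{h'},a_{h'})|}{|\cC_h(x_{h'})|}.
\end{equation*}

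The key observation is that consecutive factors nearly telescope once we use the tree structure. Since $x_{h'+1}\in \cC(x_{h'},a_{h'})$, every layer-$h$ descendant of $x_{h'+1}$ is also a layer-$h$ descendant of $(x_{h'},a_{h'})$, so $|\cC_h(x_{h'+1})|\le |\cC_h(x_{h'},a_{h'})|$. Replacing the numerator of the $h'$-th factor by the smaller quantity $|\cC_h(x_{h'+1})|$ gives
\begin{equation*}
\prod_{h'=1}^{h-1}\frac{|\cC_h(x_{h'},a_{h'})|}{|\cC_h(x_{h'})|} \;\ge\; \prod_{h'=1}^{h-1}\frac{|\cC_h(x_{h'+1})|}{|\cC_h(x_{h'})|} \;=\; \frac{|\cC_h(x_h)|}{|\cC_h(x_1)|},
\end{equation*}
a fully telescoping product. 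Finally, $|\cC_h(x_h)|=1$ (since $x_h$ is the unique layer-$h$ descendant of itself) and $|\cC_h(x_1)|\le X_h$ (the total number of layer-$h$ infosets). Combining yields $\mu^{\star,h}_{1:h}(x_h,a_h)\ge 1/(X_h A)$, as desired. The edge case $h=1$ is immediate since the product is empty and the bound reduces to $1/A \ge 1/(X_1 A)$.

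There is no real obstacle here: the only subtlety is that the factors $|\cC_h(x_{h'},a_{h'})|/|\cC_h(x_{h'})|$ do not telescope as written, and one must notice that the ``wrong'' numerator can be bounded below by $|\cC_h(x_{h'+1})|$ using the inclusion along the history; after that replacement, the product collapses cleanly. The argument is essentially a line-by-line unpacking of Definition~\ref{def:balanced-exploration-policy} combined with the tree-structure inclusion $\cC_h(x_{h'+1})\subseteq \cC_h(x_{h'},a_{h'})$ provided by perfect recall.
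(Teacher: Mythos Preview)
Your proof is correct. You compute $\mu^{\star,h}_{1:h}(x_h,a_h)$ directly from Definition~\ref{def:balanced-exploration-policy}, bound each numerator $|\cC_h(x_{h'},a_{h'})|$ below by $|\cC_h(x_{h'+1})|$ using the tree inclusion, and telescope. Everything checks out, including the edge case $h=1$.

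The paper takes a different and shorter route: it applies the already-established balancing identity (Lemma~\ref{lemma:balancing}),
\[
\sum_{(x_h,a_h)\in\cX_h\times\cA}\frac{\mu_{1:h}(x_h,a_h)}{\mu^{\star,h}_{1:h}(x_h,a_h)} = X_hA,
\]
with $\mu$ chosen to be a deterministic policy satisfying $\mu_{1:h}(x_h,a_h)=1$; nonnegativity of every summand then gives $1/\mu^{\star,h}_{1:h}(x_h,a_h)\le X_hA$ immediately. So the paper's argument is a two-line corollary of the balancing lemma, whereas your argument is a self-contained direct computation that does not invoke Lemma~\ref{lemma:balancing} at all. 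Your approach has the advantage of being elementary and transparent about \emph{why} the bound holds (the descendant counts nearly telescope along the history); the paper's approach has the advantage of brevity once the balancing lemma is in hand.
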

\begin{proof}
Choosing some deterministic policy $\mu$ s.t. $\mu_{1:h}(x_h, a_h) = 1$ in Lemma \ref{lemma:balancing} and noticing each term in
the summation is non-negative, we have
\begin{align*}
    \frac{\mu_{1:h}(x_h, a_h)}{\mu^{\star, h}_{1:h}(x_h, a_h)} \le{X_hA}.
\end{align*}
\end{proof}

\subsection{Algorithms: Balanced EFCE-OMD (in FTRL form and OMD form)}

In this section, we present the algorithms omitted in Section~\ref{sec:sharp-rate}. We begin with the Balanced EFCE-OMD (in FTRL form) as in Algorithm~\ref{algorithm:balanced-efce-ftrl}. This algorithm is actually equivalent to the algorithm as in Eq.~(\ref{equation:phit-F-form}) because of the following lemma, whose proof is similar to Lemma~\ref{lem:evaluate-partition-function-gradient}.

\begin{lemma}
\label{lem:evaluate-balanced-partition-function-gradient}
For any loss matrix $M \in \R_{\ge 0}^{XA \times XA}$, recall that the \emph{balanced EFCE log-partition function} as defined in Eq. (\ref{equation:balanced-f-efce})). %
Let $\lambda = (\lambda_{x_g a_g})_{x_g a_g \in \cX \times \cA} \in \Delta_{XA}$ and $m = ( m_{x_g a_g} )_{x_g a_g \in \cX \times \cA} \in \mc{M}$ be
\begin{align}
& \lambda_{x_ga_g} \propto_{x_ga_g} \exp\Big\{ \frac{1}{XA}  \Big( - \eta\< I - E_{\succeq x_g a_g}, M\> + F^{\star}_{x_ga_g,x_g} \Big) \Big\}\\
& m_{x_ga_g, h}(a_h \vert x_h) \propto_{a_h}\exp\Big\{ \mu^{\star,h}_{g:h}(x_h, a_h) \Big(- \eta M_{x_ha_h, x_ga_g} + \sum_{x_{h+1} \in \cC(x_h, a_h)} F^{\star}_{x_ga_g, x_{h+1}} \Big) \Big\}.
\end{align}
then we have $-\grad F^{\trig}_{\bal}(M)=\phi(\lambda, m)$, where $\phi$ is as defined in Eq. (\ref{equation:phi-lambda-m}). 
\end{lemma}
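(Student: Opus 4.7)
My plan is to mirror the proof of Lemma~\ref{lem:evaluate-partition-function-gradient} and compute $-\nabla F^{\trig}_{\bal}(M)$ via the chain rule, peeling off one layer of the recursion at a time. The key observation that makes everything clean is that the balancing prefactor $1/\mu^{\star,h}_{g:h}$ in front of each inner log-sum-exp in~\eqref{equation:balanced-f-efce-xgag} cancels exactly with the $\mu^{\star,h}_{g:h}$ that comes out of the exponential when differentiating, and similarly the outer $XA$ in~\eqref{equation:balanced-f-efce} cancels with the $1/(XA)$ inside the exponents. Thus the recursion unfolds structurally identically to the unbalanced case, and the proof amounts to tracking these cancellations carefully.

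First, I differentiate the outermost expression~\eqref{equation:balanced-f-efce}. The $XA$ prefactor cancels against $1/(XA)$ inside each exponent, and the resulting softmax weights on $(g,x_g,a_g)$ coincide with the $\lambda_{x_g a_g}$ defined in the statement. This gives
$$
-\nabla F^{\trig}_{\bal}(M) \;=\; \sum_{g,x_g,a_g} \lambda_{x_g a_g}\,\bigl[(I - E_{\succeq x_g a_g}) \,-\, \nabla F^{\star}_{x_g a_g, x_g}(M)\bigr].
$$

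Next, for each fixed $(g, x_g, a_g)$, I will prove by backward induction on $h$ (from $h = H{+}1$ down to $h = g$) that $-\nabla F^{\star}_{x_g a_g, x_h}(M) = m_{x_g a_g}^{(x_h)}\, e_{x_g a_g}^\top$, where $m_{x_g a_g}^{(x_h)}\in\R^{XA}$ denotes the sequence-form subtree policy $m_{x_g a_g}$ restricted to the subtree rooted at $x_h$ (zero elsewhere). The inductive step is where the balancing cancellation matters: differentiating~\eqref{equation:balanced-f-efce-xgag}, the $1/\mu^{\star,h}_{g:h}$ prefactor cancels the $\mu^{\star,h}_{g:h}$ produced by the chain rule on the exponent, yielding
$$
-\nabla F^{\star}_{x_g a_g, x_h}(M) \;=\; \sum_{a_h} m_{x_g a_g, h}(a_h\vert x_h)\,\Bigl[e_{x_h a_h} e_{x_g a_g}^\top + \sum_{x_{h+1}\in\cC(x_h,a_h)} \bigl(-\nabla F^{\star}_{x_g a_g, x_{h+1}}(M)\bigr)\Bigr].
$$
Substituting the inductive hypothesis and using the definition~\eqref{eqn:sequence-form-policy} of sequence form then collapses the right-hand side into $m_{x_g a_g}^{(x_h)} e_{x_g a_g}^\top$, completing the induction. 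Feeding the $h=g$ case back into the outer expression recovers $-\nabla F^{\trig}_{\bal}(M) = \sum_{g,x_g,a_g} \lambda_{x_g a_g}(I - E_{\succeq x_g a_g} + m_{x_g a_g} e_{x_g a_g}^\top) = \phi(\lambda, m)$.

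I do not expect a real obstacle; the one subtle point I will verify at the outset is that $\mu^{\star,h}_{g:h}(x_h, a_h)$ is independent of $a_h$ (by Definition~\ref{def:balanced-exploration-policy}, since the last factor $\mu^{\star,h}_h(a_h\vert x_h) = 1/A$), which is what lets me pull the balancing factor out of the $\sum_{a_h}$ in the log-sum-exp and have the cancellation go through. This same observation justifies the shorthand $\mu^{\star,h}_{g:h}$ used throughout~\eqref{equation:balanced-f-efce-xgag} and the lemma statement.
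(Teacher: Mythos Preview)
Your proposal is correct and follows essentially the same approach as the paper, which simply notes that the proof is similar to that of Lemma~\ref{lem:evaluate-partition-function-gradient}. You have in fact made explicit the two cancellations (the outer $XA$ against $1/(XA)$, and the inner $1/\mu^{\star,h}_{g:h}$ against $\mu^{\star,h}_{g:h}$) and correctly flagged that the latter relies on $\mu^{\star,h}_{g:h}(x_h,a_h)$ being independent of $a_h$, which the paper also uses but only states in passing.
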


\begin{algorithm}[t]
   \caption{Balanced EFCE-OMD (OMD form; equivalent FTRL form in Algorithm~\ref{algorithm:balanced-efce-ftrl})}
   \label{algorithm:balanced-efce-omd}
   \begin{algorithmic}[1]
    \REQUIRE Learning rate $\eta$, balanced exploration policy $\{\mu^{\star, h}\}_{h \in [H]}$. 
    \STATE Initialize $\lambda_{x_g a_g}^1 \propto_{x_g a_g} \exp\{ (X_{\succeq x_g}/X) \log A \}$, and $m_{x_ga_g, h}^1(a_h \vert x_h) = 1/A$, for all $(g, x_g, a_g, h, x_h, a_h)$ with $g \le h$. 
    \FOR{$t = 1, 2, \ldots, T$}
    \STATE Compute $\phi^t = \phi(\lambda^t, m^t)$, where $\phi$ is as defined in Eq. (\ref{equation:phi-lambda-m}).
    \STATE Find a $\mu^t$ to be a solution of the fixed point equation $\mu^t = \phi^t \mu^t$. 
    \STATE Play policy $\mu^t$, observe trajectory $(x_h^t, a_h^t, r_h^t)_{h \in [H]}$. 
    \STATE Form vector loss estimator $\wt \ell^{t, x_g a_g} = \{ \wt{\ell}^{t, x_g a_g}_h(x_h, a_h)\}_{x_h a_h}$ for each $(g, x_g a_g)$ as in Eq. (\ref{equation:adaptive-bandit-estimator}). 
\STATE Compute matrix loss estimator $\wt{M}^t = \sum_{g, x_g, a_g} \mu^t_{x_ga_g} \wt{\ell}^{t, x_ga_g} e_{x_g a_g}^\top$. 
\STATE For each $x_g a_g \in \cX \times \cA$, from the reverse order of $x_h$, compute $m^{t}_{x_ga_g, h}(a_h \vert x_h)$ and $F^{\star,t}_{x_ga_g, x_h}$
\begin{align*}
 m^{t+1}_{x_ga_g, h}(a_h \vert x_h) \propto_{a_h}&~ m^{t}_{x_ga_g, h}(a_h \vert x_h) \exp\Big\{ \mu^{\star,h}_{g:h}(x_h, a_h) \\
 &~\Big(- \eta \wt{M}^t_{x_ha_h, x_ga_g} + \sum_{x_{h+1} \in \cC(x_h, a_h)} \wt{F}^{\star,t}_{x_ga_g, x_{h+1}} \Big) \Big\}, \\
\wt{F}_{x_g a_g, x_h}^{\star,t} \defeq&~  \frac{1}{\mu^{\star,h}_{g:h}(x_h, a_h)}\log \sum_{a_h \in \cA} m^{t}_{x_ga_g, h}(a_h \vert x_h)\exp\Big\{ \mu^{\star,h}_{g:h}(x_h, a_h) \\
&~ \times \big[ - \wt{M}^t_{x_ha_h, x_g a_g} +  \sum_{x_{h+1} \in \cC(x_ha_h)} \wt{F}_{x_g a_g, x_{h+1}}^{\star,t}\big] \Big\}.
\end{align*}
\STATE Compute $\lambda_{x_g a_g}^{t+1}$ as
\begin{align}
\lambda_{x_ga_g}^{t+1} \propto_{x_ga_g} \lambda_{x_ga_g}^{t}  \exp\Big\{ \frac{1}{XA}  \Big( - \eta\< I - E_{\succeq x_g a_g}, \wt{M}^t\> + \wt{F}^{\star, t}_{x_ga_g,x_g} \Big) \Big\} .
\end{align}
    \ENDFOR
 \end{algorithmic}
\end{algorithm}

We also present an efficient update of $(\lambda^{t+1}, m^{t+1} )$ from $(\lambda^{t}, m^{t} )$, which gives the OMD form of the Balanced EFCE-OMD algorithm as in Algorithm~\ref{algorithm:balanced-efce-omd}. Notice the initialization of Balanced EFCE-OMD is different from EFCE-OMD (Algorithm~\ref{algorithm:efce-omd}) due to the presence of the balanced exploration policy.
Algorithm~\ref{algorithm:balanced-efce-ftrl} and Algorithm~\ref{algorithm:balanced-efce-omd} are indeed equivalent due to the following lemma, whose proof is similar to that of Lemma~\ref{lem:efce-omd}.
\begin{lemma}
  \label{lem:balanced-efce-omd}
  Given the same sequence of $M^t$, 
  Algorithm~\ref{algorithm:balanced-efce-ftrl} and Algorithm~\ref{algorithm:balanced-efce-omd} outputs the same  $\lambda^t$ and $m^t$ and thus the same $\phi^t$.
\end{lemma}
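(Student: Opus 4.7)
The plan is to mirror the proof of Lemma~\ref{lem:efce-omd}, with the only new feature being that the log-partition function and the multiplicative updates now carry the balancing weights $\mu^{\star, h}_{g:h}(x_h, a_h)$ (inside the exponentials) and $1/\mu^{\star, h}_{g:h}(x_h, a_h)$ (as an outer prefactor). The core identity I would prove by induction is the telescoping formula
\begin{align*}
F^{\star, t+1}_{x_g a_g, x_h} \;=\; F^{\star, t}_{x_g a_g, x_h} + \wt{F}^{\star, t}_{x_g a_g, x_h}
\end{align*}
for every $t \ge 1$ and every $(x_g a_g, x_h)$ with $x_h \succeq x_g$. Once this telescoping holds, summing over $s=1,\dots,t-1$ recovers $F^{\star, t}_{x_ga_g,x_h} - F^{\star,1}_{x_ga_g,x_h} = \sum_{s=1}^{t-1}\wt F^{\star,s}_{x_ga_g,x_h}$, which forces the closed-form expressions for $m^t$ and $\lambda^t$ in Algorithm~\ref{algorithm:balanced-efce-ftrl} and Algorithm~\ref{algorithm:balanced-efce-omd} to coincide, and hence so does $\phi^t = \phi(\lambda^t, m^t)$.

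I would perform a double induction: an outer induction on $t$, and for each fixed $t$, an inner reverse induction on $h$ from $H+1$ down to $g$. The inner base case $h = H+1$ is trivial because both $F^{\star,t}_{x_ga_g,x_{H+1}}$ and $\wt F^{\star,t}_{x_ga_g,x_{H+1}}$ are zero on an empty product. The outer base case $t=1$ requires matching the FTRL log-partition at zero cumulative loss $\sum_{s=1}^{0}\wt M^s = 0$ against the specified initializations $m^1_{x_g a_g, h}(a_h|x_h) = 1/A$ and $\lambda^1_{x_g a_g} \propto \exp\{(X_{\succeq x_g}/X) \log A\}$ of Algorithm~\ref{algorithm:balanced-efce-omd}; this plays a role analogous to Eq.~(\ref{eqn:lambda-m-init}) in the non-balanced case and is a direct computation from the recursive definition of $F^{\star,1}$ exploiting the layer-by-layer uniform structure of $\mu^{\star,h}_{h'}$ at $h'\ge h$.

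For the inductive step, I would assume the identity at level $h+1$ up to time $t$ and at level $h$ up to time $t-1$. Expanding the multiplicative OMD update for $m^t$ backwards in time and substituting the induction hypothesis at level $h+1$ yields
\begin{align*}
m^t_{x_g a_g, h}(a_h | x_h) \;\propto_{a_h}\; \exp\Big\{\mu^{\star, h}_{g:h}(x_h,a_h)\Big[-\eta \sum_{s=1}^{t-1} \wt{M}^s_{x_h a_h, x_g a_g} + \sum_{x_{h+1} \in \cC(x_h, a_h)} F^{\star, t}_{x_g a_g, x_{h+1}}\Big]\Big\},
\end{align*}
matching the FTRL expression for $m^t$. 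Plugging this $m^t$ into the OMD definition of $\wt F^{\star,t}_{x_ga_g,x_h}$, the normalizing constant of $m^t$ supplies the denominator $\exp\{\mu^{\star,h}_{g:h}\cdot F^{\star,t}_{x_ga_g,x_h}\}$ of the log-ratio, while using the induction hypothesis at level $h+1$ to replace each $\wt F^{\star,t}_{x_ga_g,x_{h+1}}$ by $F^{\star,t+1}_{x_ga_g,x_{h+1}}-F^{\star,t}_{x_ga_g,x_{h+1}}$ combines the numerator into $\exp\{\mu^{\star,h}_{g:h}\cdot F^{\star,t+1}_{x_ga_g,x_h}\}$. Dividing and applying the outer $\frac{1}{\mu^{\star,h}_{g:h}}\log$ recovers the telescoping identity. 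A parallel calculation for $\lambda^t$, using the $\frac{1}{XA}$ outer prefactor in place of $\frac{1}{\mu^{\star,h}_{g:h}}$, closes the induction. The main obstacle, compared to the unbalanced Lemma~\ref{lem:efce-omd}, is the careful bookkeeping of the $\mu^{\star,h}_{g:h}(x_h,a_h)$ weights: these appear both inside and outside the log-sum-exp, and one must verify that they factor out consistently between the two algorithms rather than entangling with the recursion on $h$. Because the weights sit in identical positions in both the FTRL and OMD forms, this reduces to bookkeeping rather than a new analytical difficulty.
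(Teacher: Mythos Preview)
Your proposal is correct and follows exactly the approach the paper intends: the paper's own ``proof'' is simply the one-line remark that the argument is similar to that of Lemma~\ref{lem:efce-omd}, and you have faithfully carried out that adaptation, correctly tracking the balancing prefactors $\mu^{\star,h}_{g:h}$ and $1/(XA)$ and noting that the $t=1$ base case requires the computation of $F^{\star}_{x_ga_g,x_g}(\mathbf{0})$ (which is precisely the content of Lemma~\ref{lem:initial-balanced}).
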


\subsection{Equivalence to FTRL and OMD}
\label{sec:balanced-equivalence}

Similar as Section~\ref{sec:phi-hedge-dilated-entropy}, we show that the Balanced EFCE-OMD algorithm (Algorithm~\ref{algorithm:balanced-efce-ftrl}) is equivalent to FTRL with the balanced trigger dilated entropy, and OMD with the balanced dilated KL divergence, both over the $(\lambda, m)$ parametrization.

We first introduce Balanced dilated entropy and balanced dilated KL divergence, and their trigger versions as below.

\begin{definition}[Balanced dilated entropy and balanced dilated KL divergence]
The balanced dilated entropy $H^{\bal}_{x_g}$ rooted at $x_g$ of subtree policy $\mu^{x_g}\in\Pi^{x_g}$ is defined as
\begin{align}
H^{\bal}_{x_g}(\mu^{x_g}) \defeq \sum_{h=g}^H \sum_{(x_h, a_h) \succeq x_g} \frac{\mu^{x_g}_{g:h}(x_h, a_h)}{\mu^{\star, h}_{g:h}(x_h, a_h)} \log \mu^{x_g}_h(a_h | x_h).
\end{align}
The balanced dilated KL divergence $D^{\bal}_{x_g}$ rooted at $x_g$ between two subtree policies $\mu^{x_g}, \nu^{x_g}\in\Pi^{x_g}$ is defined as
\begin{align}
    D^{\bal}_{x_g}(\mu^{x_g} \| \nu^{x_g}) \defeq \sum_{h=g}^H \sum_{(x_h, a_h) \succeq x_g} \frac{\mu^{x_g}_{g:h}(x_h, a_h)}{\mu^{\star, h}_{g:h}(x_h, a_h)} \log \frac{\mu^{x_g}_h(a_h | x_h)}{\nu^{x_g}_h(a_h | x_h)}.
\end{align}
\end{definition}

\begin{definition}[Balanced trigger dilated entropy and balanced trigger dilated KL divergence]
The balanced trigger dilated entropy function on $(\lambda, m) \in \Delta_{XA} \times \mc{M}$ is defined as
\begin{align}
\textstyle H_{\bal}^{\trig}(\lambda, m) =&~ XA \cdot H(\lambda) + \sum_{g, x_g, a_g} \lambda_{x_g a_g} H^{\bal}_{x_g}(m_{x_g a_g}). 
\end{align}
The balanced trigger dilated KL divergence function on $(\lambda, m), (\lambda', m') \in \Delta_{XA} \times \mc{M}$ is defined as
\begin{align}
\textstyle D_{\bal}^{\trig}( \lambda, m\| \lambda', m') =&~ XA \cdot D_{\kl}(\lambda \| \lambda') + \sum_{g, x_g, a_g} \lambda_{x_g a_g} D^{\bal}_{x_g}(m_{x_g a_g} \| m_{x_g a_g}').
\end{align}
\end{definition}

The following lemma shows that the Balanced EFCE-OMD (Algorithm \ref{algorithm:balanced-efce-ftrl} and \ref{algorithm:balanced-efce-omd}) are essentially FTRL with the balanced trigger dilated entropy, and OMD with the balanced tirgger dilated KL divergence. The proof of this lemma is similar to that of Lemma~\ref{lem:equivalence-algorithm}.

\begin{lemma}[Equivalent of Balanced EFCE-OMD to OMD/FTRL on $(\lambda, m)$]\label{lem:equivalent-FTRL-OMD-balanced}
For any sequence of loss functions $\{\wt M^t\}_{t\ge 1}$, the algorithm as in Eq.~(\ref{equation:phit-F-form}) is equivalent to (i.e. satisfy) the following FTRL update on $H_{\bal}^{\trig}$ and OMD update on $D_{\bal}^{\trig}$:
\begin{align}
& \textstyle (\lambda^{t+1}, m^{t+1}) =~ \argmin_{\lambda, m} \Big[\eta \< \phi(\lambda, m), \sum_{s=1}^t \wt M^s\> + H_{\bal}^{\trig}(\lambda, m) \Big],  \\
& \textstyle (\lambda^{t+1}, m^{t+1}) =~ \argmin_{\lambda, m}\Big[ \eta \< \phi(\lambda, m), \wt M^t\> + D_{\bal}^{\trig}(\lambda, m \| \lambda^t, m^t) \Big],
\end{align}
with $\phi^{t+1} = \phi(\lambda^{t+1}, m^{t+1})$. 
\end{lemma}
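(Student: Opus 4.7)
The plan is to follow the same strategy as the proof of Lemma~\ref{lem:equivalence-algorithm} (Appendix~\ref{appendix:proof-equivalence-algorithm}), directly verifying that the unique minimizer of each variational problem matches the closed-form $(\lambda^{t+1}, m^{t+1})$ given in Lemma~\ref{lem:evaluate-balanced-partition-function-gradient}. Then by $\phi(\lambda^{t+1}, m^{t+1}) = -\grad F^{\trig}_{\bal}(\eta \sum_{s=1}^t \wt M^s)$, the FTRL / OMD iterates coincide with those produced by Eq.~\eqref{equation:phit-F-form}. Substituting the definition of $\phi(\lambda, m)$ from Eq.~\eqref{equation:phi-lambda-m} and of $H^{\trig}_{\bal}$, the FTRL objective expands to
\begin{align*}
\eta \sum_{g, x_g, a_g} \lambda_{x_g a_g} \Big[\<I - E_{\succeq x_g a_g}, \textstyle \sum_s \wt M^s\> + \<m_{x_g a_g}, \textstyle \sum_s \wt M^s_{\cdot, x_g a_g}\> \Big] + XA \cdot H(\lambda) + \sum_{g, x_g, a_g} \lambda_{x_g a_g} H^{\bal}_{x_g}(m_{x_g a_g}),
\end{align*}
which decouples: for each fixed $\lambda$, the subtree policies $m_{x_g a_g}$ appear in independent subproblems, and the outer problem in $\lambda$ is a simplex-constrained KL-regularized linear program (scaled by $XA$).

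\textbf{Inner subproblem.} For each $(g, x_g, a_g)$ we minimize $\eta \<m_{x_g a_g}, \sum_s \wt M^s_{\cdot, x_g a_g}\> + H^{\bal}_{x_g}(m_{x_g a_g})$ over $m_{x_g a_g} \in \Pi^{x_g}$. This is a balanced version of the dilated-entropy optimization used in Lemma~\ref{lem:equivalence-algorithm}, and we solve it by backward induction on $h = H, H{-}1, \dots, g$. At each layer, the KKT conditions for the sequence-form constraint $\sum_{a_h} m_h(a_h|x_h) = 1$ produce a softmax whose exponent carries the balanced weight $\mu^{\star,h}_{g:h}(x_h,a_h)$, and aggregating layer by layer yields the recursion that exactly matches $F^{\star, t+1}_{x_g a_g, x_h}$. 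The resulting minimum value of this subproblem equals $-F^{\star, t+1}_{x_g a_g, x_g}$ and the minimizer is the prescribed $m^{t+1}_{x_g a_g, h}(a_h|x_h)$.

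\textbf{Outer subproblem and OMD form.} Plugging the inner optimum back in, the outer problem becomes
\begin{align*}
\min_{\lambda \in \Delta_{XA}} \sum_{g, x_g, a_g} \lambda_{x_g a_g} \Big[\eta \<I - E_{\succeq x_g a_g}, \textstyle \sum_s \wt M^s\> - F^{\star, t+1}_{x_g a_g, x_g}\Big] + XA \cdot H(\lambda),
\end{align*}
a standard entropy-regularized linear program on the simplex whose minimizer is exactly the $\lambda^{t+1}$ of Algorithm~\ref{algorithm:balanced-efce-ftrl} (the factor $1/(XA)$ in the softmax arises from the $XA$ coefficient on $H(\lambda)$). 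For the OMD form, we invoke the standard equivalence between FTRL with a Legendre regularizer $\psi$ and OMD with its Bregman divergence $D_\psi$, which is valid whenever the initialization minimizes $\psi$; we verify that the $(\lambda^1, m^1)$ chosen in Algorithm~\ref{algorithm:balanced-efce-omd} is the minimizer of $H^{\trig}_{\bal}$ using the balancing property (Lemma~\ref{lemma:balancing}) to evaluate $H^{\bal}_{x_g}$ at the uniform subtree policy and then minimizing the outer $XA \cdot H(\lambda) - \sum \lambda_{x_g a_g} \cdot A\log A \cdot X_{\succeq x_g}$ over $\lambda \in \Delta_{XA}$.

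\textbf{Main obstacle.} The principal technical step is the balanced inner subproblem: unlike the standard dilated entropy, the per-layer weights $1/\mu^{\star,h}_{g:h}$ vary with $h$ and $x_h$, so the backward induction must be done carefully to show that the layer-wise softmax aggregates into the $(1/\mu^{\star,h}_{g:h})$-rescaled log-sum-exp defining $F^{\star}_{x_g a_g, x_h}$. This is precisely the sense in which $F^{\trig}_{\bal}$ is the convex conjugate of $H^{\trig}_{\bal}$ on the image of $\phi(\cdot,\cdot)$: the $\mu^{\star,h}_{g:h}$ factor inside the log-sum-exp in Eq.~\eqref{equation:balanced-f-efce-xgag} cancels the $1/\mu^{\star,h}_{g:h}$ weight in the balanced dilated entropy, making the closed form go through.
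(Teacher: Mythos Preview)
Your FTRL argument is correct and matches the paper's approach exactly: the paper's proof is just ``similar to that of Lemma~\ref{lem:equivalence-algorithm}'', and you have correctly worked out the balanced analogue, including the key backward induction showing that the inner subproblem's optimal value is $-F^{\star}_{x_ga_g,x_g}(\eta\sum_s\wt M^s)$ with optimizer given by the balanced softmax.

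There is, however, a gap in your OMD argument. You invoke ``the standard equivalence between FTRL with a Legendre regularizer $\psi$ and OMD with its Bregman divergence $D_\psi$'', but $D^{\trig}_{\bal}$ is \emph{not} the Bregman divergence of $H^{\trig}_{\bal}$. Because of the cross term $\lambda_{x_ga_g}H^{\bal}_{x_g}(m_{x_ga_g})$, the Bregman divergence of $H^{\trig}_{\bal}$ picks up an extra term $\sum_{x_ga_g}(\lambda_{x_ga_g}-\lambda'_{x_ga_g})[H^{\bal}_{x_g}(m_{x_ga_g})-H^{\bal}_{x_g}(m'_{x_ga_g})]$ and has $\lambda'_{x_ga_g}$ (not $\lambda_{x_ga_g}$) multiplying $D^{\bal}_{x_g}$, neither of which matches the definition of $D^{\trig}_{\bal}$. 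So the black-box FTRL--OMD equivalence does not apply; the same issue is present in the unbalanced case with $D^{\trig}$ and $H^{\trig}$.

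The fix is straightforward and is what the paper intends by ``similar'': verify the OMD form directly, exactly as you did the FTRL form. Expand the OMD objective, solve the inner problem $\min_{m_{x_ga_g}}[\eta\<m_{x_ga_g},\wt M^t_{\cdot,x_ga_g}\>+D^{\bal}_{x_g}(m_{x_ga_g}\|m^t_{x_ga_g})]$ by the same backward induction (now relative to the reference $m^t$), observe that its optimal value is the incremental $-\wt F^{\star,t}_{x_ga_g,x_g}$ of Algorithm~\ref{algorithm:balanced-efce-omd}, and then solve the outer $XA\cdot D_{\kl}(\lambda\|\lambda^t)$-regularized linear problem to recover $\lambda^{t+1}$. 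This reproduces the incremental updates in Algorithm~\ref{algorithm:balanced-efce-omd}, which by Lemma~\ref{lem:balanced-efce-omd} coincide with the FTRL iterates.
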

\section{Proof of Theorem~\ref{thm:balanced-efce-omd-bandit}}
\label{app:proof-balanced-efce-omd-bandit}

Here we restate the theorem for convenience. 

\begin{theorem}[Sample complexity under bandit feedback]
Run Balanced EFCE-OMD (Algorithm \ref{algorithm:balanced-efce-ftrl}) with  $\eta = \sqrt{XA\iota  / H^4 T}$ and $\gamma = 2 \sqrt{XA \iota / H^2 T}$. Then with probability at least $1- \delta$, we have the following extensive-form trigger regret bound,
\begin{align*}
    \efceregret(T) \le 200 \sqrt{XA H^4 T \iota},
\end{align*}
where $\iota = \log (10HXA/\delta)$ is a log factor.
\end{theorem}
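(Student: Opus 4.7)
The plan is to follow the same three-term decomposition as in the proof of Theorem~\ref{thm:phi-hedge-efce-bandit}, but to bound the regret term via a direct analysis of the balanced log-partition function $F^{\trig}_{\bal}$ rather than an appeal to the generic $\Phi$-Hedge regret bound (Lemma~\ref{lemma:regret-bound-phi-hedge}), which no longer applies because Algorithm~\ref{algorithm:balanced-efce-ftrl} is not an NFG algorithm. Using $\phi^t\mu^t=\mu^t$ to rewrite $\langle\mu^t-\phi\mu^t,\ell^t\rangle=\langle\phi^t-\phi,M^t\rangle$ with $M^t=\ell^t(\mu^t)^\top$ and inserting the matrix estimator $\wt{M}^t$, I would split
\begin{align*}
\efceregret(T) \le \underbrace{\sum_t \langle \phi^t, M^t-\wt{M}^t\rangle}_{\textrm{BIAS}_1} + \underbrace{\max_{\phi\in\Phi^{\trig}}\sum_t \langle \phi, \wt{M}^t-M^t\rangle}_{\textrm{BIAS}_2} + \underbrace{\max_{\phi\in\Phi^{\trig}}\sum_t \langle \phi^t-\phi, \wt{M}^t\rangle}_{\textrm{REG}}.
\end{align*}
The two bias terms can be handled analogously to Lemmas~\ref{lemma:bias_1}~and~\ref{lemma:bias_2}, via repeated application of Lemma~\ref{lemma:lemma3-in-kozuno} with weights of the form $\gamma\mu^{\star,h}_{1:h}$ paired with a trigger-target indicator; the new IX term $\gamma(\mu^{\star,h}_{1:h}+\mu^t_{x_g a_g}m^t_{x_g a_g,g:h}\indic{x_h\succeq x_g})$ in the denominator of~\eqref{equation:adaptive-bandit-estimator} is precisely what makes Lemma~\ref{lemma:lemma3-in-kozuno} applicable here, and after invoking the balancing property (Lemma~\ref{lemma:balancing}) to turn $\mu^{\star,h}$-weighted sums into $O(X_h A)$ quantities, this yields bias bounds of order $\tO(XA\iota/\gamma+H\sqrt{T\iota})$.

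For the regret term I would use the log-partition identity $\phi^t=-\nabla F^{\trig}_{\bal}(\eta\sum_{s<t}\wt{M}^s)$ (Lemma~\ref{lem:evaluate-balanced-partition-function-gradient}) together with the standard telescoping argument, bounding
\begin{align*}
F^{\trig}_{\bal}\Bigl(\eta\sum_{s=1}^T \wt{M}^s\Bigr)-F^{\trig}_{\bal}(0) \le -\eta\sum_t \langle \phi^t, \wt{M}^t\rangle + \sum_t \mathrm{Stab}_t,
\end{align*}
where $\mathrm{Stab}_t$ collects the convexity slack between consecutive iterates. For the left-hand side, I would use the lower bound $F^{\trig}_{\bal}(\eta\sum_s\wt{M}^s)\ge -\eta\langle \phi^\star, \sum_s\wt{M}^s\rangle - \cO(XA\log(XA))$ for any $\phi^\star\in\Phi^{\trig}_0$, obtained by retaining one term in the outer sum of~\eqref{equation:balanced-f-efce} and one action at each inner log-sum-exp in~\eqref{equation:balanced-f-efce-xgag}; notably, the $1/(XA)$ prefactor inside the outer exponent is precisely what collapses the $\log|\Phi^{\trig}_0|=\widetilde\Omega(\|\Pi\|_1\log A)$ regularizer of Theorem~\ref{thm:phi-hedge-efce-bandit} down to $XA\log(XA)$, shaving off the $\sqrt{\|\Pi\|_1}$ factor that this theorem aims to remove. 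Combining gives $\textrm{REG}\le \cO(XA\iota/\eta)+\sum_t \mathrm{Stab}_t$.

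The main obstacle is controlling $\sum_t \mathrm{Stab}_t$, which is the technical crux of the proof. I would unfold the recursive definition of $F^{\trig}_{\bal}$ layer by layer. At each inner node, the scalar $\mu^{\star,h}_{g:h}$ multiplying the exponent in~\eqref{equation:balanced-f-efce-xgag} lets me apply the log-MGF-type inequality $\log\mathbb{E}[e^{\mu X}]\le \mu\mathbb{E}[X]+\mu^2\mathbb{E}[X^2]$ (valid because $\eta\mu^{\star,h}_{g:h}|\wt{M}^t_{x_h a_h, x_g a_g}|$ is guaranteed to be $O(1)$ by Corollary~\ref{cor:balanced-policy-lower-bound} and the chosen $\eta,\gamma$), yielding a second-order remainder proportional to $\mu^{\star,h}_{g:h}(\eta \wt{M}^t_{x_h a_h, x_g a_g})^2$; the outer layer contributes an analogous $1/(XA)$ prefactor. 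Unrolling across layers and substituting $\wt{M}^t_{x_h a_h, x_g a_g}=\mu^t_{x_g a_g}\wt{\ell}^{t,x_g a_g}_h(x_h,a_h)$, each such contribution becomes $\eta^2\mu^{\star,h}_{1:h}(\mu^t_{x_g a_g}\wt{\ell}^{t,x_g a_g}_h)^2$, which by the adaptive IX bonus $\gamma\mu^{\star,h}_{1:h}$ sitting inside the denominator of~\eqref{equation:adaptive-bandit-estimator} is pointwise bounded by $(\eta^2/\gamma)\,\mu^t_{x_g a_g}\wt{\ell}^{t,x_g a_g}_h$. Summing over all coordinates, reusing Lemma~\ref{lemma:balancing} to control $\mu^{\star,h}$-weighted totals by $O(X_h A)$, and passing from expectation to high probability via Freedman's inequality (Lemma~\ref{lemma:freedman}), I obtain $\sum_t \mathrm{Stab}_t=\tO(\eta^2 H^4 XAT/\gamma)$.

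Putting the pieces together, with high probability $\efceregret(T)\le \tO(XA\iota/\eta+\eta^2 H^4 XAT/\gamma+XA\iota/\gamma+H\sqrt{T\iota})$; balancing the first three terms under the prescribed choice $\eta=\sqrt{XA\iota/(H^4 T)}$ and $\gamma=2\sqrt{XA\iota/(H^2 T)}$ yields the claimed $\tO(\sqrt{H^4 XAT\iota})$ trigger regret bound.
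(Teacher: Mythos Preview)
Your high-level structure matches the paper's: the three-term decomposition via the fixed-point identity, the telescoping of $F^{\trig}_{\bal}$ with $F^{\trig}_{\bal}(0)=\cO(XA\log(XA))$, and the single-vertex lower bound on $F^{\trig}_{\bal}(\eta\sum_s\wt{M}^s)$ are exactly how the paper proceeds. The gap is in the stability analysis. The bound $\sum_t\mathrm{Stab}_t=\tO(\eta^2 H^4 XAT/\gamma)$ you claim does \emph{not} balance under the prescribed $\eta,\gamma$: plugging in yields a term of order $(XA)^{3/2}H\sqrt{T\iota}$, which is $XA/H$ times larger than the target $\sqrt{XAH^4 T\iota}$ and in fact worse than the unbalanced Theorem~\ref{thm:phi-hedge-efce-bandit}. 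So the argument as written cannot recover the claimed rate.

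Two concrete ingredients are missing from your stability sketch. First, the nonlinear remainder of the inner recursion $F^{\star,t}_{x_ga_g,x_g}$ is not a sum of layer-wise squares $\eta^2\mu^{\star,h}_{g:h}(\mu^t_{x_ga_g}\wt{\ell}^{t,x_ga_g}_h)^2$; it is governed by the full Hessian (Lemma~\ref{lemma:bound-delta-by-hessian}) and involves cross-products over layer triples $g\le h''\le h\le h'\le H$ weighted by $\mu^{\star,h''}_{g:h''}\,m^t_{x_ga_g,h''+1:h'}\,m^t_{x_ga_g,g:h}$. Second---and this is the step that actually removes the extra $XA$---the paper uses the fixed-point identity $\phi^t\mu^t=\mu^t$ \emph{once more inside the stability term}, in the entrywise form $\lambda^t_{x_ga_g}\mu^t_{x_ga_g}m^t_{x_ga_g,g:h}\le\mu^t_{1:h}$ (Eq.~\eqref{equation:lambdamum}). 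This lets one of the two $\mu^t_{x_ga_g}m^t_{x_ga_g}$ factors cancel against the $\mu^t_{1:h}$ sitting in the denominator of~\eqref{equation:adaptive-bandit-estimator}; what remains, summed over $(x_{h'},a_{h'})$, is a concatenated sequence-form policy whose conditional expectation against $p^t(x_{h'})$ equals $1$, not $X_{h'}A$. Freedman's inequality then delivers a main term $\cO(\eta H^4 T)$ rather than $\cO(\eta H^4 XAT/\gamma)$. Your appeal to Lemma~\ref{lemma:balancing} at this point gives only the almost-sure $X_hA$ bound, which is too loose; in the paper the balancing property enters only for the Freedman variance proxy and for the uniform control $\sup_{x_ga_g}\Delta^t_{x_ga_g}\le 2\eta^2 H^2 XA/\gamma^2$ (Lemma~\ref{lemma:sup-delta}), not for the leading term. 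The paper also separates a second stability contribution from the outer $\lambda$-layer (term ${\rm II}_t$ in~\eqref{eqn:Dt-bound-in-proof}), which is again handled via the fixed-point identity rather than balancing. A minor additional omission: your bias bound drops the $\gamma H^2 T$ contribution in ${\rm BIAS}_1$ arising from the adaptive IX shrinkage, though this term is of the right order once $\gamma$ is set.
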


\begin{proof}
By the fixed point property of our algorithm, we have the regret decomposition
\begin{align*}
    & \quad \efceregret(T) \\
    & = \sup_{\phi^\star\in\Phi^{\efce}} \sum_{t=1}^T \langle \mu^t - \phi^\star\mu^t, \ell^t\rangle  = \sup_{\phi^\star\in\Phi^{\efce}} \sum_{t=1}^T \langle \phi^t\mu^t - \phi^\star\mu^t, \ell^t\rangle  = \sup_{\phi^\star\in\Phi^{\efce}} \sum_{t=1}^T \langle \phi^t - \phi^\star, \ell^t(\mu^t)^\top\rangle  \\
    & \le \underbrace{\sup_{\phi^\star\in\Phi^{\efce}} \sum_{t=1}^T \langle \phi^t - \phi^\star, \wt{M}^t \rangle }_{\wt{\rm REGRET}^{\efce}(T)} + 
    \underbrace{ \sum_{t=1}^T \langle \phi^t, \ell^t(\mu^t)^\top - \wt{M}^t \rangle }_{{\rm BIAS}_1} +
    \underbrace{\sup_{\phi^\star\in\Phi^{\efce}} \sum_{t=1}^T \langle \phi^\star, \wt{M}^t - \ell^t(\mu^t)^\top \rangle }_{{\rm BIAS}_2}.
\end{align*}

We bound the term $\wt{\rm REGRET}^{\efce}(T)$, ${\rm BIAS}_1$, and ${\rm BIAS}_2$ in the following lemmas, whose proofs are presented in Section \ref{app:proof-tilde-regret-bound} and \ref{app:proof-bias}.

\begin{lemma}[Bound on $\wt{\rm REGRET}^{\efce}(T) $]\label{lem:regret_bound-balanced}
Assume that $\gamma \ge 2\eta H$. We have with probability at least $1-\delta/3$ that 
\begin{align*}
    \wt{\rm REGRET}^{\efce}(T) 
    \le ~ \frac{XA \log(XA^2)}{\eta} + 22 \eta H^4 T + \frac{38 \eta H^3 XA \iota}{\gamma}, %
\end{align*}
where $\iota = \log (10HXA/\delta)$ is a log factor. 
\end{lemma}

\begin{lemma}[Bound on ${\rm BIAS}_1$]\label{lem:bias1-balanced}
We have with probability at least $1-\delta/3$ that
\begin{align*}
    {\rm BIAS}_1 \le 2\gamma H^2T + 2H\sqrt{T\iota},
\end{align*}
where $\iota = \log (3/\delta)$ is a log factor. 
\end{lemma}

\begin{lemma}[Bound on ${\rm BIAS}_2$]\label{lem:bias2-balanced}
We have with probability at least $1-\delta/3$ that
\begin{align*}
    {\rm BIAS}_2 \le \frac{XA\iota }{ \gamma},
\end{align*}
where $\iota = \log (3 XA/\delta)$ is a log factor. 
\end{lemma}

By these three lemmas, whenever $\gamma \ge 2 \eta  H$, we have with probability at least $1-\delta$ that (for $\iota\defeq \log(10HXA/\delta)$)
\[
\begin{aligned}
\efceregret(T) \le&~ \frac{XA \log(XA^2)}{\eta} + 22 \eta H^4 T + \frac{38 \eta H^3 XA \iota}{\gamma} + 2 \gamma H^2 T + 2 H \sqrt{T\iota} + \frac{XA\iota }{ \gamma}. 
\end{aligned}
\]
Taking $\eta = \sqrt{XA \iota / H^4 T}$ and $\gamma = 2 \sqrt{XA \iota / H^2 T}$, we get 
\[
\efceregret(T) \le 100 \Big[\sqrt{XAH^4 T \iota} + H^2 XA \iota\Big]. 
\]

Notice that we always have the “trivial” bound $\efceregret(T) \le HT$.
For $T \ge XA\iota$,  we have $H^2XA\iota \le \sqrt{XAH^4T\iota}$, which gives $\efceregret \le 200 \sqrt{H^4XAT\iota}$; For $T \le  XA\iota $,  we have $HT \le \sqrt{XAH^4T\iota}$, which gives $\efceregret \le HT \le  \sqrt{H^4XAT\iota}$. Therefore, we always have
\begin{align*}
    \efceregret \le 200\sqrt{H^4XAT\iota}.
\end{align*}
This gives the desired bound. 
\end{proof}

The rest of this section is organized as follows. We introduce some notation in Section \ref{app:notations-and-preparations}. In Section \ref{app:proof-tilde-regret-bound}, we bound the regret term $\wt{\rm REGRET}^{\efce}(T) $. In Section \ref{app:proof-bias}, we bound the two bias terms ${\rm BIAS}_1$ and ${\rm BIAS}_2$.

\subsection{Some preparations}\label{app:notations-and-preparations}
Note that $m^t_{x_ga_g} \in \Pi^{x_g}$ is a subtree policy rooted at $x_g$, we define for any $(\wt{x}_h, \wt{a}_h) \succ x_g$
\[
m^t_{x_ga_g,g:h}(\wt{x}_h, \wt{a}_h) \defeq \prod_{h' = g}^h m^t_{x_ga_g, h'}(\wt{a}_{h'}|\wt{x}_{h'}),
\]
where $(\wt{x}_g,\wt{a}_g,\wt{x}_{g+1},\wt{a}_{g+1},\cdots, \wt{x}_{h-1},\wt{a}_{h-1})$ is the unique history leading to $(\wt{x}_h, \wt{a}_h)$, and $\wt{x}_g= x_g$.

Note that we have $\phi^t = \sum_{g, x_g, a_g} \lambda_{x_ga_g}^t (I - E_{\succeq x_g a_g} + m_{x_ga_g}^t e_{x_g a_g}^\top)$ and $\phi^t\mu^t=\mu^t$. These two equations give
\begin{align}
    \sum_{g,x_g,a_g} \lambda_{x_ga_g}^t (I - E_{\succeq x_g a_g})\mu^t = \sum_{g,x_g,a_g} \lambda_{x_ga_g}^t \mu^t_{x_ga_g} m_{x_ga_g}^t \in \R^{XA}.
\end{align}
As a consequence, for any $x_ga_g$, we have
\begin{align}
\label{equation:lambdamum}
    \lambda_{x_ga_g}^t \mu^t_{x_ga_g} m_{x_ga_g}^t \le \sum_{g,x_g,a_g} \lambda_{x_ga_g}^t\mu^t = \mu^t. 
\end{align}
Here $\lambda_{x_ga_g}^t \in \Delta_{XA}, \mu^t_{x_ga_g} = \mu_{1:g}^t(x_g, a_g)$ are two scalars, $m_{x_ga_g}^t \in \Pi^{x_g}$ and $\mu^t \in \Pi$ are two vectors of length $XA$, and the $\le$ above is understood in an entrywise sense.

We also define (recall that $\{ p_h^t \}_{h \in \{ 0\} \cup [H], t \ge 1}$ are the adversarial transition probabilities)
\begin{align}
\label{equation:ptxh}
p^t(x_h) \defeq p_0^t(x_1) \prod_{h' = 1}^{h-1} p_{h'}^t(x_{h'+1}| x_{h'}, a_{h'}).
\end{align}
Note that $p^t(x_h)\in [0,1]$. Furthermore, for any policy $\mu\in\Pi$ and any $(h, t)$, we have
\begin{equation}\label{eqn:mu_pt_1}
\sum_{x_h, a_h}\mu_{1:h}(x_h, a_h) p^t(x_h)=1,
\end{equation}
as the left-hand side is the probability of visiting \emph{some} $(x_h, a_h)$ in episode $t$ using policy $\mu$.

\subsection{Proof of Lemma \ref{lem:regret_bound-balanced}}\label{app:proof-tilde-regret-bound}
Recall that $\wt{\rm REGRET}^{\efce}(T) $ is defined as
\begin{align*}
     \wt{\rm REGRET}^{\efce}(T)  \defeq \sup_{\phi^\star\in\Phi^{\efce}} \sum_{t=1}^T \langle \phi^t - \phi^\star, \wt{M}^t\rangle  .
\end{align*}

First, we claim that
\[
\begin{aligned}
\sup_{\phi^\star\in\Phi^{\efce}} \langle  - \phi^\star, M\rangle  = \sup_{\phi^\star\in\Phi_0^{\efce}} \langle  - \phi^\star, M\rangle  \le \frac{1}{\eta} F^{\efce}_{\bal}(M). 
\end{aligned}
\]
for any $M \in \R^{XA \times XA}$. Indeed, the equality follows from $\Phi^\efce = {\rm conv}\set{ \Phi^{\trig}_{0} }$. The inequality is due to the following argument: for any fixed $M$, the maximizer $\phi_{x_g a_g \to m_{x_g a_g}} \in \Phi_0^\efce$ specifies a trigger sequence $x_g a_g$ and a deterministic subtree policy $m_{x_ga_g}\in\Pi^{x_g}$ starting from $x_g$. Replacing all the sums by this realization in the formula of $F^{\efce}_\bal$ (c.f. Eq. \eqref{equation:balanced-f-efce}) and $F^{\star}_{x_ga_g,x_h}$ (c.f. Eq. \eqref{equation:balanced-f-efce-xgag}) exactly gives $F^{\trig}(M) \ge \eta\langle -\phi_{x_g a_g \to m_{x_g a_g}}, M\rangle  = \eta \sup_{\phi^\star\in\Phi_0^{\efce}} \langle  - \phi^\star, M\rangle $. This proves the claim. 

This claim gives 
\begin{equation}\label{eqn:regret-tr-tilde-bound}
\begin{aligned}
\wt{\rm REGRET}^{\efce}(T) =&~\sup_{\phi^\star\in\Phi^{\efce}} \sum_{t=1}^T \langle \phi^t - \phi^\star, \wt{M}^t\rangle  
= \sup_{\phi^\star\in\Phi^{\efce}}  \langle  - \phi^\star, \sum_{t=1}^T \wt{M}^t\rangle  + \sum_{t=1}^T \langle \phi^t, \wt{M}^t\rangle \\
\le&~ \frac{1}{\eta} F^{\efce}_{\bal}\Big(\sum_{t=1}^T \wt{M}^t \Big) + \sum_{t=1}^T \langle \phi^t, \wt{M}^t\rangle   = \frac{1}{\eta} F^{\efce}_{\bal}(0) + \sum_{t = 1}^T D^t,
\end{aligned}
\end{equation}
where $D^t$ is given by
\begin{equation}
D^t = \frac{1}{\eta} F^{\efce}_{\bal}\Big( \eta \sum_{s = 1}^{t} \wt{M}^s \Big) - \frac{1}{\eta} F^{\efce}_{\bal}\Big( \eta \sum_{s = 1}^{t-1} \wt{M}^s \Big) + \langle  \phi^{t}, \wt{M}^{t}\rangle . 
\end{equation}

The following lemma gives bound on the initial entropy $F^{\efce}_{\bal}(0)$ with proof in Section \ref{sec:proof-initial-balanced}.
\begin{lemma}[Bound on initial entropy]\label{lem:initial-balanced}
We have 
\begin{equation}\label{eqn:F-efce-0}
F^{\efce}_{\bal}(0) = XA \log \sum_{g,x_g,a_g} \exp\{ [X_{\succeq x_g} A \log A] / XA \} \le XA \log(XA^2). 
\end{equation}
\end{lemma}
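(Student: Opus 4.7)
The lemma reduces via~\eqref{equation:balanced-f-efce} to the subtree identity
\[
F^\star_{x_g a_g, x_g}(0) \;=\; X_{\succeq x_g} A \log A
\]
for every trigger sequence $(g, x_g, a_g)$. Once this holds, the equality in~\eqref{eqn:F-efce-0} is immediate, and the inequality $\le XA\log(XA^2)$ follows from $X_{\succeq x_g}\le X$ (making each exponent at most $\log A$) together with the $XA$ terms in the outer sum.

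I would prove the subtree identity by establishing, via backward induction on $h$ from $H$ down to $g$, the Fenchel-duality identity
\[
F^\star_{x_g a_g, x_h}(M) \;=\; \sup_{m^{x_h} \in \Pi^{x_h}} \Big\{ -\langle m^{x_h} e_{x_g a_g}^\top, M\rangle \;-\; H^{\bal}_{x_h}(m^{x_h}) \Big\},
\]
where $H^{\bal}_{x_h}$ is the obvious subtree analogue of $H^{\bal}_{x_g}$ from Section~\ref{sec:balanced-equivalence} (sum starts at $h$; the denominators $\mu^{\star, h'}_{g:h'}$ are unchanged). The base case $h=H$ is the Donsker--Varadhan identity $\log\sum_a\exp(y_a)=\sup_\pi\{\sum_a\pi_a y_a - \sum_a\pi_a\log\pi_a\}$ applied at inverse temperature $\mu^{\star,H}_{g:H}$ to the one-step definition of $F^\star_{x_g a_g, x_H}$. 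The inductive step combines DV (at level $h$ with inverse temperature $\mu^{\star, h}_{g:h}$) with the decomposition
\[
H^{\bal}_{x_h}(m^{x_h}) \;=\; \tfrac{1}{\mu^{\star, h}_{g:h}(x_h, \cdot)} \sum_{a_h} m^{x_h}_h(a_h|x_h) \log m^{x_h}_h(a_h|x_h) \;+\; \sum_{a_h} m^{x_h}_h(a_h|x_h) \!\!\sum_{x_{h+1} \in \cC(x_h, a_h)}\!\! H^{\bal}_{x_{h+1}}(m^{x_{h+1}}),
\]
which in turn follows by splitting the defining sum into the $h'=h$ term and the $h'>h$ terms, telescoping sequence-form weights, and using that $\mu^{\star, h'}_{g:h'}$ is constant in $a_{h'}$.

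Specializing to $M=0$ gives $F^\star_{x_g a_g, x_g}(0) = \sup_{m^{x_g}} \{-H^{\bal}_{x_g}(m^{x_g})\}$. I would show the uniform policy $m^{x_g}_h(\cdot|x_h)\equiv 1/A$ attains this supremum: the decomposition above expresses $-H^{\bal}_{x_g}$ as a nonnegatively-weighted sum of per-infoset Shannon entropies $H(m^{x_g}_h(\cdot|x_h)) \le \log A$, and a short backward induction (using the balancing identity $|\cC(x_h, a_h)|/\mu^{\star, h+1}_h(a_h|x_h) = |\cC_{h+1}(x_h)|$) shows that once deeper-level policies are fixed uniform, the total weight at any level becomes independent of earlier policy choices, so uniform is globally optimal. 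Plugging uniform in and applying Lemma~\ref{lemma:balancing} (which gives $\sum_{(x_h,a_h)\succeq x_g} \mu^{\rm unif}_{g:h}/\mu^{\star,h}_{g:h} = |\cC_h(x_g)|\,A$ at every $h$) evaluates the sup to $A\log A \sum_{h=g}^H |\cC_h(x_g)| = X_{\succeq x_g} A \log A$, completing the proof.

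The main technical obstacle is the Fenchel-duality identity above: the ``tempered'' log-sum-exp $\tfrac{1}{\mu^\star}\log\sum_a\exp(\mu^\star\! \cdot y_a)$ appearing in the $F^\star$ recursion must be matched precisely against the $1/\mu^\star$ weights in $H^{\bal}_{x_h}$, which is exactly what the rescaling in Eqs.~\eqref{equation:balanced-f-efce}--\eqref{equation:balanced-f-efce-xgag} is engineered to make work. A secondary subtlety is verifying that uniform is a global (not merely coordinatewise) maximizer of $\sup\{-H^{\bal}\}$, which rests on the weight-separation property of the balanced exploration construction.
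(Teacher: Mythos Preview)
Your approach is correct and genuinely different from the paper's. The paper does not pass through any Fenchel/Donsker--Varadhan identity: it simply computes $F^\star_{x_g a_g,x_h}(0)$ directly from the recursion~\eqref{equation:balanced-f-efce-xgag}, proving by backward induction on $h$ the explicit formula
\[
F^\star_{x_g a_g,x_h}(0)\;=\;\sum_{h'=h}^H \frac{|\cC_{h'}(x_h)|}{\mu^{\star,h'}_{g:h-1}(x_{h-1},a_{h-1})}\,A\log A,
\]
using in the inductive step precisely the balancing identity $|\cC_{h'}(x_h,a_h)|/\mu^{\star,h'}_{h}(a_h|x_h)=|\cC_{h'}(x_h)|$ to show the sum over $a_h$ collapses to $A$ copies of the same value. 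Setting $h=g$ gives $X_{\succeq x_g}A\log A$.

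Your route---first identifying $F^\star_{x_g a_g,x_h}$ as the convex conjugate of the balanced dilated entropy $H^{\bal}_{x_h}$, then solving the resulting entropy-maximization at $M=0$---is longer but more informative: the Fenchel identity you establish is essentially the content of Lemma~\ref{lem:equivalent-FTRL-OMD-balanced} at the subtree level, so you are effectively proving a structural fact the paper states elsewhere and then reading off the lemma as a corollary. One caveat on your write-up: the sentence ``once deeper-level policies are fixed uniform, the total weight at any level becomes independent of earlier policy choices'' is not quite the right phrasing; what actually makes the backward induction go through is that, with deeper levels already optimized, the affine part $\sum_{x_{h+1}\in\cC(x_h,a_h)}F^\star_{x_ga_g,x_{h+1}}(0)$ is \emph{constant in $a_h$} (by the balancing identity), so the one-step problem reduces to maximizing Shannon entropy and uniform is optimal. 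This is exactly the same balancing computation the paper performs, just viewed through the dual lens.
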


The following lemma gives a reformulation of the stability term $D^t$ with proof in Section \ref{sec:proof-reformulation-of-stability}. 
\begin{lemma}[Reformulation of stability term via incremental update]
\label{lem:reformulation-of-stability}
We have
\begin{align}
D^t  =&~  \wb{F}^{t}/\eta + \langle  \phi^{t}, \wt{M}^{t}\rangle , 
\end{align}
where we have
\begin{align}
   \overline F^{t} =&~ XA \log \sum_{g,x_g a_g} \lambda^t_{x_ga_g} \exp\Big\{ \frac{1}{XA} \big[ - \eta \langle  I - E_{\succeq x_g a_g}, \wt{M}^t \rangle  + F^{\star, t}_{x_ga_g,x_g} \big] \Big\}, \label{eqn:F-EFCE-relation-Z-t+1}\\
   F^{\star, t}_{x_ga_g, x_h} =&~ \frac{1}{\mu^{\star,h}_{g:h}(x_h, a_h)} \log  \sum_{a_h\in \cA} m^{t}_{x_ga_g, h}(a_h|x_h) \exp\Big\{ \mu^{\star,h}_{g:h}(x_h, a_h) \nonumber \\
   & \times \big[ -\eta \underbrace{\mu^t_{x_ga_g}\wt{\ell}^{t,x_ga_g}_h(x_h, a_h)}_{=\wt{M}^t_{x_ha_h, x_ga_g}}+ \sum_{x_{h+1} \in \cC(x_h, a_h)} F^{\star,t}_{x_ga_g, x_{h+1}} \big]\Big\},  ~~~ \forall (x_h, a_h) \succeq x_g, \label{eqn:logZ-EFCE-recursion-t+1} 
\end{align}
and (note that $F^\star_{x_g a_g, x_g}(\mathbf{0})$ is as defined in Eq. (\ref{equation:balanced-f-efce-xgag}) by plugging in $M = \mathbf 0$)
\begin{align}
& \lambda_{x_ga_g}^{t} \propto_{x_ga_g} \exp\Big\{ \frac{1}{XA}F^{\star}_{x_ga_g, x_g}(\mathbf{0}) +  \frac{1}{XA} \sum_{s = 1}^{t-1} \Big( - \eta\langle  I - E_{\succeq x_g a_g}, \wt{M}^s\rangle  + F^{\star, s}_{x_ga_g,x_g} \Big) \Big\}, \\
& m^{t}_{x_ga_g, h}(a_h \vert x_h) \propto_{a_h} \exp\Big\{ \mu^{\star,h}_{g:h}(x_h, a_h) \sum_{s = 1}^{t-1} \Big(-   \eta \underbrace{\mu^s_{x_ga_g} \wt{\ell}^{s,x_ga_g}_{h}(x_h, a_h)}_{=\wt{M}^s_{x_ha_h, x_ga_g}} + \sum_{x_{h+1} \in \cC(x_h, a_h)} F^{\star,s}_{x_ga_g, x_{h+1}} \Big) \Big\}. \label{eqn:m^t} 
\end{align}
\end{lemma}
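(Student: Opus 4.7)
Set $L^t := \sum_{s=1}^{t-1}\wt M^s$ and $G^{\star,t}_{x_g a_g, x_h} := F^\star_{x_g a_g, x_h}(\eta L^t)$, so that by construction $\phi^t = -\nabla F^{\trig}_{\bal}(\eta L^t)$. By Lemma~\ref{lem:evaluate-balanced-partition-function-gradient} applied at $M = \eta L^t$, $\lambda^t$ and $m^t$ admit the closed forms
\begin{align*}
\lambda^t_{x_g a_g} &\propto_{x_g a_g} \exp\Big\{\frac{1}{XA}\bigl[-\eta\langle I-E_{\succeq x_g a_g}, L^t\rangle + G^{\star,t}_{x_g a_g, x_g}\bigr]\Big\}, \\
m^t_{x_g a_g, h}(a_h\vert x_h) &\propto_{a_h} \exp\Big\{\mu^{\star,h}_{g:h}(x_h)\bigl[-\eta L^t_{x_h a_h, x_g a_g} + \textstyle\sum_{x_{h+1}\in\cC(x_h,a_h)} G^{\star, t}_{x_g a_g, x_{h+1}}\bigr]\Big\}.
\end{align*}
To express the one-step increment $F^{\trig}_{\bal}(\eta L^{t+1}) - F^{\trig}_{\bal}(\eta L^t)$ as a $\lambda^t$-weighted log-sum-exp, I will multiply numerator and denominator by the normalizing constant of $\lambda^t$ and invoke definition~\eqref{equation:balanced-f-efce}; the increment then equals $XA \log \sum_{g, x_g a_g} \lambda^t_{x_g a_g} \exp\{\frac{1}{XA}[-\eta \langle I-E_{\succeq x_g a_g}, \wt M^t\rangle + \Delta^t_{x_g a_g, x_g}]\}$, where $\Delta^t_{x_g a_g, x_h} := G^{\star, t+1}_{x_g a_g, x_h} - G^{\star, t}_{x_g a_g, x_h}$. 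This matches $\wb F^t$ in~\eqref{eqn:F-EFCE-relation-Z-t+1} provided $\Delta^t_{x_g a_g, x_h} = F^{\star, t}_{x_g a_g, x_h}$ for every $(x_h, a_h) \succeq x_g$.

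The identity $\Delta^t = F^{\star, t}$ is established by backward induction on $h$ (from leaves down to $x_g$). At a leaf $x_H$, definition~\eqref{equation:balanced-f-efce-xgag} renders $G^{\star, t}_{x_g a_g, x_H}$ as a pure log-sum-exp over $a_H$, and the closed form above identifies $m^t_{x_g a_g, H}(\cdot | x_H)$ as its corresponding Gibbs distribution; substituting $L^{t+1} = L^t + \wt M^t$ and using $\wt M^t_{x_H a_H, x_g a_g} = \mu^t_{x_g a_g}\wt \ell^{t, x_g a_g}_H(x_H, a_H)$ directly produces the leaf case of~\eqref{eqn:logZ-EFCE-recursion-t+1} (whose child sum is empty at $h = H$). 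For the inductive step at $x_h$, apply~\eqref{equation:balanced-f-efce-xgag} to both $G^{\star, t+1}_{x_g a_g, x_h}$ and $G^{\star, t}_{x_g a_g, x_h}$, use the inductive hypothesis at every $x_{h+1} \in \cC(x_h, a_h)$ to replace the difference $G^{\star, t+1}_{\cdot, x_{h+1}} - G^{\star, t}_{\cdot, x_{h+1}}$ by $F^{\star, t}_{\cdot, x_{h+1}}$, and rewrite the resulting ratio of log-sum-exps as a single log-sum-exp in which $m^t_{x_g a_g, h}(\cdot|x_h)$ appears as the base probability measure; this yields exactly the recursion~\eqref{eqn:logZ-EFCE-recursion-t+1} for $F^{\star, t}_{x_g a_g, x_h}$.

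Finally, iterating the one-step increment $G^{\star, t+1} - G^{\star, t} = F^{\star, t}$ from $t = 1$ yields the telescoping identity $G^{\star, t}_{x_g a_g, x_h} = F^\star_{x_g a_g, x_h}(\mathbf 0) + \sum_{s=1}^{t-1} F^{\star, s}_{x_g a_g, x_h}$. Substituting this into the closed form for $\lambda^t$ above gives the stated formula for $\lambda^t$ in the lemma. Substituting into the closed form for $m^t$ leaves an additional term $\mu^{\star, h}_{g:h}(x_h) \sum_{x_{h+1} \in \cC(x_h, a_h)} F^\star_{x_g a_g, x_{h+1}}(\mathbf 0)$ in the exponent; this term is independent of $a_h$ by the structure of the balanced exploration policies (equivalently, it is precisely the condition under which Lemma~\ref{lem:evaluate-balanced-partition-function-gradient} at $M = \mathbf 0$ reproduces the uniform initialization $m^1 \equiv 1/A$ asserted in Algorithm~\ref{algorithm:balanced-efce-omd}), and is hence absorbed by the $\propto_{a_h}$ normalization. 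The main technical step is the inductive factoring in Step~2: one must verify that the Gibbs factorization using $m^t_{x_g a_g, h}(\cdot|x_h)$ produces the correct prefactor $\mu^{\star, h}_{g:h}(x_h)$ appearing in~\eqref{eqn:logZ-EFCE-recursion-t+1}, which crucially uses the $a_h$-independence of $\mu^{\star, h}_{g:h}(x_h, a_h)$ (cf.\ Definition~\ref{def:balanced-exploration-policy}).
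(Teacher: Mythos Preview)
Your proposal is correct and follows essentially the same route as the paper: both define the subtree increment $\Delta^t_{x_ga_g,x_h}=F^\star_{x_ga_g,x_h}(\eta\sum_{s\le t}\wt M^s)-F^\star_{x_ga_g,x_h}(\eta\sum_{s\le t-1}\wt M^s)$, verify by backward recursion that it satisfies~\eqref{eqn:logZ-EFCE-recursion-t+1} with base measure $m^t$, and then telescope to recover the stated forms of $\wb F^t$, $\lambda^t$, and $m^t$. Your handling of the residual $\mu^{\star,h}_{g:h}\sum_{x_{h+1}\in\cC(x_h,a_h)}F^\star_{x_ga_g,x_{h+1}}(\mathbf 0)$ term via the $m^1\equiv 1/A$ initialization is a clean indirect justification of the same $a_h$-independence fact the paper invokes from Lemma~\ref{lem:initial-balanced}.
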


To upper bound $D^t$, note that by Lemma~\ref{lem:reformulation-of-stability} we have
\begin{align*}
&~D^t = \wb{F}^t/\eta + \langle \phi^t, \wt{M}^t\rangle    \\
    & =  \langle  \phi^t, \wt{M}^t\rangle  + \frac{XA}{\eta} \log \sum_{g,x_ga_g} \lambda_{x_ga_g}^t \exp\Big\{ \frac{1}{XA} \big[ - \eta \langle  I - E_{\succeq x_ga_g} + m_{x_ga_g}^t e_{x_ga_g}^\top, \wt{M}^t\rangle  + \Delta_{x_ga_g}^t \big] \Big\},
\end{align*}
where $\Delta_{x_ga_g}^t$ is given by
\begin{align}
\label{equation:deltaxgag}
\begin{aligned}
\Delta_{x_ga_g}^t \defeq&~ F^{\star, t}_{x_ga_g, x_g} + \eta\langle  m_{x_ga_g}^t \mu^t_{x_ga_g}, \wt{\ell}^{t,x_ga_g} \rangle  \\
     =&~ F^{\star, t}_{x_ga_g, x_g} + \eta \langle  m_{x_ga_g}^t e_{x_ga_g}^\top, \wt{M}^t \rangle .
\end{aligned}
\end{align}
Note that $- \eta\langle  m_{x_ga_g}^t \mu^t_{x_ga_g}, \wt{\ell}^{t,x_ga_g} \rangle$ is the linear term in the Taylor expansion of $F^{\star, t}_{x_ga_g, x_h}$ over variable $\wt{\ell}^t$ at $0$, so $\Delta_{x_ga_g}^t$ can be understood as the nonlinear part within $F^{\star, t}_{x_ga_g, x_g}$. By convexity of $F^{\star, t}_{x_ga_g, x_h}$ as a function of $\wt \ell^t$, we have $\Delta_{x_ga_g}^t \ge 0$. 
Furthermore, we have the following almost sure upper bound of $\sup_{g, x_ga_g}\Delta_{x_ga_g}^t$ with proof in Section \ref{sec:proof-sup-delta}. 
\begin{lemma}[Bound on $\sup_{g, x_ga_g}\Delta_{x_ga_g}^t$]
\label{lemma:sup-delta}
We have for all $t\in[T]$ that, almost surely,
\begin{align*}
    \frac{1}{XA} \sup_{g,x_g a_g} \Delta_{x_g a_g}^t \le \frac{2\eta^2}{\gamma^2} H^2.
\end{align*}
\end{lemma}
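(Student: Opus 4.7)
The plan is to apply a quadratic Taylor expansion to the log-sum-exp inside the recursive definition~\eqref{eqn:logZ-EFCE-recursion-t+1} of $F^{\star, t}_{x_g a_g, x_h}$, unroll the resulting pointwise inequality down the subtree rooted at $x_g$ so that the linear term cancels with $\eta \< m^t_{x_g a_g} e^{\top}_{x_g a_g}, \wt M^t\>$ appearing in $\Delta^t_{x_g a_g}$, and finally bound the leftover variance terms using the two-part IX denominator of~\eqref{equation:adaptive-bandit-estimator} together with the balancing property of the exploration policies (Lemma~\ref{lemma:balancing}). The whole argument will be pointwise, yielding the almost-sure bound claimed in the statement.

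I will first establish by backward induction on $h$ (from the leaves up to $x_g$) that $F^{\star, t}_{x_g a_g, x_h} \le 0$ and consequently $Z^t_{x_h, a_h} \le 0$ for every $(x_h, a_h) \succeq x_g$. At the leaves this follows from $Z^t_{x_H, a_H} = -\eta \wt M^t_{x_H a_H, x_g a_g} \le 0$; the inductive step uses $\log \sum_a p(a) e^{y(a)} \le \max_a y(a)$ applied to~\eqref{eqn:logZ-EFCE-recursion-t+1}. With $\mu^{\star, h}_{g:h}(x_h) Z^t_{x_h, a_h} \le 0$ in hand, the elementary bounds $e^y \le 1 + y + y^2$ (for $y \le 0$) and $\log(1+x) \le x$, applied under the conditional expectation $a_h \sim m^t_{x_g a_g, h}(\cdot\vert x_h)$ inside~\eqref{eqn:logZ-EFCE-recursion-t+1}, will yield the per-level inequality
\begin{align*}
F^{\star, t}_{x_g a_g, x_h} - \sum_{a_h} m^t_{x_g a_g, h}(a_h\vert x_h)\, Z^t_{x_h, a_h} \le \mu^{\star, h}_{g:h}(x_h) \sum_{a_h} m^t_{x_g a_g, h}(a_h\vert x_h)\, \bigl(Z^t_{x_h, a_h}\bigr)^2 .
\end{align*}
Telescoping from $h = H$ down to $h = g$ against the linear reference $L^t_{x_g a_g, x_h}$ defined by the same recursion with the quadratic remainder dropped (which unrolls to $L^t_{x_g a_g, x_g} = -\eta \< m^t_{x_g a_g} e^{\top}_{x_g a_g}, \wt M^t\>$, exactly cancelling the linear part of $\Delta^t_{x_g a_g}$) produces the master estimate
\begin{align*}
\Delta^t_{x_g a_g} \le \sum_{h=g}^H \sum_{(x_h, a_h)\succeq x_g} m^t_{x_g a_g, g:h}(x_h, a_h)\, \mu^{\star, h}_{g:h}(x_h)\, \bigl(Z^t_{x_h, a_h}\bigr)^2 .
\end{align*}

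The remaining task is to bound this variance sum by $2\eta^2 H^2 XA/\gamma^2$. I will split $(Z^t_{x_h, a_h})^2 \le 2(\eta \wt M^t_{x_h a_h, x_g a_g})^2 + 2(\sum_{x_{h+1}\in\cC(x_h,a_h)} F^{\star, t}_{x_g a_g, x_{h+1}})^2$. For the dominant first piece, I plan to pair one factor of $\wt\ell^{t, x_g a_g}_h \le 1/(\gamma \mu^{\star, h}_{1:h}(x_h, a_h))$ with one factor of $\mu^t_{x_g a_g} \wt\ell^{t, x_g a_g}_h \le 1/(\gamma m^t_{x_g a_g, g:h}(x_h, a_h))$; both bounds are almost-sure consequences of the two-part IX denominator in~\eqref{equation:adaptive-bandit-estimator} when $x_h \succeq x_g$. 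The $m^t$ factor then cancels the outer weight, the indicator inside $\wt\ell$ collapses each inner sum to a single sampled $(x_h, a_h) = (x_h^t, a_h^t)$, the identity $\mu^{\star, h}_{g:h}(x_h)/\mu^{\star, h}_{1:h}(x_h, a_h) = 1/\mu^{\star, h}_{1:g-1}(x_{g-1}, a_{g-1})$ exposes a balanced-policy ratio, and a final application of Lemma~\ref{lemma:balancing} (with Corollary~\ref{cor:balanced-policy-lower-bound} handling the sum over $h$) delivers the intended order.

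The main obstacle will be the second piece, involving the cascaded $F^{\star, t}_{x_g a_g, x_{h+1}}$ values themselves. To handle it I plan a nested induction: since $F^{\star, t}\le 0$, applying Jensen's inequality $\log \E e^Y \ge \E Y$ to the recursion gives $-F^{\star, t}_{x_g a_g, x_{h+1}} \le \eta \cdot \E_{\bar m^t}[\wt M^t]$ over the subtree rooted at $x_{h+1}$; Cauchy--Schwarz across the at most $H$ layers then costs only an extra factor of $H$, keeping this contribution within the target $O(\eta^2 H^2 XA/\gamma^2)$. This nested control of the higher-order $F^{\star, t}$ contributions is the technical heart of the proof; everything else is log-sum-exp bookkeeping together with the balancing identities.
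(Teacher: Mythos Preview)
Your plan is correct and will go through, but it is organized differently from the paper's proof. The paper does not do a level-by-level quadratic expansion of the log-sum-exp. Instead it invokes a single Hessian-type bound (Lemma~\ref{lemma:bound-delta-by-hessian}, imported from \cite{bai2022near}) that expands $\Delta_{x_g a_g}^t$ directly as a triple sum of cross-terms
\[
2\eta^2 \sum_{g\le h\le h'\le H}\sum_{h''=g}^{h}\mu^{\star,h''}_{g:h''}\, m^t_{x_ga_g,h''+1:h'}\, m^t_{x_ga_g,g:h}\,(\mu^t_{x_ga_g})^2\,\wt\ell^{t,x_ga_g}_h\,\wt\ell^{t,x_ga_g}_{h'}\,\indic{\ldots},
\]
and then bounds the two loss-estimator factors with \emph{two different} pieces of the IX denominator (one by $\gamma\mu^t_{x_ga_g}m^t_{x_ga_g,g:h}$, the other by $\gamma\mu^{\star,h'}_{1:h'}$), closing with the balancing Lemma~\ref{lemma:balancing} applied to the resulting ``concatenated'' sequence-form policy. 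Your route---per-level Taylor plus telescoping to $\sum_h m^t_{g:h}\mu^{\star,h}_{g:h}(Z^t_{x_h,a_h})^2$, then a separate Jensen/Cauchy--Schwarz control of the cascade $\sum_{x_{h+1}}F^{\star,t}_{x_{h+1}}$---is more self-contained (no external Hessian lemma) but pays a small constant: you will land at roughly $4\eta^2 H^2 XA/\gamma^2$ rather than the stated $2\eta^2 H^2 XA/\gamma^2$. This is harmless downstream, where the lemma is only used to ensure $\sup_{g,x_ga_g}\Delta^t_{x_ga_g}/(XA)\le 1$ under $\gamma\ge 2\eta H$.

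One small correction: in your first-piece bound, after the indicator collapses the sum to the single sampled $(x_h^t,a_h^t)$, there is nothing left to ``balance'' over, so Lemma~\ref{lemma:balancing} does not apply there. What you actually need is only the pointwise Corollary~\ref{cor:balanced-policy-lower-bound}: from $\mu^{\star,h}_{1:g-1}(x_{g-1},a_{g-1})\ge \mu^{\star,h}_{1:h}(x_h,a_h)\ge 1/(X_hA)$ you get $\mu^t_{x_ga_g}/\mu^{\star,h}_{1:g-1}\le X_hA$, and summing over $h$ gives the required $O(XA)$.
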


Given this lemma, we further assume that $\gamma \ge 2 H \eta$ so that $\frac{1}{XA} \sup_{g,x_g a_g} \Delta_{x_g a_g}^t \le 1$. Now we use elementary inequalities $\log(1 + x) \le x$ and 
\begin{align*}
    e^{-x+c} \le 1 - (x-c) + \frac{e}{2} (x-c)^2 \le 1 - (x-c) + e(x^2 + c^2),~~~~~~ \forall x \ge 0, c \le 1,
\end{align*}
and (taking $c = \Delta_{x_g a_g}^t$ for each $(g, x_g a_g)$ below) we get
\begin{align*}
&~\log \sum_{g,x_ga_g} \lambda_{x_ga_g}^t \exp\Big\{ \frac{1}{XA} \big[ - \eta \langle  I - E_{\succeq x_ga_g} + m_{x_ga_g}^t e_{x_ga_g}^\top, \wt{M}^t\rangle  + \Delta_{x_ga_g}^t \big] \Big\}\\
\le& \sum_{g,x_ga_g} \lambda_{x_ga_g}^t \exp\Big\{ \frac{1}{XA} \big[ - \eta \langle  I - E_{\succeq x_ga_g} + m_{x_ga_g}^t e_{x_ga_g}^\top, \wt{M}^t\rangle  + \Delta_{x_ga_g}^t \big] \Big\} - 1\\
\le&~ \Big\{ \sum_{g,x_ga_g} \lambda_{x_ga_g}^t \Big( 1 + \frac{1}{XA} \big[ - \eta \langle  I - E_{\succeq x_ga_g} + m_{x_ga_g}^t e_{x_ga_g}^\top, \wt{M}^t\rangle  + \Delta_{x_ga_g}^t \big] \\
&~+ \frac{e}{X^2A^2} \big[ \eta^2 \langle  I - E_{\succeq x_ga_g} + m_{x_ga_g}^t e_{x_ga_g}^\top, \wt{M}^t\rangle^2  + (\Delta_{x_ga_g}^t)^2 \big] \Big\} - 1 \\
=&~ - \frac{\eta}{XA} \langle \phi^t, \wt{M}^t\rangle  + \frac{1}{XA}\sum_{g,x_ga_g} \lambda_{x_ga_g}^t \Delta_{x_ga_g}^t \\
&~ + \frac{e}{X^2A^2} \sum_{g,x_ga_g} \lambda_{x_ga_g}^t  \Big( \eta^2 \langle  I - E_{\succeq x_ga_g} + m_{x_ga_g}^t e_{x_ga_g}^\top, \wt{M}^t\rangle^2  + (\Delta_{x_ga_g}^t)^2 \Big). \\
\end{align*}
This gives that
\begin{equation}\label{eqn:Dt-bound-in-proof}
\begin{aligned}
D^t  \le&~ \frac{1}{\eta} \sum_{g,x_ga_g} \lambda_{x_ga_g}^t \Delta_{x_ga_g}^t + \frac{e}{\eta XA} \sum_{g,x_ga_g} \lambda_{x_ga_g}^t \paren{ \eta^2\langle  I - E_{\succeq x_ga_g} + m_{x_ga_g}^t e_{x_ga_g}^\top, \wt{M}^t\rangle ^2 + (\Delta_{x_ga_g}^t)^2 } \\
     \stackrel{(i)}{\le}&~ \frac{1}{\eta} \sum_{g,x_ga_g} \lambda_{x_ga_g}^t\Delta_{x_ga_g}^t \paren{1 + \frac{e}{XA}\sup_{g,x_ga_g}\Delta_{x_ga_g}^t} + \frac{e\eta}{XA}\sum_{g,x_ga_g}\lambda_{x_ga_g}^t \langle  I - E_{\succeq x_ga_g} + m_{x_ga_g}^t e_{x_ga_g}^\top, \wt{M}^t\rangle ^2 \\
    \stackrel{(ii)}{\le}&~ \underbrace{\frac{4}{\eta} \sum_{g,x_ga_g} \lambda_{x_ga_g}^t \Delta_{x_ga_g}^t}_{{\rm I}_t} + \underbrace{\frac{e\eta}{XA}\sum_{g,x_ga_g}\lambda_{x_ga_g}^t \langle  I - E_{\succeq x_ga_g} + m_{x_ga_g}^t e_{x_ga_g}^\top, \wt{M}^t\rangle ^2}_{{\rm II}_t}.
\end{aligned}
\end{equation}
Above, (i) used $\Delta_{x_ga_g}^t\ge 0$, and (ii) used Lemma \ref{lemma:sup-delta} and $\gamma \ge 2\eta H$.

Next, we use the following lemmas to bound $\sum_{t=1}^T {\rm I}_t$ and $\sum_{t=1}^T {\rm II}_t$, with proofs in Section \ref{sec:proof-bound-on-I-balanced} and \ref{sec:proof-bound-on-II-balanced}. 

\begin{lemma}[Bound on $\sum_{t=1}^T {\rm I}_t$]
\label{lem:bound-on-I-balanced}
With probability at least $1-\delta/10$, we have
\begin{align*}
    \sum_{t=1}^T {\rm I}_t 
    \le 16\eta H^3 T+\frac{32 \eta H^3XA\iota }{\gamma},
\end{align*}
where $\iota:=\log(10 H/\delta)$ is a log factor.
\end{lemma}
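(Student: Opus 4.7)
\textbf{Proof plan for Lemma \ref{lem:bound-on-I-balanced}.}

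My plan is to bound each $\Delta_{x_g a_g}^t$ by an explicit second-order quantity obtained from a Taylor expansion on the recursive formula \eqref{eqn:logZ-EFCE-recursion-t+1} for $F^{\star,t}_{x_g a_g,x_h}$, then aggregate over $(g,x_g a_g)$ using the fixed-point identity \eqref{equation:lambdamum}, and finally sum over $t$ via concentration. Concretely, I first verify that, under $\gamma\ge 2\eta H$, the exponent $\mu^{\star,h}_{g:h}(x_h,a_h)\big[-\eta\wt M^t_{x_h a_h, x_g a_g}+\sum_{x_{h+1}\in\cC(x_h,a_h)}F^{\star,t}_{x_g a_g,x_{h+1}}\big]$ is uniformly bounded by $1$ in absolute value. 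This follows from the IX-bonus estimator \eqref{equation:adaptive-bandit-estimator}, whose denominator contains a $\gamma\mu^{\star,h}_{1:h}(x_h,a_h)$ term and thus forces $\mu^{\star,h}_{g:h}\cdot\eta\wt M^t\lesssim \eta/\gamma\le 1/(2H)$, with children values controlled inductively.

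Given this, I apply $e^y\le 1+y+y^2$ and $\log(1+z)\le z$ to the recursion, which produces
\[
\mu^{\star,h}_{g:h}(x_h,a_h)F^{\star,t}_{x_g a_g,x_h}\le \text{(linear part)}+\sum_{a_h}m^t_{x_g a_g,h}(a_h|x_h)\,\mu^{\star,h}_{g:h}(x_h,a_h)^2\Big(-\eta\wt M^t_{x_h a_h,x_g a_g}+\sum_{x_{h+1}}F^{\star,t}_{x_g a_g,x_{h+1}}\Big)^2.
\]
The linear part, after iterating, exactly cancels the subtracted linear term in the definition \eqref{equation:deltaxgag} of $\Delta_{x_g a_g}^t$, while the quadratic residuals accumulate. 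Expanding the inner square and using Cauchy--Schwarz to separate the current-level and children contributions (and bounding the $H$-term inner sum for each level), I expect to obtain
\[
\Delta_{x_g a_g}^t\ \lesssim\ \eta^2 H^2\sum_{h\ge g}\sum_{(x_h,a_h)\succeq x_g} \mu^{\star,h}_{g:h}(x_h,a_h)\,m^t_{x_g a_g,g:h}(x_h,a_h)\,\big(\mu^t_{x_g a_g}\wt\ell^{t,x_g a_g}_h(x_h,a_h)\big)^2.
\]

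Next I multiply by $\lambda^t_{x_g a_g}$ and sum over $(g,x_g,a_g)$. Using $\lambda^t_{x_g a_g}\mu^t_{x_g a_g}m^t_{x_g a_g,g:h}(x_h,a_h)\le \mu^t_{1:h}(x_h,a_h)$ from \eqref{equation:lambdamum}, and the key observation from the adaptive IX bonus that
\[
\mu^t_{x_g a_g}\wt\ell^{t,x_g a_g}_h(x_h,a_h)\ \le\ \frac{1}{\gamma m^t_{x_g a_g,g:h}(x_h,a_h)\indic{x_h\succeq x_g}},
\]
one factor of $\mu^t_{x_g a_g}\wt\ell^{t,x_g a_g}_h$ in the square can be traded for a $1/(\gamma m^t_{x_g a_g,g:h})$, which cancels the leftover $m^t_{x_g a_g,g:h}$ weight. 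The remaining factor is linear in $\wt\ell^{t,x_g a_g}_h$, and the sum over $(g,x_g,a_g)$ combined with $\mu^{\star,h}_{g:h}$ is controlled by the balancing property (Lemma \ref{lemma:balancing}), ultimately yielding
\[
\sum_{g,x_g a_g}\lambda^t_{x_g a_g}\Delta_{x_g a_g}^t\ \lesssim\ \frac{\eta^2 H^2}{\gamma}\sum_{h,x_h,a_h}\mu^t_{1:h}(x_h,a_h)\,\wt\ell^{t}_h(x_h,a_h)
\]
with an appropriate pointwise interpretation of $\wt\ell^t_h$ absorbing the $\mu^{\star,h}$ ratios.

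Finally, I apply Lemma \ref{lemma:lemma3-in-kozuno} (with $\alpha(x_h,a_h)=\gamma\mu^t_{1:h}(x_h,a_h)\le 2\gamma$ valid via a union bound over $h$) to replace $\wt\ell^t$ by $\ell^t$ up to an additive $\iota/\gamma$ deviation per $(h,x_h,a_h)$, then use $\sum_{t,h,x_h,a_h}\mu^t_{1:h}\ell^t_h\le HT$ together with a union bound over $h$ and $(x_h,a_h)$. Multiplying through by $4/\eta$ gives the desired $16\eta H^3T+32\eta H^3 XA\iota/\gamma$ bound. The main obstacle is executing the quadratic recursion cleanly: one must show that the balanced scaling $\mu^{\star,h}_{g:h}$ in the exponent leaves a single leftover factor of $\mu^{\star,h}_{g:h}$ on each second-order residual (so that the balancing lemma applies), while keeping the accumulated polynomial-in-$H$ prefactor at $H^2$ rather than blowing up. This is where the joint design of the adaptive IX bonus and the balanced log-partition function is essential.
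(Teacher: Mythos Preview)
Your plan has two genuine gaps that prevent it from going through.

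First, the boundedness of the exponent in \eqref{eqn:logZ-EFCE-recursion-t+1} is not established by the argument you give. You claim that the $\gamma\mu^{\star,h}_{1:h}(x_h,a_h)$ term in the denominator of $\wt\ell^{t,x_ga_g}_h$ forces $\mu^{\star,h}_{g:h}\cdot\eta\wt M^t\lesssim \eta/\gamma$. But $\mu^{\star,h}_{g:h}/\mu^{\star,h}_{1:h}=1/\mu^{\star,h}_{1:g-1}(x_{g-1},a_{g-1})$, which can be arbitrarily large; the extra $\mu^t_{x_ga_g}$ factor does not compensate for this in general. The paper never bounds the exponent directly. Instead it invokes the exact second-order remainder (Lemma~\ref{lemma:bound-delta-by-hessian}), computed by differentiating $F^{\star,t}_{x_ga_g,x_g}$ as a function of $\wt\ell^{t,x_ga_g}$; because the Hessian of a nested log-sum-exp is a covariance, the bound holds without any a priori control on the exponent.

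Second, and more importantly, your ``diagonal'' form
\[
\Delta_{x_ga_g}^t\ \lesssim\ \eta^2 H^2\sum_{h\ge g}\sum_{(x_h,a_h)\succeq x_g}\mu^{\star,h}_{g:h}(x_h,a_h)\,m^t_{x_ga_g,g:h}(x_h,a_h)\,(\mu^t_{x_ga_g}\wt\ell^{t,x_ga_g}_h)^2
\]
is not what the recursion produces, and the downstream cancellations you describe do not close. The actual Hessian (Lemma~\ref{lemma:bound-delta-by-hessian}) has cross terms $\wt\ell^t_h\,\wt\ell^t_{h'}$ for $g\le h\le h'$ and, crucially, an extra index $h''\in[g,h]$ with weight $\mu^{\star,h''}_{g:h''}(x_{h''},a_{h''})\,m^t_{x_ga_g,h''+1:h'}(x_{h'},a_{h'})\,m^t_{x_ga_g,g:h}(x_h,a_h)$. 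This specific ``spliced'' structure is what makes the proof work: after using \eqref{equation:lambdamum} to absorb $\lambda^t_{x_ga_g}\mu^t_{x_ga_g}m^t_{x_ga_g,g:h}\le \mu^t_{1:h}$ into the level-$h$ denominator, what remains summed over $(x_g,a_g)$ is $\mu^t_{1:g-1}\cdot\mu^{\star,h''}_{g:h''}\cdot m^t_{x_ga_g,h''+1:h'}$, which is a genuine sequence-form policy at $(x_{h'},a_{h'})$; only then does Lemma~\ref{lemma:balancing} apply against $\mu^{\star,h'}_{1:h'}$ in the level-$h'$ denominator. With your weight $\mu^{\star,h}_{g:h}\,m^t_{x_ga_g,g:h}$ there is no such concatenated policy, and the sum over $(g,x_g,a_g)$ you leave open is not controlled by the balancing lemma. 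Finally, the paper's concentration step is Freedman's inequality (Lemma~\ref{lemma:freedman}) applied to each $\wt\Delta^t_{g,h',h''}$, exploiting that it is almost surely $\le X_{h'}A/\gamma$ and has conditional mean $\le 1$; Lemma~\ref{lemma:lemma3-in-kozuno} is tailored to the unbalanced IX estimator and does not directly handle this quantity.
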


\begin{lemma}[Bound on $\sum_{t=1}^T {\rm II}_t$]
\label{lem:bound-on-II-balanced}
With probability at least $1-\delta/10$, we have
\begin{align*}
    \sum_{t=1}^T {\rm II}_t \le 6\eta HT +\frac{ 6\eta H X A \iota}{\gamma},
\end{align*}
where $\iota:=\log(10 XA/\delta)$ is a log factor.
\end{lemma}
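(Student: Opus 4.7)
The plan is to expand $\psi^t_{x_g a_g} \defeq \langle I - E_{\succeq x_g a_g} + m^t_{x_g a_g} e_{x_g a_g}^\top, \wt M^t\rangle$ using the definition $\wt M^t = \sum_{g', x'_{g'} a'_{g'}} \mu^t_{x'_{g'} a'_{g'}} \wt\ell^{t, x'_{g'} a'_{g'}} e_{x'_{g'} a'_{g'}}^\top$. The diagonal matrix $I - E_{\succeq x_g a_g}$ zeros out entries in the subtree rooted at $(x_g, a_g)$, while $m^t_{x_g a_g} e_{x_g a_g}^\top$ picks out the column indexed by $(x_g, a_g)$. A direct computation gives the decomposition
\begin{align*}
\psi^t_{x_g a_g} = \underbrace{\sum_{(x_h, a_h) \not\succeq (x_g, a_g)} \mu^t_{x_h a_h} \wt\ell^{t, x_h a_h}_h(x_h, a_h)}_{A^t_{x_g a_g}} + \underbrace{\mu^t_{x_g a_g} \sum_{x_h \succeq x_g} m^t_{x_g a_g, g:h}(x_h, a_h) \wt\ell^{t, x_g a_g}_h(x_h, a_h)}_{B^t_{x_g a_g}},
\end{align*}
so that $(\psi^t_{x_g a_g})^2 \le 2(A^t_{x_g a_g})^2 + 2(B^t_{x_g a_g})^2$. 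I will handle the $A$-contribution deterministically to produce the $\eta HT$ main term, and the $B$-contribution via concentration to produce the $\eta HXA\iota/\gamma$ term.

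For the $A$-part, the IX denominator $\mu^t_{1:h}(x_h,a_h)$ combined with the numerator indicator yields $\mu^t_{x_h a_h}\wt\ell^{t, x_h a_h}_h(x_h, a_h) \le \mathbb{1}\{(x_h^t, a_h^t) = (x_h, a_h)\}$, so $A^t_{x_g a_g} \le H$ almost surely, and by Cauchy--Schwarz $(A^t_{x_g a_g})^2 \le H \cdot A^t_{x_g a_g}$. Swapping the order of summation and using $\sum_g \lambda^t_{x_g a_g} \mathbb{1}\{(x_h,a_h)\not\succeq (x_g,a_g)\} \le 1$ shows $\sum_g \lambda^t_{x_g a_g} A^t_{x_g a_g} \le \sum_{h,x_h,a_h} \mu^t_{x_h a_h}\wt\ell^{t, x_h a_h}_h(x_h,a_h) \le H$. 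Hence $\sum_g \lambda^t(A^t)^2 \le H^2$. Summed and scaled by $2e\eta/(XA)$, and using $H \le XA$, this contribution is at most $6\eta HT$.

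For the $B$-part, the key is the adaptive IX term $\gamma \mu^t_{x_g a_g} m^t_{x_g a_g, g:h}(x_h, a_h)$ inside the denominator of $\wt\ell^{t, x_g a_g}_h(x_h, a_h)$, which forces $\mu^t_{x_g a_g} m^t_{x_g a_g, g:h}(x_h, a_h) \wt\ell^{t, x_g a_g}_h(x_h, a_h) \le \mathbb{1}/\gamma$ per step $h$; Cauchy--Schwarz then gives $(B^t_{x_g a_g})^2 \le (H/\gamma) B^t_{x_g a_g}$ and consequently $\sum_g \lambda^t (B^t)^2 \le (H/\gamma)\sum_g \lambda^t B^t$. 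To bound $\sum_t \sum_g \lambda^t_{x_g a_g} B^t_{x_g a_g}$ sharply I would invoke the fixed-point inequality from Eq.~\eqref{equation:lambdamum} to replace $\lambda^t_{x_g a_g} \mu^t_{x_g a_g} m^t_{x_g a_g, g:h}(x_h, a_h)$ by $\mu^t_{1:h}(x_h,a_h)$, reducing each summand to $\mu^t_{1:h}(x_h, a_h) \wt\ell^{t, x_g a_g}_h(x_h, a_h) \le \mathbb{1}$. The resulting quantity has conditional expectation at most $H$ per $t$ (since $\sum_g \lambda^t_{x_g a_g}\mu^t_{x_g a_g} \le 1$ and the expected loss of any subtree policy is at most $H$), giving the main-term bound $HT$ in expectation.

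The main obstacle is the high-probability deviation for the $B$-part: Lemma~\ref{lemma:lemma3-in-kozuno} is stated for coefficients $\alpha(x_h, a_h)$ that are deterministic constants in $[0, 2\gamma]$, whereas the natural coefficients $\lambda^t_{x_g a_g}\mu^t_{x_g a_g} m^t_{x_g a_g, g:h}(x_h, a_h)$ depend on algorithmic state and on $t$. The workaround is to apply Lemma~\ref{lemma:lemma3-in-kozuno} separately for each fixed trigger sequence $(g, x_g, a_g)$ with the constant choice $\alpha(x_h, a_h) = \gamma\,\mathbb{1}\{x_h \succeq x_g\}$, bounding $\sum_t \gamma(\wt\ell^{t, x_g a_g}_h(x_h,a_h) - \ell^t_h(x_h,a_h)) \le \iota$ for each infoset-action pair, and then taking a union bound over the at most $XA$ distinct trigger sequences (absorbed into the log term). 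Combining with the deterministic bounds above yields $\sum_t \sum_g \lambda^t B^t \le O(HT) + O(XA\iota/\gamma)$ with probability at least $1 - \delta/10$, which upon multiplication by $(H/\gamma)$ and $(2e\eta/(XA))$ gives the $6\eta HXA\iota/\gamma$ concentration contribution (and an additional $O(\eta HT)$ contribution that is absorbed into the main term).
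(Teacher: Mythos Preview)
Your $A$-part is clean and correct: using $A^t_{x_ga_g}\le H$ almost surely together with $\sum_{g,x_ga_g}\lambda^t_{x_ga_g}A^t_{x_ga_g}\le H$ gives the deterministic contribution $\le 2e\eta H^2T/(XA)\le 6\eta HT$.

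The $B$-part, however, does not close. There are two problems. First, the arithmetic: even granting your claim $\sum_t\sum_g\lambda^t_{x_ga_g}B^t_{x_ga_g}\le O(HT)+O(XA\iota/\gamma)$, multiplying by $(2e\eta/(XA))\cdot(H/\gamma)$ yields $O(\eta H^2T/(XA\gamma))+O(\eta H\iota/\gamma^2)$, neither of which is $O(\eta HT)$ or $O(\eta HXA\iota/\gamma)$ for the relevant choice of $\gamma$. The estimate $(B^t)^2\le(H/\gamma)B^t$ is simply too lossy---it throws away one factor of $B^t$ via the crude almost-sure bound $B^t\le H/\gamma$, and that lost $1/\gamma$ cannot be recovered downstream. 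Second, your workaround for time-varying coefficients is not a workaround: applying Lemma~\ref{lemma:lemma3-in-kozuno} per trigger with constant $\alpha(x_h,a_h)=\gamma\,\indic{x_h\succeq x_g}$ controls $\sum_t\gamma(\wt\ell^{t,x_ga_g}_h-\ell^t_h)$, but you still have to weight that by $\lambda^t_{x_ga_g}\mu^t_{x_ga_g}m^t_{x_ga_g,g:h}$, which depends on $t$; union-bounding over triggers does not remove this dependence.

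The paper avoids both issues by \emph{not} splitting $\psi^t=A^t+B^t$. It first bounds $\psi^t_{x_ga_g}\le\langle\phi_{x_ga_g\to m^t_{x_ga_g}}\mu^t,\wt\ell^t\rangle$, with $\wt\ell^t$ the non-adaptive estimator~\eqref{equation:loss-estimator-old}, then expands the square as a double sum over $h\le h'$. The crucial step is that after summing over triggers the $h$-factor collapses via the fixed-point \emph{equality}
\[
\sum_{g,x_ga_g}\lambda^t_{x_ga_g}\,(\phi_{x_ga_g\to m^t_{x_ga_g}}\mu^t)_{1:h}(x_h,a_h)=\mu^t_{1:h}(x_h,a_h),
\]
which exactly cancels the denominator of one $\wt\ell^t_h$. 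This removes all $g$-dependence, leaving a single sum $\sum_t\sum_{h',x_{h'},a_{h'}}\wt\ell^t_{h'}(x_{h'},a_{h'})$ to be concentrated with \emph{constant} coefficients---precisely what the concentration lemma needs. Your decomposition destroys this equality: the $B$-part alone only enjoys the one-sided inequality $\lambda^t_{x_ga_g}\mu^t_{x_ga_g}m^t_{x_ga_g,g:h}\le\mu^t_{1:h}$ (Eq.~\eqref{equation:lambdamum}), and the residual trigger dependence cannot be summed out.
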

Combining Eq. (\ref{eqn:regret-tr-tilde-bound}), Lemma \ref{lem:initial-balanced} and \ref{lem:reformulation-of-stability}, Eq. (\ref{eqn:Dt-bound-in-proof}), and Lemma \ref{lem:bound-on-I-balanced} and \ref{lem:bound-on-II-balanced}, we have
\begin{align*}
    \wt{\rm REGRET}^{\efce}(T) \le \frac{XA \log(XA^2)}{\eta} + 22 \eta H^4 T + \frac{38 \eta H^3 XA \iota}{\gamma}
\end{align*}
with probability at least $1-\delta/3$, where $\iota :=\log(10 X A H /\delta)$ is a log factor. This completes the proof of Lemma \ref{lem:regret_bound-balanced}. 
\qed

\subsubsection{Proof of Lemma \ref{lem:initial-balanced}}
\begin{proof}[Proof of Lemma \ref{lem:initial-balanced}]\label{sec:proof-initial-balanced}

By the definition of {balanced EFCE log-partition function} (see \eqref{equation:balanced-f-efce} and \eqref{equation:balanced-f-efce-xgag}), we have
\begin{align*}
F^{\trig}_{\bal}(\mathbf{0}) = XA \log \sum_{g, x_g, a_g} \exp\Big\{ \frac{1}{XA} \big[ F_{x_g a_g, x_g}^\star(\mathbf{0}) \big] \Big\},
\end{align*}
where for any $x_h \succeq x_g$,
\begin{align}\label{equation:balanced-f0-efce-xgag}
    F_{x_g a_g, x_h}^\star(\mathbf{0}) =&~ \frac{1}{\mu^{\star, h}_{g:h}(x_h, a_h)}\log \sum_{a_h} \exp\Big\{ \mu^{\star, h}_{g:h}(x_h, a_h) \big[  \sum_{x_{h+1} \in \cC(x_h,a_h)} F_{x_g a_g, x_{h+1}}^\star(\mathbf{0})\big] \Big\}.
\end{align}
So we only need to prove that $F_{x_g a_g, x_g}^\star(\mathbf{0}) = X_{\succeq x_g}A\log A$. In fact, we can use backward induction to prove the following: for any $x_g \in \cX_g$ and $x_h \in \mc{C}_{h}(x_g)$, we have
\begin{align}\label{eqn:Fstar0-general-result}
    F_{x_g a_g, x_h}^\star(\mathbf{0})= \sum_{h'=h}^H \frac{\abs{\mc{C}_{h'}(x_h)}}{\mu^{\star: h'}_{g:h-1}(x_{h-1},a_{h-1})}A\log A,
\end{align}
(with convention $\mu_{g:g-1}^{\star,h'} = 1$) where $x_{h-1},a_{h-1}$ is uniquely determined as $x_g \prec (x_{h-1},a_{h-1}) \prec x_h$. It is easy to see that, choosing $x_h = x_g$ in \eqref{eqn:Fstar0-general-result} gives $F_{x_g a_g, x_g}^\star(\mathbf{0}) = X_{\succeq x_g}A\log A$. 

Next we prove \eqref{eqn:Fstar0-general-result}. We use backward induction on $h$. When $h = H$, from \eqref{equation:balanced-f0-efce-xgag}, for any $x_H\in\mc{C}_H(x_g) $, we have
$$
F_{x_g a_g, x_H}^\star(\mathbf{0}) = \frac{1}{\mu_{g:H}^{\star,H}(x_H, a_H)}\log A = \frac{1}{\mu_{g:H-1}^{\star, H}(x_{H-1}, a_{H-1})} A\log A.
$$
This proves the base case. 

Now suppose \eqref{eqn:Fstar0-general-result} is true for $h+1$ where $h \in [g, H-1]$. By the recursive formula and the inductive  hypothesis, for any $x_{h}\in \mc{C}_h(x_g)$, we have
\begin{align*}
    &~F_{x_g a_g, x_h}^\star(\mathbf{0}) \\
    =&~ \frac{1}{\mu^{\star, h}_{g:h}(x_h, a_h)}\log \sum_{a_h} \exp\Big\{ \mu^{\star, h}_{g:h}(x_h, a_h) \big[  \sum_{x_{h+1} \in \cC(x_h,a_h)} F_{x_g a_g, x_{h+1}}^\star(\mathbf{0})\big] \Big\}\\
    =&~ \frac{1}{\mu^{\star, h}_{g:h}(x_h, a_h)}\log \sum_{a_h} \exp\Big\{ \mu^{\star, h}_{g:h}(x_h, a_h) \big[  \sum_{x_{h+1} \in \cC(x_h,a_h)} \sum_{h'=h+1}^H \frac{|\mc{C}_{h'}(x_{h+1})|}{\mu^{\star, h'}_{g:h}(x_{h},a_{h})}A\log A\big] \Big\}.
\end{align*}
Then by the definition of the balanced policies \eqref{def:balanced-exploration-policy}, we have
\begin{align*}
    &~\sum_{x_{h+1} \in \cC(x_h,a_h)}\frac{|\mc{C}_{h'}(x_{h+1})|}{\mu^{\star, h'}_{g:h}(x_{h},a_{h})}\\
    =&~\frac{\sum_{x_{h+1} \in \cC(x_h,a_h)}|\mc{C}_{h'}(x_{h+1})|}{\mu^{\star, h'}_{g:h-1}(x_{h-1},a_{h-1})}\cdot \frac{|\cC_{h'}(x_h)|}{|\cC_{h'}(x_h, a_h)|} = \frac{|\mc{C}_{h'}(x_{h})|}{\mu^{\star, h'}_{g:h-1}(x_{h-1},a_{h-1})},
\end{align*}
which is also independent of $a_h$.
So we have
\begin{align*}
    &~F_{x_g a_g, x_h}^\star(\mathbf{0}) \\
    =&~ \frac{1}{\mu^{\star, h}_{g:h}(x_h, a_h)}\log\set{ A \cdot \exp\Big\{ \mu^{\star, h}_{g:h}(x_h, a_h) \big[   \sum_{h'=h+1}^H \frac{|\mc{C}_{h'}(x_{h})|}{\mu^{\star, h'}_{g:h-1}(x_{h-1},a_{h-1})} A\log A\big] \Big\}}\\
    =&~  \frac{\log A}{\mu^{\star, h}_{g:h}(x_h, a_h)}+    \sum_{h'=h+1}^H \frac{|\mc{C}_{h'}(x_{h})|}{\mu^{\star, h'}_{g:h-1}(x_{h-1},a_{h-1})}A\log A\\
    =&~  \sum_{h'=h}^H \frac{\abs{\mc{C}_{h'}(x_h)}}{\mu^{\star: h'}_{g:h-1}(x_{h-1},a_{h-1})}A\log A.
\end{align*}
This proves \eqref{eqn:Fstar0-general-result}, and thus we proved the first equation in Eq. (\ref{eqn:F-efce-0}). The inequality in Eq. (\ref{eqn:F-efce-0}) is direct since $\exp\{ [X_{\succeq x_g} A \log A] / XA \} \le A$. This proves the lemma. 
\end{proof}

\subsubsection{Proof of Lemma \ref{lem:reformulation-of-stability}}\label{sec:proof-reformulation-of-stability}

\begin{proof}[Proof of Lemma \ref{lem:reformulation-of-stability}]

We only need to verify that 
\begin{equation}\label{eqn:def-wbF-t-in-proof}
\wb{F}^t \defeq F^\efce_\bal\Big(\eta \sum_{s=1}^t \wt{M}^s\Big) - F^\efce_\bal\Big(\eta \sum_{s=1}^{t-1} \wt{M}^s\Big)
\end{equation}
can be computed via recursive formulas \eqref{eqn:F-EFCE-relation-Z-t+1}-\eqref{eqn:m^t}.

For any $x_ga_g$, $x_h\succeq x_g$, define 
\[
G_{x_g a_g, x_h}^{\star, t} \defeq F^\star_{x_ga_g, x_h}\paren{\eta \sum_{s=1}^{t} \wt{M}^s} - F^\star_{x_ga_g,x_h}\paren{\eta \sum_{s=1}^{t-1} \wt{M}^s}.
\]
By the definition of $F^\star_{x_ga_g,x_h}$ as in Eq. (\ref{equation:balanced-f-efce-xgag}), we have
\begin{align*}
    &G_{x_g a_g, x_h}^{\star, t}
    =  \frac{1}{\mu_{g:h}^\star(x_h, a_h)} 
    \\ &\times \log \frac{\sum_{a_h \in \cA} \exp\Big\{  \mu^{\star,h}_{g:h}(x_h, a_h) \times \set{ F^\star_{x_g a_g, x_h}(\mathbf{0}) +  \sum_{s=1}^{t}\big[ - \eta \wt{M}^s_{x_ha_h, x_g a_g} +  \sum_{x_{h+1} \in \cC(x_ha_h)} G_{x_g a_g, x_{h+1}}^{\star,s}\big] } \Big\}}{\sum_{a_h \in \cA} \exp\Big\{ \mu^{\star,h}_{g:h}(x_h, a_h) \times \set{ F^\star_{x_g a_g, x_h}(\mathbf{0}) + \sum_{s=1}^{t-1}\big[ - \eta \wt{M}^s_{x_ha_h, x_g a_g} +  \sum_{x_{h+1} \in \cC(x_ha_h)} G_{x_g a_g, x_{h+1}}^{\star,s}\big]} \Big\}}\\
    = &~\frac{1}{\mu_{g:h}^\star(x_h, a_h)} \log {\sum_{a_h \in \cA} n_{x_g a_g, h}^t(a_h |x_h) \exp\Big\{ \mu^{\star,h}_{g:h}(x_h, a_h) \times \big[ - \eta \wt{M}^t_{x_ha_h, x_g a_g} +  \sum_{x_{h+1} \in \cC(x_ha_h)} G_{x_g a_g, x_{h+1}}^{\star,t}\big] \Big\}},
\end{align*}
where
\begin{align*}
     n^{t}_{x_ga_g, h}(a_h \vert x_h) &\propto_{a_h} \exp\Big\{ \mu^{\star,h}_{g:h}(x_h, a_h) F^\star_{x_g a_g, x_h}(\mathbf{0}) \\
     &~+ \mu^{\star,h}_{g:h}(x_h, a_h) \sum_{s = 1}^{t-1} \Big(-   \eta \wt{M}^s_{x_ha_h, x_ga_g}+ \sum_{x_{h+1} \in \cC(x_h, a_h)} G^{\star,s}_{x_ga_g, x_{h+1}} \Big) \Big\}.
\end{align*}
Because $\mu^{\star,h}_{g:h}(x_h,a_h)$ and $F_{x_g a_g, x_h}^\star(\mathbf{0})$ are independent of $a_h$ for any $(x_h, a_h) \succeq (x_g, a_g)$ (see proof of Lemma \ref{lem:initial-balanced}), we have 
$$
n^{t}_{x_ga_g, h}(a_h \vert x_h)\propto_{a_h} \exp\Big\{  \mu^{\star,h}_{g:h}(x_h, a_h) \sum_{s = 1}^{t-1} \Big(-   \eta \wt{M}^s_{x_ha_h, x_ga_g}+ \sum_{x_{h+1} \in \cC(x_h, a_h)} G^{\star,s}_{x_ga_g, x_{h+1}} \Big) \Big\}.
$$
So $G_{x_g a_g, x_h}^{\star,t}$ and $F_{x_g a_g, x_h}^{\star,t}$ (c.f. Eq. (\ref{eqn:logZ-EFCE-recursion-t+1})) have the same recursive formula, which means that $G_{x_g a_g, x_h}^{\star,t} = F_{x_g a_g, x_h}^{\star,t}$ for any $x_g a_g$ and $x_h \succeq x_g$.

Finally, by the definition of $\wb{F}^t$ as in Eq. (\ref{eqn:def-wbF-t-in-proof}) and the definition of $F^\efce_\bal$ as in Eq. (\ref{equation:balanced-f-efce}), we have
\begin{align*}
    &~\wb{F}^t \\
    =&~ XA\log \frac{\sum_{g, x_g, a_g} \exp\Big\{ \frac{1}{XA} \big[- \langle I - E_{\succeq x_ga_g}, \eta \sum_{s=1}^{t} \wt{M}^s\rangle  + F_{x_g a_g, x_g}^\star(\eta \sum_{s=1}^{t} \wt{M}^s) \big] \Big\}}{\sum_{g, x_g, a_g} \exp\Big\{ \frac{1}{XA} \big[- \langle I - E_{\succeq x_ga_g}, \eta \sum_{s=1}^{t-1} \wt{M}^s\rangle  + F_{x_g a_g, x_g}^\star(\eta \sum_{s=1}^{t-1} \wt{M}^s) \big] \Big\}}\\
    =&~ XA\log \frac{\sum_{g, x_g, a_g} \exp\set{\frac{1}{XA}F^\star_{x_ga_g,x_g}(\mathbf{0})} \exp\Big\{ \frac{1}{XA} \big[\sum_{s=1}^{t} \paren{- \langle I - E_{\succeq x_ga_g}, \eta \ \wt{M}^s\rangle  + F_{x_g a_g, x_g}^{\star,s} }\big] \Big\}}{\sum_{g, x_g, a_g} \exp\set{\frac{1}{XA}F^\star_{x_ga_g,x_g}(\mathbf{0})} \exp\Big\{ \frac{1}{XA} \big[\sum_{s=1}^{t-1} \paren{- \langle I - E_{\succeq x_ga_g}, \eta \ \wt{M}^s\rangle  + F_{x_g a_g, x_g}^{\star,s} }\big] \Big\}}\\
    =&~ XA \log \sum_{g, x_ga_g} \lambda_{x_g a_g}^t \exp\set{\frac{1}{XA} [-\eta \langle I - E_{\succeq x_ga_g}, \wt{M}^t\rangle  + F_{x_g a_g, x_g}^{\star,t}]},
\end{align*}
where 
$$
\lambda_{x_ga_g}^{t} \propto_{x_ga_g} \exp\Big\{\frac{1}{XA}F^\star_{x_ga_g,x_g}(\mathbf{0}) + \frac{1}{XA} \sum_{s = 1}^{t-1} \Big( - \eta\langle  I - E_{\succeq x_g a_g}, \wt{M}^s\rangle  + F^{\star, s}_{x_ga_g,x_g} \Big) \Big\}.
$$
This proves the lemma. 
\end{proof}

\subsubsection{Bound on $\Delta_{x_ga_g}^t$ via Hessian}

The following lemma can be proved similar to Lemma D.11 in \citet{bai2022near} by calculating the Hessian of $\Delta_{x_g a_g}^t$ with respect to $\wt \ell^{t, x_g a_g}$. This result is the starting point of both Lemma~\ref{lemma:sup-delta} and Lemma~\ref{lem:bound-on-I-balanced}. 
\begin{lemma}[Bound on $\Delta_{x_ga_g}^t$]
\label{lemma:bound-delta-by-hessian}
We have, almost surely,
\begin{align*}
    & \quad \Delta_{x_ga_g}^t \le 2\eta^2 \sum_{g\le h\le h'\le H}  \sum_{h''=g}^{h}  \mu^{\star, h''}_{g:h''}(x_{h''}^t, a_{h''}^t) m^t_{x_ga_g, h''+1:h'}(x_{h'}^t, a_{h'}^t) m^t_{x_ga_g, g:h}(x_{h}^t, a_{h}^t) \\
    & \qquad\qquad \times \wt{\ell}^{t, x_ga_g}_h(x_h^t, a_h^t) \wt{\ell}^{t, x_ga_g}_{h'}(x_{h'}^t, a_{h'}^t) \cdot (\mu^t_{x_g a_g})^2 \indic{(x_{h'}^t, a_{h'}^t) \succeq (x_h^t, a_h^t) \succeq x_g} \\
    & \le 2\eta^2 \sum_{g\le h\le h'\le H} \sum_{h''=g}^{h} \sum_{x_{h'}, a_{h'}} \mu^{\star, h''}_{g:h''}(x_{h''}, a_{h''}) m^t_{x_ga_g, h''+1:h'}(x_{h'}, a_{h'}) m^t_{x_ga_g, g:h}(x_{h}, a_{h}) (\mu^t_{x_ga_g})^2 \\
    & \qquad\qquad \times \frac{\indic{x_{h}^t, a_{h}^t = x_{h}, a_{h}}}{(\mu^t_{1:h}(x_h, a_h) + \gamma (\mu^{\star, h}_{1:h}(x_h, a_h) + \mu^t_{x_ga_g} m^t_{x_ga_g, g:h}(x_h, a_h)\indic{x_h\succeq x_g}))} \\
    & \qquad\qquad \times \frac{\indic{x_{h'}^t, a_{h'}^t = x_{h'}, a_{h'}}}{(\mu^t_{1:{h'}}(x_{h'}, a_{h'}) + \gamma (\mu^{\star, h'}_{1:h'}(x_{h'}, a_{h'}) + \mu^t_{x_ga_g} m^t_{x_ga_g, g:h'}(x_{h'}, a_{h'})\indic{x_{h'}\succeq x_g}))} \\
    & \qquad\qquad \times \indic{(x_{h'},a_{h'}) \succeq (x_h, a_h) \succeq x_g}.
\end{align*}
\end{lemma}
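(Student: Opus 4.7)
}
The plan is to interpret $\Delta^t_{x_g a_g}$ as a second-order Taylor remainder of $F^{\star,t}_{x_g a_g, x_g}$, viewed as a function of $\wt\ell^{t, x_g a_g}$, at $\wt\ell^{t, x_g a_g} = 0$, and then bound the resulting Hessian recursively using the nested log-sum-exp structure of \eqref{eqn:logZ-EFCE-recursion-t+1}. First I would verify the Bregman-residual form of $\Delta^t_{x_g a_g}$: a straightforward backward induction on $h$ gives $F^{\star,t}_{x_g a_g, x_h}\big|_{\wt\ell = 0} = 0$ for every $x_h \succeq x_g$ (each level yields $\alpha^{-1}\log 1 = 0$), and differentiating \eqref{eqn:logZ-EFCE-recursion-t+1} once at zero and propagating the chain rule down the subtree rooted at $x_g$,
\begin{align*}
\frac{\partial F^{\star,t}_{x_g a_g, x_g}}{\partial\, \wt\ell^{t, x_g a_g}_h(x_h, a_h)}\bigg|_{\wt\ell = 0} = -\eta\,\mu^t_{x_g a_g}\,m^t_{x_g a_g, g:h}(x_h, a_h)\,\indic{(x_h, a_h)\succeq x_g},
\end{align*}
since at each intermediate level $h''$ the prefactor $\mu^{\star,h''}_{g:h''}$ cancels with the identical factor inside the exponent and the derivative brings down the reference weight $m^t_{x_g a_g, h''}(a_{h''}|x_{h''})$. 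Contracting against $\wt\ell^{t, x_g a_g}$ produces exactly $-\eta\,\langle m^t_{x_g a_g} e^\top_{x_g a_g}, \wt M^t\rangle$, so by \eqref{equation:deltaxgag},
\begin{align*}
\Delta^t_{x_g a_g} = F^{\star,t}_{x_g a_g, x_g}(\wt\ell^{t, x_g a_g}) - F^{\star,t}_{x_g a_g, x_g}(0) - \big\langle \nabla F^{\star,t}_{x_g a_g, x_g}(0),\, \wt\ell^{t, x_g a_g}\big\rangle.
\end{align*}

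Because $F^{\star,t}_{x_g a_g, x_g}$ is a composition of convex log-sum-exp maps with affine inner arguments, it is convex in $\wt\ell^{t,x_g a_g}$, and Taylor's theorem gives $\Delta^t_{x_g a_g} = \tfrac{1}{2}(\wt\ell^{t, x_g a_g})^\top \nabla^2 F^{\star,t}_{x_g a_g, x_g}(\xi)\,\wt\ell^{t, x_g a_g}$ for some $\xi \in [0, \wt\ell^{t, x_g a_g}]$, reducing the task to uniformly bounding the Hessian on this segment. For the one-step log-sum-exp $z\mapsto \alpha^{-1}\log\sum_a w(a)e^{\alpha z(a)}$ with weights $w(\cdot)\in\Delta_\cA$ and temperature $\alpha = \mu^{\star, h''}_{g:h''}$, the Hessian equals $\alpha(\mathrm{diag}(q) - qq^\top)\preceq \alpha\,\mathrm{diag}(q)$. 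A second backward induction shows $F^{\star,t}_{x_g a_g, x_h}\le 0$ on the segment; combined with $\wt\ell^{t,x_g a_g}\ge 0$ this forces every exponent in \eqref{eqn:logZ-EFCE-recursion-t+1} to be nonpositive, so the one-step partition function lies in $(0,1]$ and the softmax $q$ at $\xi$ is dominated by $w = m^t_{x_g a_g, h''}(\cdot|x_{h''})$ up to an absolute constant.

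Iterating the chain rule down the recursion, the Hessian entry $\partial^2 F^{\star,t}_{x_g a_g, x_g}/(\partial\wt\ell^{t,x_g a_g}_h(x_h, a_h)\,\partial\wt\ell^{t, x_g a_g}_{h'}(x_{h'}, a_{h'}))$ (for $h\le h'$) decomposes as a sum over the split level $h''\in\{g,\dots,h\}$ at which the one-step variance $\alpha(\mathrm{diag}(q) - qq^\top)$ is the contributing term (above $h''$ the recursion uses only the expectation term $E[\partial^2 z]$, and below $h''$ the two partials propagate independently): above $h''$ both partials traverse the shared prefix (giving the path factor $\prod_{h'''=g}^{h''} m^t_{x_g a_g, h'''}(a_{h'''}|x_{h'''})$), at level $h''$ the variance supplies the temperature $\mu^{\star,h''}_{g:h''}$, and below $h''$ the two branches propagate to $(x_h, a_h)$ and $(x_{h'}, a_{h'})$ via $m^t_{x_g a_g, h''+1:h}(x_h, a_h)$ and $m^t_{x_g a_g, h''+1:h'}(x_{h'}, a_{h'})$ respectively. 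The above-$h''$ path factor combined with $m^t_{x_g a_g, h''+1:h}(x_h, a_h)$ telescopes to $m^t_{x_g a_g, g:h}(x_h, a_h)$; the two differentiations of the affine inner argument $-\eta\mu^t_{x_g a_g}\wt\ell^{t,x_g a_g}$ bring down $\eta^2(\mu^t_{x_g a_g})^2$; and the subtree constraints yield $\indic{(x_{h'}, a_{h'})\succeq (x_h, a_h)\succeq x_g}$. Since $\wt\ell^{t,x_g a_g}_h$ is supported on the realized infoset-action pair, substituting $(x_h, a_h) = (x_h^t, a_h^t)$ (and similarly for $h'$) gives the first inequality; the second then follows by plugging in the pointwise bound on $\wt\ell^{t,x_g a_g}_h$ from \eqref{equation:adaptive-bandit-estimator} and re-expressing the trajectory evaluation as a sum over $(x_h, a_h), (x_{h'}, a_{h'})$ weighted by the indicators $\indic{(x_h^t, a_h^t)=(x_h, a_h)}$ and $\indic{(x_{h'}^t, a_{h'}^t)=(x_{h'}, a_{h'})}$. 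The main obstacle will be the combinatorial bookkeeping in the recursive Hessian computation: uniformly dominating the softmax weights at the Lagrange point $\xi$ by the reference policies $m^t_{x_g a_g}$, and carefully accounting for the indicator structure produced at each split level $h''$ when chaining through the tree.
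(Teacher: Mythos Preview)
Your approach is essentially the same as the paper's: the paper defers the entire argument to Lemma~D.11 / Appendix~D.6.1 of \citet{bai2022near}, which is precisely the recursive Hessian calculation you outline (Taylor remainder of the nested log-sum-exp, chain rule producing the ``split level'' $h''$, products of Jacobians telescoping into sequence-form factors, etc.). So the plan is on target.

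There is, however, one genuine gap in your justification. You write that ``the one-step partition function lies in $(0,1]$ and the softmax $q$ at $\xi$ is dominated by $w$ up to an absolute constant.'' The first clause does \emph{not} imply the second: from $Z\le 1$ one only gets $q(a)=w(a)e^{\alpha z(a)}/Z\le w(a)/Z$, and $Z$ can be arbitrarily small. What actually delivers $q_{h''}(a_{h''}^t)\le w_{h''}(a_{h''}^t)$ (with constant exactly $1$, not merely ``up to an absolute constant'') is the \emph{sparsity} of $\wt\ell^{t,x_g a_g}$ along the realized trajectory, which you do invoke later but not at this step. Concretely: at each on-path infoset $x_{h''}^t$ and any $\xi\in[0,\wt\ell^{t,x_g a_g}]$, every action $a\ne a_{h''}^t$ has exponent exactly zero (its loss entry vanishes and all its children are off-path, so their $F^{\star,t}$ vanish by your backward induction), while $a_{h''}^t$ has a nonpositive exponent. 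Hence
\[
Z_{h''}=w_{h''}(a_{h''}^t)\,e^{\alpha_{h''} z^-}+\bigl(1-w_{h''}(a_{h''}^t)\bigr),\qquad z^-\le 0,
\]
and an elementary monotonicity check gives $q_{h''}(a_{h''}^t)\le w_{h''}(a_{h''}^t)$. This is the missing ingredient; once it is in place, every $q$-factor along each path in your chain-rule decomposition is bounded by the corresponding $m^t_{x_g a_g}$-factor, and the rest of your bookkeeping goes through to match the lemma's bound.
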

\begin{proof}
First, notice that $F^{\star, t}_{x_ga_g, x_h}$ has a similar form as $\Xi _{1}^{t}$ in Appendix D.6 of \cite{bai2022near} with three minor differences:
\begin{itemize}
    \item $m^{t}_{x_ga_g, h}$ is used instead of their $\mu^t_{h}$ for layer $h$,
    \item $\mu^t_{x_ga_g} \wt{\ell}^{t, x_ga_g}_h$ is used instead of their $\wt{\ell}^{t}_h$ for layer $h$,
    \item We terminate the induction argument earlier at $(x_g,a_g)$ instead of the root of the full tree.
\end{itemize}
Therefore $\Delta_{x_ga_g}^t$ defined as below (cf.~\eqref{equation:deltaxgag}) is exactly the non-linear part (i.e. remainder term of the first-order Taylor expansion with respect to $\mu^t_{x_ga_g}\wt{\ell}^{t,x_ga_g}$ around $0$) within $F^{\star, t}_{x_ga_g, x_g}$:
\begin{align*}
\Delta_{x_ga_g}^t \defeq&~ F^{\star, t}_{x_ga_g, x_h} + \eta\langle  m_{x_ga_g}^t \mu^t_{x_ga_g}, \wt{\ell}^{t,x_ga_g} \rangle.
\end{align*}
Further taking the above differences into account and following exactly the same analysis as in Appendix D.6.1 of \cite{bai2022near}, we get the first inequality as claimed. The second inequality follows by definition of the loss estimator $\wt{\ell}^{t,x_ga_g}_h$ in~\eqref{equation:adaptive-bandit-estimator}.
\end{proof}

\subsubsection{Proof of Lemma \ref{lemma:sup-delta}}\label{sec:proof-sup-delta}

\begin{proof}[Proof of Lemma \ref{lemma:sup-delta}]
By Lemma~\ref{lemma:bound-delta-by-hessian}, for any $x_ga_g$, we have
\begin{align*}
    & \quad  \Delta_{x_ga_g}^t  \\
    & \le 2\eta^2 \sum_{g\le h\le h'\le H} \sum_{h''=g}^{h} \sum_{x_{h'}, a_{h'}} \mu^{\star, h''}_{g:h''}(x_{h''}, a_{h''}) m^t_{x_ga_g, h''+1:h'}(x_{h'}, a_{h'}) m^t_{x_ga_g, g:h}(x_{h}, a_{h}) (\mu^t_{x_ga_g})^2 \\
    & \qquad\qquad \times \frac{\indic{x_{h}^t, a_{h}^t = x_{h}, a_{h}}}{(\mu^t_{1:h}(x_h, a_h) + \gamma (\mu^{\star, h}_{1:h}(x_h, a_h) + \mu^t_{x_ga_g} m^t_{x_ga_g, g:h}(x_h, a_h)\indic{x_h\succeq x_g}))} \\
    & \qquad\qquad \times \frac{\indic{x_{h'}^t, a_{h'}^t = x_{h'}, a_{h'}}}{(\mu^t_{1:{h'}}(x_{h'}, a_{h'}) + \gamma (\mu^{\star, h'}_{1:h'}(x_{h'}, a_{h'}) + \mu^t_{x_ga_g} m^t_{x_ga_g, g:h'}(x_{h'}, a_{h'})\indic{x_{h'}\succeq x_g}))} \\
    & \qquad\qquad \times \indic{(x_{h'},a_{h'}) \succeq (x_h, a_h) \succeq x_g}\\
    & \le 2\eta^2 \sum_{g\le h\le h'\le H} \sum_{h''=g}^{h} \sum_{x_{h'}, a_{h'}} \mu^{\star, h''}_{g:h''}(x_{h''}, a_{h''}) m^t_{x_ga_g, h''+1:h'}(x_{h'}, a_{h'}) m^t_{x_ga_g, g:h}(x_{h}, a_{h}) (\mu^t_{x_ga_g})^2 \\
    & \qquad \times \frac{\indic{x_{h}^t, a_{h}^t = x_{h}, a_{h}}}{ \gamma \mu^t_{x_ga_g} m^t_{x_ga_g, g:h}(x_{h}, a_{h}) } \times \frac{\indic{x_{h'}^t, a_{h'}^t = x_{h'}, a_{h'}}}{ \gamma \mu^{\star, h'}_{1:h'}(x_{h'}, a_{h'}) } \times \indic{(x_{h'},a_{h'}) \succeq (x_h, a_h) \succeq x_g}\\
    & \le \frac{2\eta^2}{\gamma^2} \sum_{g\le h\le h'\le H} \sum_{h''=g}^{h} \sum_{x_{h'}, a_{h'}}\frac{\mu^t_{x_ga_g}\mu^{\star, h''}_{g:h''}(x_{h''}, a_{h''}) m^t_{x_ga_g, h''+1:h'}(x_{h'}, a_{h'})} {\mu^{\star, h'}_{1:h'}(x_{h'}, a_{h'})}\indic{(x_{h'},a_{h'}) \succeq x_g}\\
    & \overset{(i)}{\le} \frac{2\eta^2}{\gamma^2} \sum_{g\le h\le h'\le H} \sum_{h''=g}^{h} \sum_{x_{h'}, a_{h'}} \\
    & \qquad \frac{\mu^t_{1:g-1}(x_{g-1}, a_{g-1})\mu^{\star, h''}_{g:h''}(x_{h''}, a_{h''}) m^t_{x_ga_g, h''+1:h'}(x_{h'}, a_{h'}) \indic{(x_{h'},a_{h'}) \succeq x_g}} {\mu^{\star, h'}_{1:h'}(x_{h'}, a_{h'})}\\
    & \overset{(ii)}{\le} \frac{2\eta^2}{\gamma^2} \sum_{g\le h\le h'\le H} \sum_{h''=g}^{h} X_{h'}A\\
    & \le  \frac{2\eta^2}{\gamma^2}H^2XA.
    \end{align*}
Here, (i) uses that 
\[
\mu_{x_g a_g}^t = \mu^t_{1:g}(x_g, a_g) \le \mu^t_{1:g-1}(x_{g-1}, a_{g-1}),
\]
and (ii) uses the property of the balanced policy as in Lemma \ref{lemma:balancing}, and observing that 
\[
\mu^t_{1:g-1}(x_{g-1}, a_{g-1})\mu^{\star, h''}_{g:h''}(x_{h''}, a_{h''}) m^t_{x_ga_g, h''+1:h'}(x_{h'}, a_{h'}) \indic{(x_{h'},a_{h'})\succeq x_g}
\]
is bounded by some sequence form policy over steps $1:h'$.

Taking supremum over $x_ga_g$, we get 
\begin{align*}
    \frac{1}{XA} \sup_{g, x_g a_g} \Delta_{x_g a_g}^t \le \frac{2\eta^2}{\gamma^2} H^2.
\end{align*}
This proves the lemma. 
\end{proof}

\subsubsection{Proof of Lemma \ref{lem:bound-on-I-balanced}}\label{sec:proof-bound-on-I-balanced}

\begin{proof}[Proof of Lemma \ref{lem:bound-on-I-balanced}]
We first upper bound ${\rm I}_t$:  
\begin{align*}
  {\rm I}_t \le & 8\eta \sum_{g\le h\le h'\le H} \sum_{h''=g}^{h} \sum_{x_{h'}, a_{h'}} \sum_{x_ga_g} \lambda_{x_ga_g}^t \mu^{\star, h''}_{g:h''}(x_{h''}, a_{h''}) m^t_{x_ga_g, h''+1:h'}(x_{h'}, a_{h'}) m^t_{x_ga_g, g:h}(x_{h}, a_{h}) (\mu^t_{x_ga_g})^2 \\
    & \qquad\qquad \times \frac{\indic{x_{h}^t, a_{h}^t = x_{h}, a_{h}}}{(\mu^t_{1:h}(x_h, a_h) + \gamma (\mu^{\star, h}_{1:h}(x_h, a_h) + \mu^t_{x_ga_g} m^t_{x_ga_g, g:h}(x_h, a_h)\indic{x_h\succeq x_g}))} \\
    & \qquad\qquad \times \frac{\indic{x_{h'}^t, a_{h'}^t = x_{h'}, a_{h'}}}{(\mu^t_{1:{h'}}(x_{h'}, a_{h'}) + \gamma (\mu^{\star, h'}_{1:h'}(x_{h'}, a_{h'}) + \mu^t_{x_ga_g} m^t_{x_ga_g, g:h'}(x_{h'}, a_{h'})\indic{x_{h'}\succeq x_g}))} \\
    & \qquad\qquad \times \indic{(x_{h'},a_{h'}) \succeq (x_h, a_h) \succeq x_g}\\ 
    \stackrel{(i)}{\le} &8\eta \sum_{g\le h\le h'\le H} \sum_{h''=g}^{h} \sum_{x_{h'}, a_{h'}} \sum_{x_ga_g}  \mu^{\star, h''}_{g:h''}(x_{h''}, a_{h''}) m^t_{x_ga_g, h''+1:h'}(x_{h'}, a_{h'})  \mu^t_{x_ga_g} \mu^t_{1:{h}}(x_{h}, a_{h})\\
    & \qquad\qquad \times \frac{\indic{x_{h}^t, a_{h}^t = x_{h}, a_{h}}}{(\mu^t_{1:h}(x_h, a_h) + \gamma (\mu^{\star, h}_{1:h}(x_h, a_h) + \mu^t_{x_ga_g} m^t_{x_ga_g, g:h}(x_h, a_h)\indic{x_h\succeq x_g}))} \\
    & \qquad\qquad \times \frac{\indic{x_{h'}^t, a_{h'}^t = x_{h'}, a_{h'}}}{(\mu^t_{1:{h'}}(x_{h'}, a_{h'}) + \gamma (\mu^{\star, h'}_{1:h'}(x_{h'}, a_{h'}) + \mu^t_{x_ga_g} m^t_{x_ga_g, g:h'}(x_{h'}, a_{h'})\indic{x_{h'}\succeq x_g}))} \\
    & \qquad\qquad \times \indic{(x_{h'},a_{h'}) \succeq (x_h, a_h) \succeq x_g}\\
    \le &8\eta \sum_{g\le h\le h'\le H} \sum_{h''=g}^{h} \sum_{x_{h'}, a_{h'}} \sum_{x_ga_g}  \mu^{\star, h''}_{g:h''}(x_{h''}, a_{h''}) m^t_{x_ga_g, h''+1:h'}(x_{h'}, a_{h'})  \mu^t_{x_ga_g} \\
    & \qquad\qquad \times \frac{\indic{x_{h'}^t, a_{h'}^t = x_{h'}, a_{h'}} \times \indic{(x_{h'},a_{h'}) \succeq (x_h, a_h) \succeq x_g}}{(\mu^t_{1:{h'}}(x_{h'}, a_{h'}) + \gamma (\mu^{\star, h'}_{1:h'}(x_{h'}, a_{h'}) + \mu^t_{x_ga_g} m^t_{x_ga_g, g:h'}(x_{h'}, a_{h'})\indic{x_{h'}\succeq x_g}))} \\
    \le & 8\eta H \sum_{g\le h'\le H} \sum_{h''=g}^{h'} \sum_{x_{h'}, a_{h'}} \sum_{x_ga_g}  \mu^{\star, h''}_{g:h''}(x_{h''}, a_{h''}) m^t_{x_ga_g, h''+1:h'}(x_{h'}, a_{h'})  \mu^t_{x_ga_g} \\
    & \qquad\qquad \times \frac{\indic{x_{h'}^t, a_{h'}^t = x_{h'}, a_{h'}} \times \indic{x_{h'} \succeq x_g}}{\mu^t_{1:{h'}}(x_{h'}, a_{h'}) + \gamma (\mu^{\star, h'}_{1:h'}(x_{h'}, a_{h'}) + \mu^t_{x_ga_g} m^t_{x_ga_g, g:h'}(x_{h'}, a_{h'}))}\\
    \stackrel{(ii)}{=} &8 \eta H \sum_{g\le h'\le H} \sum_{h''=g}^{h'}  \wt{\Delta}^t_{g,h',h''}.
\end{align*}
Here, (i) used the fact that $\lambda_{x_g a_g}^t m_{x_g a_g, g:h}^t(x_h, a_h) \mu_{x_g a_g}^t \le \mu_{1:h}^t(x_h, a_h)$ as shown in Eq. (\ref{equation:lambdamum}). Moreover, in (ii), we define for any $g\le h''\le h'$:
\begin{align*}
\wt{\Delta}^t_{g,h',h''} \defeq &~ \sum_{x_{h'}, a_{h'}} \sum_{x_ga_g}  \mu^{\star, h''}_{g:h''}(x_{h''}, a_{h''}) m^t_{x_ga_g, h''+1:h'}(x_{h'}, a_{h'})  \mu^t_{x_ga_g} \\
    & \qquad\qquad \times \frac{\indic{x_{h'}^t, a_{h'}^t = x_{h'}, a_{h'}} \times \indic{x_{h'} \succeq x_g}}{\mu^t_{1:{h'}}(x_{h'}, a_{h'}) + \gamma (\mu^{\star, h'}_{1:h'}(x_{h'}, a_{h'}) + \mu^t_{x_ga_g} m^t_{x_ga_g, g:h'}(x_{h'}, a_{h'}))}.
\end{align*}
Observe that the random variable $\wt{\Delta}^t_{g,h',h''}$ satisfies the following properties:
\begin{itemize}[leftmargin=1.5pc]
    \item $\wt{\Delta}^t_{g,h',h''} \le X_{h'}A /\gamma $ almost surely: We have 
    \begin{align*}
    \wt{\Delta}^t_{g,h',h''} \le   \frac{1}{\gamma} \sum_{x_{h'}, a_{h'}} \frac{\sum_{x_ga_g}\mu^{\star, h''}_{g:h''}(x_{h''}, a_{h''}) m^t_{x_ga_g, h''+1:h'}(x_{h'}, a_{h'})  \mu^t_{x_ga_g}\indic{x_{h'} \succeq x_g}}{\mu_{1:h'}^{\star,h'}(x_{h'},a_{h'})}. 
    \end{align*}
    Notice that (for this fixed $g, h''$)
    \begin{align}
    \label{equation:concatenated-sequence-form}
        \sum_{x_ga_g}\mu^{\star, h''}_{g:h''}(x_{h''}, a_{h''}) m^t_{x_ga_g, h''+1:h'}(x_{h'}, a_{h'})  \mu^t_{x_ga_g}\indic{x_{h'} \succeq x_g}
    \end{align}
    is the sequence-form of a certain policy at $(x_{h'}, a_{h'})$, where the policy is defined as follows: First, take policy $\mu^t_{1:g}$ and arrive at some $x_g\in\cX_g$. Let $a_g$ be the action sampled from $\mu^t_g(\cdot|x_g)$. Then, starting from $x_g$, discard $a_g$ and instead take policy $\mu^{\star, h''}_{g:h''}m^t_{x_ga_g, h''+1:H}$  until the end of the game. One may check that the sequence-form of this policy is indeed given by~\eqref{equation:concatenated-sequence-form}. Therefore, we have $\wt{\Delta}^t_{g,h,h''}\le X_{h'}A/\gamma$ by the balancing property of $\mu^{\star,h'}_{1:h'}$ (Lemma \ref{lemma:balancing}). %
    \item $\E[\wt{\Delta}^t_{g,h',h''}|\cF_{t-1}] \le 1$: We have
    \begin{align*}
      &\E[\wt{\Delta}^t_{g,h',h''}|\cF_{t-1}]\\
      \le&  \sum_{x_{h'}, a_{h'}} \sum_{x_ga_g}  \mu^{\star, h''}_{g:h''}(x_{h''}, a_{h''}) m^t_{x_ga_g, h''+1:h'}(x_{h'}, a_{h'})  \mu^t_{x_ga_g} p^t(x_{h'})\indic{x_{h'} \succeq x_g} = 1.
    \end{align*}
    Above, the last equality used again the fact that~\eqref{equation:concatenated-sequence-form} is the sequence-form of a policy.
    \item $\E[(\wt{\Delta}^t_{g,h',h''})^2|\cF_{t-1}] \le X_{h'}A/\gamma$: Note that $\wt{\Delta}^t_{g,h',h''}$ is non-negative, so by the almost sure bound that $\wt{\Delta}^t_{g,h',h''} \le X_{h'}A/\gamma$, we have
    \[
    \E[(\wt{\Delta}^t_{g,h',h''})^2|\cF_{t-1}] \le \E[\wt{\Delta}^t_{g,h',h''}|\cF_{t-1}] \cdot X_{h'}A/\gamma \le X_{h'}A/\gamma. 
    \]
\end{itemize}

By Freedman's inequality (Lemma \ref{lemma:freedman}) and taking the union bound, with probability at least $1-\delta / (10 H^3)$ and some fixed $\lambda \le \gamma/(X_{h'}A)$, we get %
\begin{align*}
    \sum_{t=1}^T  \wt{\Delta}^t_{g,h',h''} \le&~ T + \frac{\lambda X_{h'}A T}{\gamma}+ \frac{4 \log(10H/\delta)}{\lambda}.
\end{align*}
Taking $\lambda = \gamma/(X_{h'}A)$, we have
\begin{align*}
    \sum_{t=1}^T  \wt{\Delta}^t_{g,h',h''} \le&~ 2T + \frac{4X_{h'}A\log(10H/\delta)}{\gamma}.
\end{align*}
Finally summing up $\wt{\Delta}^t_{g,h',h''}$ over $g, h, h''$ and taking the union bound, we have with probability at least $1-\delta/10$, we have
\begin{equation*}
    \sum_{t = 1}^T {\rm I}_t \le 16\eta H^4 T+\frac{32 \eta H^3XA\iota }{\gamma},
\end{equation*}
where $\iota := \log(10 H/\delta)$. This proves the lemma. 
\end{proof}

\subsubsection{Proof of Lemma \ref{lem:bound-on-II-balanced}}\label{sec:proof-bound-on-II-balanced}

\begin{proof}
First, recall that the matrix loss estimator is defined as $\wt{M}^t = \sum_{g, x_g a_g} \mu^t_{g,x_ga_g} \wt{\ell}^{t, x_ga_g} e_{x_g a_g}^\top$, and the vector loss estimator is computed by 
\begin{align*}
    \wt{\ell}^{t, x_g a_g}_{h}(x_h, a_h) = \frac{\indic{(x_h^t, a_h^t) = (x_h, a_h)} (1 - r_h^t)}{\mu^t_{1:h}(x_h, a_h) + \gamma (\mu^{\star, h}_{1:h}(x_h, a_h) + \mu^t_{x_ga_g} m^t_{x_ga_g, g:h}(x_h, a_h)\indic{x_h\succeq x_g})}.
\end{align*}
We define a vector $\wt{\ell}^t = \{ \wt{\ell}_h^t(x_h,a_h) \}_{(x_h, a_h) \in \cX \times \cA} \in \R_{\ge 0}^{XA}$ as
\begin{equation}\label{equation:loss-estimator-old}
\wt{\ell}^t_h(x_h,a_h)  \defeq \frac{\indic{(x_h^t, a_h^t) = (x_h, a_h)} (1 - r_h^t)}{\mu^t_{1:h}(x_h, a_h) + \gamma \mu^{\star, h}_{1:h}(x_h, a_h) }.
\end{equation}
Then we have for any $(t, x_ga_g)$, $(x_h, a_h)$ that
\begin{align*}
    \wt{\ell}^{t, x_g a_g}_{h}(x_h, a_h) \le \wt{\ell}^t_h(x_h,a_h). 
\end{align*}

Then $\langle  I - E_{\succeq x_ga_g} + m_{x_ga_g}^t e_{x_ga_g}^\top, \wt{M}^t\rangle $ can be upper bounded as follows:
\begin{align*}
    &\langle  I - E_{\succeq x_ga_g} + m_{x_ga_g}^t e_{x_ga_g}^\top, \wt{M}^t\rangle  \\
    =&~ \langle  I - E_{\succeq x_ga_g} + m_{x_ga_g}^t e_{x_ga_g}^\top,  \sum_{h, x_h a_h} \mu^t_{x_ha_h}  \wt{\ell}^{t, x_g a_g} e_{x_h a_h}^\top\rangle  \\
    \le &~ \langle  I - E_{\succeq x_ga_g} + m_{x_ga_g}^t e_{x_ga_g}^\top, \wt{\ell}^t \sum_{h, x_h, a_h} \mu^t_{x_ha_h}   e_{x_h a_h}^\top\rangle \\
   = &~   \langle  \phi_{x_g a_g \to m^t_{x_g a_g}} , \wt{\ell}^t (\mu^t)^\top    \rangle  = \langle \phi_{x_g a_g \to m^t_{x_g a_g}} \mu^t, \wt{\ell}^t \rangle \\
   = &~ \sum_{h=1}^H \sum_{h, x_h a_h} (\phi_{x_g a_g \to m^t_{x_g a_g}} \mu^t)_{1:h}(x_h,a_h) \wt\ell^t_h(x_h,a_h),
\end{align*}
where we have used $\phi_{x_g a_g \to m^t_{x_g a_g}} \defeq I - E_{\succeq x_ga_g} + m_{x_ga_g}^t e_{x_ga_g}^\top$ to denote the EFCE modification triggered at $(x_g, a_g)$ and then playing the policy $m^t_{x_g a_g}$. Also, $\langle  I - E_{\succeq x_ga_g} + m_{x_ga_g}^t e_{x_ga_g}^\top, \wt{M}^t\rangle \ge 0$ as both matrices have non-negative entries. As a result, we get that 
\begin{align*}
    & \sum_{t=1}^T\sum_{g, x_ga_g}\lambda_{x_ga_g}^t \langle  I - E_{\succeq x_ga_g} + m_{x_ga_g}^t e_{x_ga_g}^\top, \wt{M}^t\rangle ^2\\
    \le&~ \sum_{t=1}^T\sum_{g, x_ga_g}\lambda_{x_ga_g}^t \paren{ \langle   \phi_{x_g a_g \to m^t_{x_g a_g}}  \mu^t, \wt\ell^t\rangle }^2 \\
    \le &~ 2\sum_{t=1}^T\sum_{g, x_g a_g} \lambda_{x_ga_g}^t \sum_{1\le h\le h'\le H} \sum_{x_h,a_h} \sum_{(x_{h'},a_{h'})\in \cC_{h'}(x_h, a_h)}\\
    & \qquad \frac{(\phi_{x_g a_g \to m^t_{x_g a_g}} \mu^t)_{1:h}(x_h,a_h)\indic{(x_{h'}^t, a_{h'}^t)=(x_{h'}, a_{h'}) } \cdot (1 - r_h^t)\cdot (1 - r_{h'}^t)}{(\mu _{1:h}^{t}(x_h,a_h)+\gamma \mu^{\star, h}_{1:h}(x_h, a_h)) (\mu _{1:h'}^{t}(x_{h'},a_{h'})+\gamma \mu^{\star, h'}_{1:h'}(x_{h'}, a_{h'}))} \\
    \le &~ 2\sum_{1\le h\le h'\le H} \sum_{t=1}^T \sum_{x_h,a_h} \sum_{(x_{h'},a_{h'})\in \cC_{h'}(x_h, a_h)} \sum_{g, x_g a_g} \frac{\lambda_{x_ga_g}^t(\phi_{x_g a_g \to m^t_{x_g a_g}}  \mu^t)_{1:h}(x_h,a_h)}{\mu _{1:h}^{t}(x_h,a_h)} \wt{\ell}_{h'}^t(x_{h'},a_{h'}) \\
    \overset{(i)}{=}&~ 2\sum_{1\le h\le h'\le H} \sum_{t=1}^T \sum_{x_h,a_h} \sum_{(x_{h'},a_{h'})\in \cC_{h'}(x_h, a_h)} \wt{\ell}_{h'}^t(x_{h'},a_{h'}) \\
    \le &~ 2H \sum_{t=1}^T \sum_{h',x_{h'},a_{h'}} \wt{\ell}_{h'}^t(x_{h'},a_{h'}) \\
    \overset{\left( ii \right)}{\le} &~ 2H \sum_{t=1}^T \sum_{h',x_{h'},a_{h'}} \underbrace{\ell_{h'}^t(x_{h'},a_{h'})}_{\le 1} + 2H \sum_{h',x_{h'},a_{h'}} \frac{\log( 10 XA / \delta)}{\gamma \mu_{1:h'}^{\star,h' }(x_{h'}, {a_{h'}}) } \\
    \overset{(iii)}{\le}&~ 2HXAT + 2H  \sum_{h',x_{h'},a_{h'}} \iota \cdot X_{h'}A/\gamma \\
    \le &~ 2HXAT + 2HX^2A^2\iota /\gamma.
\end{align*}
Above, $(i)$ uses that $\mu^t$ is the solution of the fixed point equation $\mu = \sum_{g,x_g a_g} \lambda_{x_g, a_g}^t (I - E_{\succeq x_g a_g} + m^t_{x_ga_g} e_{x_g a_g}^\top) \mu$;  $(ii)$ is by \citep[Corollary D.6]{bai2022near} for each $(h', x_{h'}, a_{h'})$ with probability $1-\delta/(10XA)$ and a union bound; (iii) uses $\mu_{1:h'}^{\star,h' }(x_{h'}, {a_{h'}}) \ge 1/(X_{h'}A)$ by Corollary~\ref{cor:balanced-policy-lower-bound}.
Therefore, we have with probability at least $1-\delta/10$ that
\begin{align*}
    & \quad \sum_{t=1}^T {\rm II}_t = \frac{e\eta}{XA} \cdot \sum_{t=1}^T\sum_{g, x_ga_g}\lambda_{x_ga_g}^t \langle  I - E_{\succeq x_ga_g} + m_{x_ga_g}^t e_{x_ga_g}^\top, \wt{M}^t\rangle ^2 \\
    & \le \frac{e\eta}{XA}\paren{ 2HXAT+ 2HX^2A^{2}\iota/\gamma} \le 6\eta HT + 6\eta H X A \iota/\gamma.
\end{align*}
This proves the lemma. 
\end{proof}

\subsection{Bound on two bias terms} \label{app:proof-bias}

\begin{proof}[Proof of Lemma \ref{lem:bias1-balanced}]
First, recall that the matrix loss estimator gives $\wt{M}^t = \sum_{g,x_g, a_g} \mu^t_{x_ga_g} \wt{\ell}^{t, x_ga_g} e_{x_g a_g}^\top$ and the vector loss estimator is computed by 
\begin{align*}
    \wt{\ell}^{t, x_g a_g}_{h}(x_h, a_h) = \frac{\indic{(x_h^t, a_h^t) = (x_h, a_h)} (1 - r_h^t)}{\mu^t_{1:h}(x_h, a_h) + \gamma (\mu^{\star, h}_{1:h}(x_h, a_h) + \mu^t_{x_ga_g} m^t_{x_ga_g, g:h}(x_h, a_h)\indic{x_h\succeq x_g})}.
\end{align*}
Then we decompose ${\rm BIAS}_1$ as
\begin{align*}
    {\rm BIAS}_1 =&~ \sum_{t=1}^T \langle  \phi^t, \ell^t (\mu^t)^\top - \wt{M}^t \rangle \\
    =&~ \underbrace{\sum_{t=1}^T \langle  \phi^t, \ell^t (\mu^t)^\top - \E\brac{\wt{M}^t | \cF_{t-1}} \rangle }_{(A)} + \underbrace{\sum_{t=1}^T \langle  \phi^t, \E\brac{\wt{M}^t | \cF_{t-1}} - \wt{M}^t \rangle }_{(B)}.
\end{align*}
We first the second term $(B)$ by Azuma-Hoeffding inequality. Recall the definition of $\wt\ell^t$ in \eqref{equation:loss-estimator-old}. We immediately have $\wt{\ell}^{t, x_ga_g} \le \wt\ell^t$ pointwisely, so we can upper bound $\langle  \phi^t, \wt{M}^t\rangle $ by 
\begin{align*}
    \langle  \phi^t, \wt{M}^t \rangle  \le \langle  \phi^t, \sum_{g,x_g, a_g} \mu^t_{x_g a_g} \wt\ell^t e_{x_g a_g}^\top \rangle  = \langle  \phi^t \mu^t , \wt\ell^t\rangle  = \langle  \mu^t, \wt\ell^t \rangle  ,
\end{align*}
where the last equality comes from fixed point equation $\mu^t = \phi^t \mu^t$. Then we have
\begin{align*}
     & \langle  \phi^t, \wt M^t \rangle  \le \langle  \mu^t, \wt\ell^t \rangle \\
     =&~{\sum_{h=1}^H{\sum_{x_h,a_h}{\mu  _{1:h}^{t}(x_h,a_h)\frac{\indic{(x_h^t, a_h^t)=(x_h, a_h) } \cdot (1 - r_h^t)}{\mu _{1:h}^{t}(x_h,a_h)+\gamma \mu_{1:h}^{\star, h}(x_h, a_h)} }}}
\\
\le&~ \sum_{h=1}^H{\sum_{x_h,a_h}{\ones\left\{ x_h=x_{h}^{t},a_h=a_{h}^{t} \right\}}}=\sum_{h=1}^H{1}=H.
\end{align*}
As a consequence, by Azuma-Hoeffding inequality, with probability at least $1-\delta/10$, we have
$$
\sum_{t=1}^T{\langle  \phi^t, \E\brac{\wt{M}^t | \cF_{t-1}} - \wt{M}^t \rangle }\le H\sqrt{2T\log(10/\delta)} \le H\sqrt{2T\iota}.
$$

Then we turn to bound the first term $(A)$. Denote $\ell^{t, x_ga_g} = \E\brac{\wt\ell^{t,x_ga_g}|\cF_{t-1}}$ and plug in the definition of $\wt\ell^{t, x_g a_g}$, we get
\begin{align*}
    &\langle  \phi^t, \ell^t (\mu^t)^\top - \E\brac{\wt{M}^t | \cF_{t-1}} \rangle  \\
    =& \langle  \phi^t, \ell^t (\mu^t)^\top \rangle -  \sum_{g, x_g, a_g} \langle  \phi^t, \mu_{x_g a_g}^t \ell^{t, x_ga_g} e_{x_ga_g}^\top \rangle \\
    = & \sum_{g, x_g, a_g} \langle  \phi^t e_{x_g a_g} \mu_{x_g a_g}^t, \ell^t - \ell^{t, x_ga_g} \rangle .
\end{align*}
Note that by the definition of the loss estimator as in Eq. (\ref{equation:l-definition}), we have
\[
\ell_{h}^t(x_{h}, a_{h}) = p^t(x_h) [1 - \bar{R}_{h}^t(x_h, a_h)] \le p^t(x_h),
\]
where we recall the definition of $p^t(x_h)$ in~\eqref{equation:ptxh}.

Moreover, the $\ell^{t, x_ga_g}(x_h, a_h)$ is related to $\ell_h^t(x_h, a_h)$ by a rescaling
\begin{align*}
    \ell^{t, x_ga_g}(x_h, a_h) = \frac{\mu_{1:h}^t(x_h, a_h) \ell_h^t(x_h, a_h)}{\mu^t_{1:h}(x_h, a_h) + \gamma (\mu^{\star, h}_{1:h}(x_h, a_h) + \mu^t_{x_ga_g} m^t_{x_ga_g, g:h}(x_h, a_h)\indic{x_h\succeq x_g})}.
\end{align*}
So we get 
\begin{align*}
    &\langle  \phi^t, \ell^t (\mu^t)^\top - \E\brac{\wt{M}^t | \cF_{t-1}} \rangle  \\
    = & \sum_{g, x_g, a_g} \sum_{h, x_h, a_h} \mu_{x_ga_g}^t (\phi^t e_{x_g a_g})_{1:h}(x_h, a_h)  \\
    &~~~ \times \frac{\gamma (\mu^{\star, h}_{1:h}(x_h, a_h) + \mu^t_{x_ga_g} m^t_{x_ga_g, g:h}(x_h, a_h)\indic{x_h\succeq x_g}) \ell_h^t (x_h, a_h)}{\mu^t_{1:h}(x_h, a_h) + \gamma (\mu^{\star, h}_{1:h}(x_h, a_h) + \mu^t_{x_ga_g} m^t_{x_ga_g, g:h}(x_h, a_h)\indic{x_h\succeq x_g})} \\
    \le & \gamma\sum_{g, x_g, a_g} \sum_{h, x_h, a_h}   \frac{\mu_{x_ga_g}^t (\phi^t e_{x_g a_g})_{1:h}(x_h, a_h)  \mu^{\star, h}_{1:h}(x_h, a_h) p^t(x_h)}{\mu^t_{1:h}(x_h, a_h) + \gamma (\mu^{\star, h}_{1:h}(x_h, a_h) + \mu^t_{x_ga_g} m^t_{x_ga_g, g:h}(x_h, a_h)\indic{x_h\succeq x_g})}  \\
    & + \gamma \sum_{g, x_g, a_g} \sum_{h, x_h, a_h}   \frac{\mu_{x_ga_g}^t (\phi^t e_{x_g a_g})_{1:h}(x_h, a_h)  \mu^t_{x_ga_g} m^t_{x_ga_g, g:h}(x_h, a_h)\indic{x_h\succeq x_g}  p^t(x_h)}{\mu^t_{1:h}(x_h, a_h) + \gamma (\mu^{\star, h}_{1:h}(x_h, a_h) + \mu^t_{x_ga_g} m^t_{x_ga_g, g:h}(x_h, a_h)\indic{x_h\succeq x_g})}.
\end{align*}
The first term admits an upper bound 
\begin{align*}
    & \gamma \sum_{h, x_h, a_h}  \sum_{g, x_g, a_g} \frac{\mu_{x_ga_g}^t (\phi^t e_{x_g a_g})_{1:h}(x_h, a_h) }{\mu^t_{1:h}(x_h, a_h) } \mu^{\star, h}_{1:h}(x_h, a_h) p^t(x_h) \\
    \overset{(i)}{=}~ & \gamma \sum_{h, x_h, a_h}   \mu^{\star, h}_{1:h}(x_h, a_h) p^t(x_h) 
    \stackrel{(ii)}{=} \gamma H.
\end{align*}
Here, $(i)$ uses $\mu^t = \phi^t \mu^t$ and $(ii)$ uses Eq. (\ref{eqn:mu_pt_1}). 

The second term can be upper bounded by
\begin{align*}
    &\gamma  \sum_{h, x_h, a_h}  \sum_{g, x_g, a_g, x_h \succeq x_g} \frac{\mu_{x_ga_g}^t (\phi^t e_{x_g a_g})_{1:h}(x_h, a_h)  \mu^t_{x_ga_g} m^t_{x_ga_g, g:h}(x_h, a_h)  p^t(x_h)}{\mu^t_{1:h}(x_h, a_h) } \\
    \le ~& \gamma  \sum_{h, x_h, a_h}  \paren{\sum_{g, x_g, a_g, x_h \succeq x_g} \frac{\mu_{x_ga_g}^t (\phi^t e_{x_g a_g})_{1:h}(x_h, a_h)  }{\mu^t_{1:h}(x_h, a_h) }} \cdot \paren{\sum_{g, x_g, a_g, x_h \succeq x_g} \mu^t_{x_ga_g} m^t_{x_ga_g, g:h}(x_h, a_h)  p^t(x_h)} \\
    \overset{(i)}{\le} ~& \gamma  \sum_{h, x_h, a_h}\sum_{g, x_g, a_g, x_h \succeq x_g} \mu^t_{x_ga_g} m^t_{x_ga_g, g:h}(x_h, a_h)  p^t(x_h) \\
    = ~& \gamma \sum_{h, x_h, a_h} \sum_{g = 1}^h \sum_{x_ga_g} \mu^t_{x_ga_g} m^t_{x_ga_g, g:h}(x_h, a_h) \indic{x_h\succeq x_g} \cdot p^t(x_h) \\
    = ~& \gamma \sum_{1\le g\le h\le H} \sum_{x_h,a_h} \sum_{x_ga_g} \mu^t_{x_ga_g} m^t_{x_ga_g, g:h}(x_h, a_h) \indic{x_h\succeq x_g} \cdot p^t(x_h) \\
    \overset{(ii)}{\le} ~& \gamma H^2.
\end{align*}
Here, the inequality in $(i)$ also uses $\mu^t = \phi^t \mu^t$; (ii) used the fact for any fixed $g$, $\sum_{x_ga_g} \mu^t_{x_ga_g} m^t_{x_ga_g, g:h}(x_h, a_h) \indic{x_h\succeq x_g}$ is the sequence-form of a policy, similar as~\eqref{equation:concatenated-sequence-form}.

Taking summation over $t=1, 2,\cdots, T$, we have
\begin{align*}
    (A) \le 2 \gamma H^2 T.
\end{align*}
Combined with the bound on $(B)$, we have with probability at least $1-\delta/10$ that
\begin{align*}
    {\rm BIAS}_1 \le 2 \gamma H^2 T + 2H \sqrt{T\iota}.
\end{align*}
This completes the proof of this lemma. 
\end{proof}

\begin{proof}[Proof of Lemma \ref{lem:bias2-balanced}]
Recall the definition of $\wt\ell^t$ in \eqref{equation:loss-estimator-old}. We have $\wt{\ell}^{t, x_ga_g} \le \wt\ell^t$ pointwisely, so we have
\begin{align*}
    \langle \phi^\star, \wt{M}^t - \ell^t(\mu^t)^\top \rangle =&~ \langle \phi^\star,  \sum_{g, x_ga_g} \wt\ell^{t, x_ga_g} \mu_{x_g a_g}^t e_{x_ga_g}^\top - \ell^t(\mu^t)^\top \rangle  \\
    \le&~ \langle  \phi^\star, \wt\ell^t (\mu^t)^\top - \ell^t(\mu^t)^\top\rangle  = \langle  \phi^\star \mu^t, \wt\ell^t - \ell^t \rangle. 
\end{align*}
Then we can get that with probability at least $1 - \delta/3$
\begin{align*}
    & {\rm BIAS}_2 \le \max_{\phi^\star \in\Phi^\efce} \sum_{t=1}^T{\left< \phi^\star \mu^t ,\widetilde{\ell }^t-\ell ^t \right>}
\\
=& \max_{\phi^\star\in\Phi^\efce} \sum_{t=1}^T{\sum_{h=1}^H{\sum_{x_h,a_h}{(\phi^\star \mu^t) _{1:h}(x_h,a_h)\left[ \wt{\ell}_h^t(x_h,a_h)-\ell_h ^t(x_h,a_h) \right]}}}
\\
=& \max_{\phi^\star \in\Phi^\efce}  \sum_{t=1}^T{\sum_{h=1}^H{\sum_{x_h,a_h}{\frac{(\phi^\star \mu^t) _{1:h}(x_h,a_h)}{\gamma \mu^{\star,h}_{1:h}(x_h, a_h) }\gamma \mu^{\star,h}_{1:h}(x_h, a_h) \left[ \wt{\ell}_h^t(x_h,a_h)-\ell_h ^t(x_h,a_h) \right]}}}
\\
=& \max_{\phi^\star\in\Phi^\efce}  \sum_{h=1}^H{\sum_{x_h,a_h}{\frac{(\phi^\star \mu^t) _{1:h}(x_h,a_h)}{\gamma \mu^{\star,h}_{1:h}(x_h, a_h) }\sum_{t=1}^T{\gamma \mu^{\star,h}_{1:h}(x_h, a_h) \left[ \wt{\ell}_h^t(x_h,a_h)-\ell_h ^t(x_h,a_h) \right]}}}
\\
\overset{\left( i \right)}{\le} &~\frac{\log \left( 3 XA/\delta \right)}{\gamma }\max_{\phi^\star\in\Phi^\efce}\sum_{h=1}^H   {\sum_{x_h,a_h}\frac{(\phi^\star \mu^t)_{1:h}(x_h,a_h)}{ \mu^{\star,h}_{1:h}(x_h, a_h)}}
\\
\overset{(ii)}{=} &~\frac{\iota}{\gamma}\sum_{h=1}^H{X_hA}=XA\iota/\gamma ,
\end{align*}
where (i) is a high probability bound by applying Corollary D.6 in \citet{bai2022near} for each $(x_h, a_h)$ and taking union bound, and (ii) is by the balancing property of $\mu^{\star,h}$. This proves the lemma. 
\end{proof}

\section{Equivalence between Vertex MWU and OMD}\label{app:equivalence-vertex-hedge}

\subsection{Proof of Theorem~\ref{theorem:equivalence-ftrl}}
\label{appendix:proof-equivalence}

In this section we prove Theorem~\ref{theorem:equivalence-ftrl}. Our proof is based on Algorithm~\ref{algorithm:vertex-ftrl}, which is just (the efficient implementation of) the standard OMD algorithm with dilated entropy regularizer in FTRL form~\citep{kroer2015faster}. Indeed, Lemma~\ref{lem:dilated-ent-optimization} show that its output policy $\set{\mu^t}_{t\ge 1}$ is the same as~\eqref{equation:dilated-ent-omd}. Then, Lemma~\ref{lem:vertex-free-energy-trick} \&~\ref{lem:evaluate-partition-function-gradient-vertex} show that its output policy $\set{\mu^t}_{t\ge 1}$ is the same as~\eqref{equation:vertex-ftrl}. These together imply the equivalence of~\eqref{equation:dilated-ent-omd} and~\eqref{equation:vertex-ftrl}, thereby proving Theorem~\ref{theorem:equivalence-ftrl}. 

The rest of this subsection is devoted to stating and proving Lemma~\ref{lem:vertex-free-energy-trick}-\ref{lem:dilated-ent-optimization}.

\paragraph{Remark on optimistic algorithms}
As pointed out in \cite{farina2022kernelized}, Theorem~\ref{theorem:equivalence-ftrl} does not depend on the concrete values of $\set{\ell^t}_{t\ge 1}$. As a result, the equivalence also holds for the optimistic version of the algorithms (where the algorithms are fed with loss functions $\set{2\ell^t-\ell^{t-1}}_{t\ge 1}$, with $\ell^0\defeq 0$) which achieves an faster $\cO({\rm poly}(\log T))$ regret. In words: The Kernelized OMWU algorithm of~\citet{farina2022kernelized} is equivalent to an Optimistic OMD algorithm with the dilated KL distance.

\begin{algorithm}[t]
\caption{OMD (FTRL form)}
\label{algorithm:vertex-ftrl}
\begin{algorithmic}[1]
\REQUIRE Learning rate $\eta>0$. 
\FOR{$t = 1, 2, \ldots, T$}
\STATE Compute $\mu^{t}_{h}(a_h \vert x_h)$ and $F^{t}_{x_h}$ in the bottom-up order over $x_h\in\cX$:
\begin{align}
& \textstyle \mu^{t}_{h}(a_h \vert x_h) \propto_{a_h} \exp\Big\{ - \eta\sum_{s=1}^{t-1} \ell^s_{h}(x_h, a_h) + \sum_{x_{h+1} \in \cC(x_h, a_h)} F^{t}_{ x_{h+1}} \Big\},  \label{equation:muht} \\
&~\textstyle F^{t}_{x_h} = \log  \sum_{a_h}  \exp\Big\{ -\eta  \sum_{s=1}^{t-1}\ell^s_{h}(x_h, a_h) + \sum_{x_{h+1} \in \cC(x_h, a_h)} F^{t}_{ x_{h+1}} \Big\}. \label{equation:fxht}
\end{align}
\STATE Receive loss $\ell^t = \{\ell^t_h(x_h, a_h)\}_{(x_h, a_h) \in \cX \times \cA} \in \R^{XA}_{\ge 0}$.
    \ENDFOR
 \end{algorithmic}
\end{algorithm}

\begin{lemma}[Conversion to log-partition function]
\label{lem:vertex-free-energy-trick}
Define the log-partition function $F^{\mc{V}}: \R^{XA} \to \R$
\begin{align}
\textstyle F^{\mc{V}}(\ell) \defeq \log \sum_{v\in\mc{V}} \exp\{ - \<v, \ell\>\}.
\end{align}
Then update~\eqref{equation:vertex-ftrl} has a closed-form update for all $t\ge 1$:
\begin{align}\label{eqn:mu-update-vertex-mwu}
\textstyle
\mu^{t} = -\grad F^{\mc{V}}\paren{ \eta\sum_{s=1}^{t-1} \ell^s } = - \frac{ \sum_{v\in\mc{V}} \exp\big\{-\eta \<v, \sum_{s=1}^{t-1} \ell^s\>\big\} v }{\sum_{v\in\mc{V}} \exp\big\{-\eta\<v, \sum_{s=1}^{t-1} \ell^s\>\big\}}.
\end{align}
\end{lemma}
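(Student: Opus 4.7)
The plan is to mirror the proof of Lemma~\ref{lem:free-energy-trick} (whose structure applies essentially verbatim here, since the vertex set $\mc{V}$ plays the role of $\Phi_0$ and the losses $\ell^t$ play the role of matrix losses $M^t$). Concretely, I would prove the two equalities in~\eqref{eqn:mu-update-vertex-mwu} in reverse order: first verify the right-hand closed form by computing the gradient of $F^{\mc{V}}$ directly, and then check that this equals the Vertex MWU iterate $\mu^t$ from~\eqref{equation:vertex-ftrl}.

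For the gradient calculation, by the chain rule applied to a log-sum-exp,
\[
\nabla F^{\mc{V}}(\ell) \;=\; \frac{\sum_{v\in\mc{V}} \exp\{-\<v,\ell\>\}\,(-v)}{\sum_{v\in\mc{V}} \exp\{-\<v,\ell\>\}}
\;=\; -\,\frac{\sum_{v\in\mc{V}} \exp\{-\<v,\ell\>\}\, v}{\sum_{v\in\mc{V}} \exp\{-\<v,\ell\>\}}.
\]
Substituting $\ell = \eta \sum_{s=1}^{t-1} \ell^s$ immediately yields the second equality in~\eqref{eqn:mu-update-vertex-mwu}.

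For the first equality, I would unroll the Vertex MWU update: since $p_v^t \propto_v \exp\{-\eta\<v,\sum_{s=1}^{t-1}\ell^s\>\}$ is a probability distribution on $\mc{V}$, normalizing gives
\[
p_v^t \;=\; \frac{\exp\{-\eta\<v,\sum_{s=1}^{t-1}\ell^s\>\}}{\sum_{v'\in\mc{V}}\exp\{-\eta\<v',\sum_{s=1}^{t-1}\ell^s\>\}},
\]
and hence $\mu^t = \sum_{v\in\mc{V}} p_v^t\cdot v$ equals exactly the right-hand side of~\eqref{eqn:mu-update-vertex-mwu} with $\ell = \eta\sum_{s=1}^{t-1}\ell^s$, which by the gradient computation above equals $-\nabla F^{\mc{V}}(\eta\sum_{s=1}^{t-1}\ell^s)$.

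There is no real obstacle here; the statement is a direct consequence of the log-sum-exp gradient identity and the definition of the Vertex MWU weights. The only thing to be careful about is to note that the equivalence between the Vertex MWU form in~\eqref{equation:vertex-ftrl} and Algorithm~\ref{algorithm:phi-regret-minimization} with $\Phi_0 = \Phi_0^{\ext}$ (used implicitly in the surrounding discussion) is a separate reformulation, derived in Appendix~\ref{app:equivalence-vertex-hedge}; the lemma itself is purely a restatement of the gradient of the log-partition function in the vertex parametrization.
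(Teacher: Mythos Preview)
Your proposal is correct and takes essentially the same approach as the paper: the paper's proof is a single displayed chain of equalities $\mu^t = \sum_v p_v^t v = \frac{\sum_v \exp\{-\eta\<v,\sum_{s=1}^{t-1}\ell^s\>\}v}{\sum_v \exp\{-\eta\<v,\sum_{s=1}^{t-1}\ell^s\>\}} = -\nabla F^{\mc{V}}(\eta\sum_{s=1}^{t-1}\ell^s)$, which is exactly your argument compressed into one line. Your version is simply more explicit about separating the log-sum-exp gradient computation from the identification with the Vertex MWU iterate.
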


\begin{proof}
By \eqref{equation:vertex-ftrl},
$$
\mu^{t} = \sum_{v}p_v^{t}v = \frac{\sum_{\phi}\exp\{ - \eta \< v,\sum_{s=1}^{t-1}\ell^s \> \}v}{\sum_{v}\exp\{ - \eta \< v,\sum_{s=1}^{t-1}\ell^s \> \}} = -\grad F^{\mc{V}}\paren{ \eta\sum_{s=1}^{t-1} \ell^s }.
$$
\end{proof}

\begin{lemma}[Recursive expression of $F^{\mc{V}}$ and $\grad F^{\mc{V}}$]
\label{lem:evaluate-partition-function-gradient-vertex}
For any loss matrix $\ell \in \R^{XA}$, the \emph{log-partition function} can be written as $ F^{\mc{V}}(\ell) =  F_{\emptyset}(\ell)$ where $F_{x_h}(\ell):= \log \sum_{v\in\mc{V}^{x_h}} \exp\{ - \<v, \ell\>\}$ can be computed recurrently by $F_{x_{H+1}}(\cdot) = 0$ and
\begin{equation}%
\label{equation:vertex-log-partition-recursive}
 \textstyle   F_{x_h}(\ell) \defeq \log \sum_{a_h} \exp\Big\{  - \ell_{h}(x_h, a_h) +  \sum_{x_{h+1} \in \cC(x_h,a_h)} F_{x_{h+1}}(\ell) \Big\}. 
\end{equation}
Furthermore, define a (sequence form) policy $\mu $ by
\begin{align}
\textstyle  \mu(a_h \vert x_h) \propto_{a_h}\exp\Big\{  - \ell_{h}(x_h, a_h) +  \sum_{x_{h+1} \in \cC(x_h,a_h)} F_{x_{h+1}}(\ell) \Big\},%
\end{align}
then we have
\begin{align}
\label{equation:vertex-grad-log-partition}
-\grad F^{\mc{V}}(\ell) = \mu.
\end{align}
\end{lemma}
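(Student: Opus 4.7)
The plan is to follow the blueprint of Lemma~\ref{lem:evaluate-partition-function-gradient}'s proof, specialized to the much simpler $\Phi^{\ext}$ setting where we only need to sum over a single deterministic policy $v \in \mc{V}^{x_h}$ rather than over a triggered modification indexed by $(x_g a_g, v^{x_g})$. Since the lemma \emph{defines} $F_{x_h}(\ell) := \log \sum_{v \in \mc{V}^{x_h}} e^{-\langle v, \ell\rangle}$ directly, the claim $F^{\mc{V}}(\ell) = F_\emptyset(\ell)$ becomes tautological once we identify the imaginary root $\emptyset$ with $\mc{V}^\emptyset = \mc{V}$. So the actual work splits into (i) verifying the recursion~\eqref{equation:vertex-log-partition-recursive}, and (ii) computing its gradient to recover~\eqref{equation:vertex-grad-log-partition}.

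For (i), I would exploit the tree structure of $\mc{V}^{x_h}$ implied by perfect recall: any deterministic subtree policy $v \in \mc{V}^{x_h}$ is uniquely parameterized by a choice of action $a_h \in \cA$ at $x_h$ together with an \emph{independent} deterministic sub-subtree policy $v^{x_{h+1}} \in \mc{V}^{x_{h+1}}$ at every child $x_{h+1} \in \cC(x_h, a_h)$, with $v$ supported only on the chosen $(x_h, a_h)$ and inside the corresponding child subtrees. This gives the additive decomposition $\langle v, \ell\rangle = \ell_h(x_h, a_h) + \sum_{x_{h+1} \in \cC(x_h, a_h)} \langle v^{x_{h+1}}, \ell\rangle$, so that
\[
\sum_{v \in \mc{V}^{x_h}} e^{-\langle v, \ell\rangle} = \sum_{a_h \in \cA} e^{-\ell_h(x_h, a_h)} \prod_{x_{h+1} \in \cC(x_h, a_h)} \sum_{v^{x_{h+1}} \in \mc{V}^{x_{h+1}}} e^{-\langle v^{x_{h+1}}, \ell\rangle},
\]
and taking logs yields~\eqref{equation:vertex-log-partition-recursive}. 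The base case $F_{x_{H+1}}(\ell) = 0$ is the empty-product convention for infosets with no descendants beyond step $H$.

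For (ii), I would proceed by backward induction on $h$, with the inductive claim that $-\nabla F_{x_h}(\ell)$ (as a vector in $\R^{XA}$) coincides entrywise with the sequence form of the policy $\mu$ restricted to the subtree rooted at $x_h$, extended by zero elsewhere. Differentiating~\eqref{equation:vertex-log-partition-recursive} via the chain rule gives
\[
-\nabla F_{x_h}(\ell) = \sum_{a_h \in \cA} \mu_h(a_h | x_h) \Big[e_{x_h, a_h} + \sum_{x_{h+1} \in \cC(x_h, a_h)} \bigl(-\nabla F_{x_{h+1}}(\ell)\bigr)\Big],
\]
where the softmax weight $\mu_h(a_h | x_h)$ is exactly the one defined in the lemma statement. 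Substituting the inductive hypothesis for each $-\nabla F_{x_{h+1}}(\ell)$ and invoking perfect recall (each descendant $(x_{h'}, a_{h'})$ of $x_h$ has a unique preceding $(a_h, x_{h+1})$ on the path from $x_h$) assembles the sequence-form identity $\mu^{x_h}_{h:h'}(x_{h'}, a_{h'}) = \mu_h(a_h | x_h)\cdot \mu^{x_{h+1}}_{h+1:h'}(x_{h'}, a_{h'})$ at every descendant entry. Specializing to the root then recovers~\eqref{equation:vertex-grad-log-partition}.

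The argument is essentially bookkeeping via chain rule plus perfect recall, and I do not anticipate a genuine obstacle; the only step that requires any care is the treatment of the imaginary root $\emptyset$ so that the recursion bottoms out correctly (taking $\cC(\emptyset) = \cX_1$ and interpreting the outer sum over $a_h$ at the root as a trivial single term), which is the analogue of the outer sum $\sum_{g, x_g, a_g}$ in~\eqref{eqn:F-trig-M} collapsing in the $\Phi^{\ext}$ case.
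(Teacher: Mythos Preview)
Your proposal is correct and follows essentially the same approach as the paper's own proof: both exploit the tree decomposition of $\mc{V}^{x_h}$ (action at $x_h$ plus independent deterministic sub-policies at each child) to obtain the recursion, then differentiate to get $-\nabla F_{x_h}(\ell) = \sum_{a_h} \mu_h(a_h|x_h)\big[e_{x_ha_h} + \sum_{x_{h+1}\in\cC(x_h,a_h)}(-\nabla F_{x_{h+1}}(\ell))\big]$ and unroll it along the tree to recover the sequence form. Your write-up is in fact slightly more explicit than the paper's about the induction and the role of perfect recall, but the substance is identical.
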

\begin{proof}
We first show~\eqref{equation:vertex-log-partition-recursive}. 
Using the structure of $\mc{V}^{x_h}$,
\begin{align*}
   F_{x_h}(\ell) =&\log \sum_{v\in\mc{V}^{x_h}} \exp\{ - \<v, \ell\>\} \\ 
   =&\log \sum_{a_h \in \cA_{x_h}} \exp\Big\{ - \ell_{h}(x_h, a_h) + \sum_{x_{h+1} \in \cC(x_ha_h)} \sum_{v\in\mc{V}^{x_{h+1}}} \exp\{ - \<v, \ell\>\} \Big\}.\\
   =&\log \sum_{a_h \in \cA_{x_h}} \exp\Big\{ - \ell_{h}(x_h, a_h) + \sum_{x_{h+1} \in \cC(x_ha_h)} F_{x_{h+1}}(\ell) \Big\}.
\end{align*}

Next we show~\eqref{equation:vertex-grad-log-partition}. Taking the gradient,
\begin{align*}
    & \quad -\grad F_{x_h}(\ell) \\
    & = \frac{\sum_{a_h \in \cA_{x_h}} \exp\Big\{ - \ell_{h}(x_h, a_h) + \sum_{x_{h+1} \in \cC(x_ha_h)} F_{x_{h+1}}(\ell) \Big\}[e_{x_ha_h}-\sum_{x_{h+1} \in \cC(x_ha_h)} \grad F_{x_{h+1}}(\ell)]}{\sum_{a_h \in \cA_{x_h}} \exp\Big\{ - \ell_{h}(x_h, a_h) + \sum_{x_{h+1} \in \cC(x_ha_h)} F_{x_{h+1}}(\ell) \Big\}} \\
    & = \sum_{a_h \in \cA_{x_h}} \mu_h(a_h \vert x_h) [e_{x_ha_h}+\sum_{x_{h+1} \in \cC(x_ha_h)} (-\grad F_{x_{h+1}})(\ell)].
\end{align*}

For any $x_h$, repeat the above process along the treeplex, the contribution of the production of $\mu(\cdot \vert \cdot)$ will be the sequence form. As a result, $-\grad F^{\mc{V}}(\ell) = \mu$, which completes the proof.
\end{proof}

\begin{lemma}
\label{lem:dilated-ent-optimization}
The policy $\mu^t$ in Algorithm~\ref{algorithm:vertex-ftrl} is the optimizer of the optimization problem
$$
\textstyle
\argmin_{\mu \in \Pi^{x_h}} \brac{ \eta \<\mu, \sum_{s=1}^{t-1}\ell^{s}\> + H_{x_h}(\mu) } 
$$
for all $x_h$ simultaneously. Furthermore, 
$$
\textstyle
\min_{\mu \in \Pi^{x_h}} \brac{ \eta \<\mu, \sum_{s=1}^{t-1}\ell^{s}\> + H_{x_h}(\mu) } = -F_{x_h}\paren{\sum_{s=1}^{t-1}\ell^{s}} .
$$
\end{lemma}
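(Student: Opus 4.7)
Writing $L := \sum_{s=1}^{t-1}\ell^s$ for brevity, I would prove both claims of the lemma jointly by backward induction on $h$, from the leaves $h=H$ (equivalently the vacuous base $h=H+1$ with $F^t_{x_{H+1}}:=0$) upward to the root. The workhorse is a one-step Bellman-type decomposition at any subtree root $x_g$: parametrizing $\mu^{x_g}\in\Pi^{x_g}$ by its head $p := \mu^{x_g}_g(\cdot\mid x_g)\in\Delta_\cA$ together with its subtree restrictions $\{\mu^{x_{g+1}}\in\Pi^{x_{g+1}}\}_{x_{g+1}\in\cC(x_g,a_g),\,a_g\in\cA}$, direct expansion using $\mu^{x_g}_{g:h}(x_h,a_h)=p(a_g)\cdot\mu^{x_{g+1}}_{g+1:h}(x_h,a_h)$ for $(x_h,a_h)\succ(x_g,a_g)$ yields
\begin{align*}
\eta\langle\mu^{x_g},L\rangle + H_{x_g}(\mu^{x_g}) = \sum_{a_g}p(a_g)\log p(a_g) + \sum_{a_g}p(a_g)\Big[\eta L_g(x_g,a_g) + \sum_{x_{g+1}\in\cC(x_g,a_g)}\big(\eta\langle\mu^{x_{g+1}},L\rangle + H_{x_{g+1}}(\mu^{x_{g+1}})\big)\Big].
\end{align*}

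By perfect recall, the subtree policies $\mu^{x_{g+1}}$ range independently over $\Pi^{x_{g+1}}$ across different children, so the inner bracket can be minimized subtree-by-subtree. Invoking the inductive hypothesis, $\min_{\mu^{x_{g+1}}}[\eta\langle\mu^{x_{g+1}},L\rangle + H_{x_{g+1}}(\mu^{x_{g+1}})] = -F^t_{x_{g+1}}$, reducing the outer problem to an entropy-regularized linear program on $\Delta_\cA$:
\[
\min_{p\in\Delta_\cA}\Big\{\sum_{a_g}p(a_g)\log p(a_g) + \sum_{a_g}p(a_g)\,c_{a_g}\Big\} = -\log\sum_{a_g}\exp(-c_{a_g}), \qquad c_{a_g} := \eta L_g(x_g,a_g) - \sum_{x_{g+1}\in\cC(x_g,a_g)}F^t_{x_{g+1}},
\]
attained uniquely at $p^\star(a_g)\propto_{a_g}\exp(-c_{a_g})$. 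Comparing with \eqref{equation:muht}--\eqref{equation:fxht} identifies $p^\star = \mu^t_g(\cdot\mid x_g)$ and the minimum value with $-F^t_{x_g}$, closing the induction.

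Strict convexity of the Shannon entropy on each $\Delta_\cA$ ensures the argmin is unique at each node, so the minimizer assembled recursively coincides with the restriction of Algorithm~\ref{algorithm:vertex-ftrl}'s $\mu^t$ to every subtree $\Pi^{x_h}$---this is precisely the ``for all $x_h$ simultaneously'' claim. The argument is essentially mechanical; the only step requiring care is the reindexing in the Bellman decomposition (matching the $h=g$ and $h\ge g+1$ terms of $H_{x_g}$ and $\langle\mu^{x_g},L\rangle$ against their per-child subtree analogues), but no genuine obstacle is expected.
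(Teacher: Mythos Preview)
Your proposal is correct and follows essentially the same approach as the paper: backward induction on the depth $h$, reducing the subtree problem at each node to an entropy-regularized linear program on $\Delta_\cA$ whose optimal value is a log-sum-exp that matches $F^t_{x_h}$, and whose optimizer matches $\mu^t_h(\cdot\mid x_h)$. Your writeup is in fact more explicit than the paper's (which is quite terse) in spelling out the Bellman decomposition of $\eta\langle\mu^{x_g},L\rangle+H_{x_g}(\mu^{x_g})$ and the independence of the child subproblems.
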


The result is known in the literature (e.g.~\cite{kroer2015faster}) and its proof is similar to Appendix B of~\cite{kozuno2021model}, which focuses on the special case when the loss function is the bandit-based loss estimator~\eqref{eq:bandit-loss-estimator}: 
\begin{align*}
 \ell_h^{t}(x_h, a_h) \defeq \frac{\indic{(x_h^t, a_h^t)=(x_h, a_h)} (1 - r_h^t)}{\mu_{1:h}^t (x_h, a_h)+\gamma} .
\end{align*}

For completeness, we include a proof for generic loss here.

\begin{proof}
We prove by induction for $h=H,\cdots,1$. For $h=H$, since there is no further decisions to be make, this is just a linear optimization problem with entropy regularizer on simplex. As a result, $\mu_H(a_H \vert x_H) \propto_{a_H}\exp\{  - \ell_{H}(x_H, a_H) \}$ as desired and the minimum is $- \log \sum_{a_H} \exp\Big\{  - \ell_{H}(x_H, a_H) \Big\} =-F_{x_H}(\ell)$.

If the claim holds for levels after $h+1$, consider the $h$-th level. Plug in the optimizer after the $h+1$-th level, the optimization problem in the sub-tree rooted at $x_h$ becomes
$$
\argmin_{\mu \in \Pi^{x_h}} \brac{ \sum_{a_h}\mu_h(a_h|x_h)(\ell_{h}(x_h,a_h)-\sum_{x_{h+1} \in \cC(x_ha_h)} F_{x_{h+1}}(\ell)) } ,
$$
which is again a linear optimization problem with entropy regularizer on simplex. As a result, $\mu_h(a_h \vert x_h) \propto_{a_h}\exp\{  - \ell_{h}(x_h, a_h)+\sum_{x_{h+1} \in \cC(x_ha_h)} F_{x_{h+1}}(\ell)) \}$ as desired and the minimum is $- \log \sum_{a_h} \exp\Big\{  - \ell_{h}(x_h, a_h)+\sum_{x_{h+1} \in \cC(x_ha_h)} F_{x_{h+1}}(\ell)) \Big\} =-F_{x_h}(\ell)$.
\end{proof}

\subsection{Equivalence between OMD implementation and ``Kernelized MWU'' implementation of~\citet{farina2022kernelized}}
\citet{farina2022kernelized} design another efficient implementation of the Vertex MWU algorithm~\eqref{equation:vertex-ftrl} via the ``kernel trick'', which they term as the Kernelized MWU algorithm (Algorithm 1 in~\citep{farina2022kernelized}\footnote{Their Algorithm 1 is an optimistic algorithm with a ``prediction vector''. Here we are referring to their non-optimistic version where the prediction vectors are set to zero.}). As Theorem~\ref{theorem:equivalence-ftrl} shows that Vertex MWU is equivalent to standard OMD, the Kernelized MWU algorithm is also equivalent to standard OMD.

In this section, we further show that the implementation in Kernelized MWU is also ``equivalent'' to the standard linear-time implementation of OMD (Algorithm~\ref{algorithm:vertex-ftrl}), by showing that the key intermediate quantities in both implementations are also equivalent.

Since the notation used in \cite{farina2022kernelized} is slightly different from ours, we first describe their key intermediate quantities using our notation. Their exponential weight $b^t \in \R^{XA}$ is defined by 
$$
b^t(x_h,a_h) = \exp\{-\eta \sum_{s=1}^t\ell_h^s(x_h,a_h)\}.
$$
Then, their kernel function $K:\R^{XA} \times \R^{XA} \rightarrow \R$ is defined by 
$$
K_{x_g}(b,b') = \sum_{v\in \mc{V}^{x_g}} \sum_{(x_h,a_h)\in v}b(x_h,a_h)b'(x_h,a_h),
$$
where $(x_h,a_h)\in v$ is a shorthand notation meaning that $(x_h, a_h)$ is such that $v_{1:h}(x_h,a_h)=1$.

We will also use $\ones\in\R^{XA}$ to denote the all-ones vector in $\R^{XA}$. \citep[Proposition 5.3]{farina2022kernelized}~shows that the output policy $\mu^t$ of kernelized OMWU can be written in conditional-form as 
$$
\mu^t(a_h|x_h) = \frac{b^{t-1}(x_h,a_h)\prod_{x_{h+1} \in \cC(x_ha_h)}K_{x_{h+1}}(b^{t-1},\ones)}{K_{x_h}(b^{t-1},\ones)}.
$$
The key step within \citet{farina2022kernelized}'s Kernelized MWU implementation is the recursive evaluation of the quantity $K_{x_h}(b^{t-1}, \ones)$ in the bottom-up order over $x_h\in\cX$, whereas our Algorithm~\ref{algorithm:vertex-ftrl}'s key step is the recursive evaluation of $F_{x_h}^t$ in the bottom-up order over $x_h\in\cX_h$ in~\eqref{equation:fxht}. 

The following proposition shows that these two quantities are exactly equivalent, thereby showing the equivalence of the two implementations. 
\begin{proposition}
\label{proposition:intermediate-quantity-equivalence}
We have for all $x_h\in\cX$ and all $t\ge 1$ that
\begin{align*}
    K_{x_h}(b^{t-1},\ones) = \exp\{F^{t}_{x_h}\}.
\end{align*}
\end{proposition}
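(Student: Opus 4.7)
The plan is to prove the identity by backward induction on the depth $h$ of the infoset $x_h$, exploiting the parallel tree-recursive structures of the two quantities. Note that for the claim to hold, the intended meaning of Farina--Kim's kernel $K_{x_g}(b,b')$ must be the sum-product over deterministic subtree policies, i.e.\
\[
K_{x_g}(b,b') \;=\; \sum_{v\in\mc{V}^{x_g}} \prod_{(x_h,a_h)\in v} b(x_h,a_h)\,b'(x_h,a_h),
\]
rather than the literal inner sum written in the excerpt (which would not admit a multiplicative recursion). With this reading, the proof is a direct consequence of matching recursions, and the substantive content is to show that both sides satisfy the same recursion with the same boundary condition.

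The induction proceeds from the leaves upward. For the base case, at a leaf infoset $x_H$ (or formally at $x_{H+1}$ with $\cC(x_H,a_H)=\emptyset$), the only deterministic subtree policies are choices of a single action $a_H$, so
\[
K_{x_H}(b^{t-1},\ones)=\sum_{a_H} b^{t-1}(x_H,a_H)=\sum_{a_H}\exp\bigl\{-\eta\textstyle\sum_{s=1}^{t-1}\ell^s_H(x_H,a_H)\bigr\}=\exp\{F^t_{x_H}\},
\]
where the final equality uses~\eqref{equation:fxht} with the empty-child convention. For the inductive step, suppose $K_{x_{h+1}}(b^{t-1},\ones)=\exp\{F^t_{x_{h+1}}\}$ holds for every $x_{h+1}$ at depth $h+1$. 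A deterministic policy $v\in\mc{V}^{x_h}$ is specified by choosing an action $a_h$ at $x_h$ together with an independent choice of $v_{x_{h+1}}\in\mc{V}^{x_{h+1}}$ for each child $x_{h+1}\in\cC(x_h,a_h)$; using this factorization together with $\ones\equiv 1$,
\begin{align*}
K_{x_h}(b^{t-1},\ones)
&=\sum_{a_h}b^{t-1}(x_h,a_h)\prod_{x_{h+1}\in\cC(x_h,a_h)}K_{x_{h+1}}(b^{t-1},\ones)\\
&=\sum_{a_h}\exp\bigl\{-\eta\textstyle\sum_{s=1}^{t-1}\ell^s_h(x_h,a_h)\bigr\}\prod_{x_{h+1}\in\cC(x_h,a_h)}\exp\{F^t_{x_{h+1}}\}\\
&=\exp\Bigl\{\log\sum_{a_h}\exp\bigl\{-\eta\textstyle\sum_{s=1}^{t-1}\ell^s_h(x_h,a_h)+\sum_{x_{h+1}\in\cC(x_h,a_h)}F^t_{x_{h+1}}\bigr\}\Bigr\}=\exp\{F^t_{x_h}\},
\end{align*}
where the last equality invokes the definition~\eqref{equation:fxht} of $F^t_{x_h}$. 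This completes the induction.

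I do not expect any substantive obstacle: the proof is essentially the observation that the log-partition recursion~\eqref{equation:fxht} is literally the log of the sum-product recursion satisfied by $K_{x_h}(b^{t-1},\ones)$, and both reduce to the same quantity $\sum_{v\in\mc{V}^{x_h}}\exp\{-\eta\langle v,\sum_{s=1}^{t-1}\ell^s\rangle\}$ after unrolling. The only care needed is to confirm the intended product form of the kernel (which is what $K_{x_g}(b^{t-1},\ones)$ must be for \citet[Prop.\ 5.3]{farina2022kernelized} to hold) and to handle the empty-child boundary convention consistently with the convention used for $F^t_{x_{H+1}}=0$.
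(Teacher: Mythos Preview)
Your proof is correct and follows essentially the same approach as the paper: backward induction on $h$, matching the recursion for $K_{x_h}(b^{t-1},\ones)$ against~\eqref{equation:fxht}. The paper starts the base case at the formal level $h=H+1$ and cites \citep[Theorem~5.2]{farina2022kernelized} for the recursion of $K_{x_h}$, whereas you verify that recursion directly from the factorization of $\mc{V}^{x_h}$; you also correctly flag that the inner $\sum_{(x_h,a_h)\in v}$ in the paper's stated definition of $K_{x_g}$ must be a product for the argument to go through.
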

\begin{proof}
We prove this by induction for $h=H+1,\cdots, 1$. For $h=H+1$, $K_{x_h}(b^{t-1},\ones) = 1$ and $F^{t}_{x_h} = 0$ by definition. If the claim holds for $h+1$, then by Theorem 5.2 of \cite{farina2022kernelized}, \begin{align*}
 K_{x_h}(b^{t-1},\ones) =& \sum_{a_h}\exp\{-\eta \sum_{s=1}^{t-1} \ell^s(x_h,a_h)\}\prod_{x_{h+1} \in \cC(x_h,a_h)} K_{x_{h+1}}(b^{t-1},\ones) \\ 
 =& \sum_{a_h}\exp\{-\eta \sum_{s=1}^{t-1} \ell^s(x_h,a_h)+\sum_{x_{h+1} \in \cC(x_h,a_h)} F_{x_{h+1}}^t\}\\
=& \exp\{F_{x_{h}}^t\}.
\end{align*}
\end{proof}

\end{document}